\newtheorem{theorem}{Theorem}%  meant for continuous numbers
\newtheorem{proposition}{Proposition}
\newtheorem{definition}{Definition}%
\newtheorem{lemma}{Lemma}
\newtheorem{assumption}{Assumption}
\begin{document}

\title{Accelerating Federated Learning by Selecting Beneficial Herd of Local Gradients}

%%=============================================================%%
%% GivenName	-> \fnm{Joergen W.}
%% Particle	-> \spfx{van der} -> surname prefix
%% FamilyName	-> \sur{Ploeg}
%% Suffix	-> \sfx{IV}
%% \author*[1,2]{\fnm{Joergen W.} \spfx{van der} \sur{Ploeg} 
%%  \sfx{IV}}\email{iauthor@gmail.com}
%%=============================================================%%

\author[1]{\fnm{Ping} \sur{Luo}}\email{luoping@nudt.edu.cn}
\author[1]{\fnm{Xiaoge} \sur{Deng}}\email{dengxg@nudt.edu.cn}
\author[1]{\fnm{Ziqing} \sur{Wen}}\email{zqwen@nudt.edu.cn}

\author*[1]{\fnm{Tao} \sur{Sun}}\email{nudtsuntao@163.com}
\equalcont{These authors contributed equally to this work.}

\author*[1]{\fnm{Dongsheng} \sur{Li}}\email{dsli@nudt.edu.cn}
\equalcont{These authors contributed equally to this work.}

\affil*[1]{\orgdiv{National Laboratory for Parallel and Distributed Processing}, \orgname{National University of Defense Technology}, \orgaddress{\street{}, \city{ChangSha}, \postcode{410073}, \state{}, \country{China}}}

%%==================================%%
%% Sample for unstructured abstract %%
%%==================================%%

\abstract{Federated Learning (FL) is a distributed machine learning framework in communication network systems. However, the systems' Non-Independent and Identically Distributed (Non-IID) data negatively affect the convergence efficiency of the global model, since only a subset of these data samples are beneficial for model convergence. In pursuit of this subset, a reliable approach involves determining a measure of validity to rank the samples within the dataset. In this paper, We propose the BHerd strategy which selects a beneficial herd of local gradients to accelerate the convergence of the FL model. Specifically, we map the distribution of the local dataset to the local gradients and use the Herding strategy to obtain a permutation of the set of gradients, where the more advanced gradients in the permutation are closer to the average of the set of gradients. These top portion of the gradients will be selected and sent to the server for global aggregation. We conduct experiments on different datasets, models and scenarios by building a prototype system, and experimental results demonstrate that our BHerd strategy is effective in selecting beneficial local gradients to mitigate the effects brought by the Non-IID dataset, thus accelerating model convergence.}

\keywords{Federated learning, machine learning, Non-IID data, optimization, gradient pruning.}

%%\pacs[JEL Classification]{D8, H51}

%%\pacs[MSC Classification]{35A01, 65L10, 65L12, 65L20, 65L70}

\maketitle

\section{Introduction}

Federated learning is a paradigm of distributed machine learning that allows the decentralized clients to train a global model using their own private data, thus alleviating the problems of data silos and user privacy \cite{konevcny2016federated, liu2022distributed}. In FL, the clients' availability and decentralization bring great communication overhead to the network, and it can be reduced by increasing the local computing \cite{mcmahan2017communication}. The local computing of FL generally adopts the optimization method of Stochastic Gradient Descent (SGD) \cite{wang2021cooperative}, which requires the Independent and Identically Distributed (IID) data to ensure the unbiased estimation of the global gradient in the training process \cite{dvurechensky2018computational, lei2019stochastic, harvey2019tight, li2019hpdl, sungradient, sun2022decentralized}. Since the clients' local data are obtained from the local environment and usage habits, the generated datasets are usually Non-IID due to differences in size and distribution \cite{aledhari2020federated}. The Non-IID datasets and multiple local SGD iterations will bring a drift to the global model in each communication round, which significantly reduces the FL performance and stability in the training process, thus requiring more communication rounds to converge or even fail to converge \cite{sattler2019robust, luo2021no, zhang2021federated}. Therefore, it becomes a key challenge to accelerate FL convergence by reducing the negative impact of Non-IID data.

A common strategy is adapting the model to local tasks, which creates a better initial model by local fine-tuning \cite{fallah2020personalized,t2020personalized,finn2017model,smith2017federated, corinzia2019variational, chen2018federated,zhu2021data,lin2020ensemble,peng2019federated}. In addition to the optimization of FL strategies by adapting the model, the broader focus aims at improving data distribution from an optimization perspective of the data, thus effectively improving the generalization performance of the global model \cite{zhu2021federated,yoon2020fedmix,zhang2018mixup,zhou2022domain, back2022mitigating,wang2020optimizing,yang2021federated,zhao2018federated,tuor2021overcoming, yoshida2019hybrid}. Further, by mapping local data to local gradients via model parameters, what can be found is that the local gradients influence the bias of the global aggregation. So some studies have optimized the convergence process of FL by adjusting these local gradients \cite{cheng2022aafl,wang2020tackling,karimireddy2020scaffold}. Through this research, we identified a pathway to accelerate FL convergence: mitigating the influence of Non-IID data through modifications to the local gradient.

Let us broaden our perspective beyond FL. Here is an illuminating statement: only a significant portion of the training samples play an important role in the generalization performance of the model, and thus dataset pruning can be used to construct the smallest subset from the entire training data as a proxy training set without significantly sacrificing the model's performance \cite{paul2021deep,yang2022dataset}. Translating this concept to FL yields an innovative insight: within a Non-IID dataset, only a subset of samples contribute significantly to the convergence of the global model. By isolating and utilizing the gradients derived from the training on these samples—termed "beneficial gradients"—for global updates, one can enhance the model's generalization capabilities and expedite its convergence. Hence, the primary focus of this paper is to examine the methodology for selecting these "beneficial gradients" and to validate the efficacy of this approach.

In this paper, we first propose a new federated learning optimization strategy called BHerd. BHerd orders local gradients using the herding strategy in GraB \cite{lu2022grab}, and then performs the cropping which means selecting only the top portion (regarded as "beneficial gradients") in the permutation of the gradients. Subsequently, we will delineate the architecture and operational dynamics of our prototype system framework and the BHerd strategy in detail.

\begin{figure}[!t]
	\centering
	\includegraphics[width=1.0\textwidth]{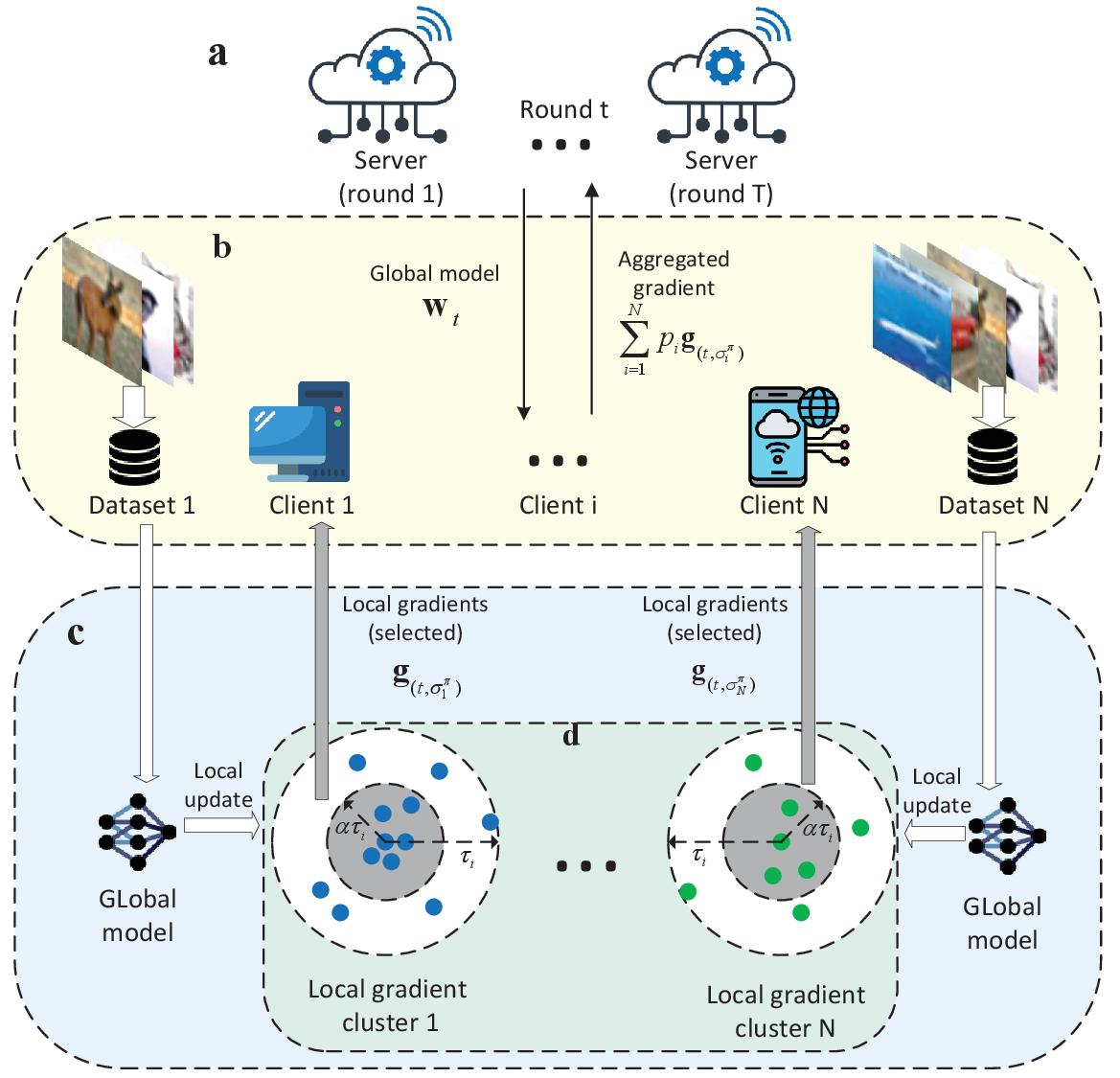}
	\caption{\footnotesize{
\textbf{Systems for herding local gradients in FL.}
(a) Server-side global updates. During the $t$-th ($t\in[1, T]$) round of global training, the server disseminates the global model $ \boldsymbol{w}_{t}$ to all participating clients, followed by executing a global update upon receipt of the gradients learned by the clients.
(b) Client Preparation Phase. Upon receiving the global model $ \boldsymbol{w}_{t}$, the client $i$ ($i\in[1, N])$ invokes the local dataset and initiates training preparation in accordance with the predefined parameters ($E=1, B=100, N=5, T=500, \alpha=0.5, \eta=10^{-4}$).
(c) Local update to obtain local gradient clusters. During this phase, the client $i$ undertakes $\tau_i$ iterative steps of local updates applied to the global gradient, symbolized as $ \boldsymbol{w}_{t}$, for the purpose of generating $\tau_i$ distinct local gradients.
(d) Selection of beneficial partial local gradients. The procured local gradients undergo a selection process as delineated by Equation \eqref{local-obj}. For enhanced visualization and comprehension, these gradients are represented on a two-dimensional plane within the figure. Points positioned nearer to the circle's center signify a reduced deviation from the mean gradient, thus receiving a higher ranking. Subsequently, the quantity of local gradients is refined to the foremost $\alpha\tau_i$, constituting the outcome of the final selection.
}}
\label{system}
\end{figure}

\subsection{BHerd's Objective}
\label{control}
For convenience, we assume that all vectors are column vectors in this paper. Assume we have $ N $ clients with local train datasets $ \mathcal{D}_{1}, \mathcal{D}_{2}, ..., \mathcal{D}_{i}, ..., \mathcal{D}_{N} $, where $ i $ denotes the client index and the training sample $\boldsymbol{x}_{i} \in \mathcal{D}_{i}$. We use $ \boldsymbol{w}$ to denote the model parameters and use $ \left\|.\right\| $ to denote the $ L_2 $ norm. It should be mentioned that we use mini-batch gradient descent with batch size $B\geq1$ and local epoch $E\geq0$, so the differentiable (loss) function $F:\mathbb{R}^{d}\to\mathbb{R}$ is minimized on a set of data batch $\boldsymbol{x}$. 

We set the size of the total training dataset, batch size and training epoch as constant with the value of $ |\mathcal{D}| = 6 \times 10^{4} $, $B=100$ and $E=1$, respectively. It is important to mention that we assume in this paper that all clients participate in training, and this assumption can easily be generalized to pick a different fraction of clients to participate in each round of training.

When fastening the number of clients $N$, the local data size $|\mathcal{D}_i|$ will have different fixed values in the different client $i$, and then we can get a fixed value for $ \tau_{(t, i)} = \lfloor E\frac{|\mathcal{D}_i|}{B} \rfloor $ to the clients for each round (except for the first baseline with $\tau= \lfloor E\frac{|\mathcal{D}|}{B} \rfloor$ in one client). Unless otherwise specified, we set the control parameter $ \alpha $ to a fixed value $ \alpha = 0.5 $ for all rounds. We manually select the total round $ T =500 $ and the learning rate fixed at $ \eta = 1 \times 10^{-4} $, which is acceptable for our learning process. Except for the instantaneous results, the others are the average results of 10 independent runs.

We sort the gradients collected to be close to the average gradient $\boldsymbol{\mu}_{(t,\sigma_i)}$ ($\sigma_i=\{1,2,...,\tau_i\}$ denotes a permutation (ordering) of data batch $\boldsymbol{x}$) and pick only the top $\alpha\tau_{i}, \alpha\in\left( 0,1\right] $ (rounding when not an integer) gradients to form a new permutation. Based on these, the objective of BHerd is formalized in mathematical notation as follows:
\begin{equation}
	\label{local-obj}
	\arg\min_{\sigma_i^\pi}\left\|\sum_{\lambda=1}^{\alpha\tau_i}\left(\nabla F_i\left(\boldsymbol{w}_{(t,i)}^\lambda;\boldsymbol{x}_{\sigma_i}\right)-\boldsymbol{\mu}_{(t,\sigma_i)}\right)\right\|,
\end{equation}
where $\sigma_i^{\pi}=\{\pi_1,\pi_2,...,\pi_{\alpha\tau_i}\}$ denotes a permutation (ordering) of local gradients. The purpose of Equation \eqref{local-obj} is to characterize the entire set of gradients in terms of partial of them, thus mitigating the effects of long-tailed gradients and improving the generalization ability of the model (Fig. \ref{system}).

\subsection{Model and Datasets}
\label{data-case1}
We evaluate the training of two different models on two different datasets, which represent both small and large models and datasets. The models include squared-SVM (we refer to as SVM in short in the following) and deep convolutional neural networks (CNN). The SVM has a fully connected neural network, and outputs a binary label that corresponds to whether the digit is even or odd. The CNN has two $ 5 \times 5 \times 32 $ convolution layers, two $ 2 \times 2 $ MaxPoll layers, a $ 1568 \times 256 $ fully connected layer, a $ 256 \times 10 $ fully connected layer, and a softmax output layer with 10 units. The SVM model uses Squared Hinge Loss and the CNN model uses Cross-Entropy Loss, as detailed in \cite{wang2019adaptive}.

SVM is trained on the original MNIST (which we refer to as MNIST in short in the following) dataset \cite{lecun1998gradient}, which contains the gray-scale images of $ 7 \times 10^{4} $ handwritten digits ($ 6 \times 10^{4} $ for training and $ 10^{4} $ for testing).

CNN is trained using SGD on two different datasets: the MNIST dataset and the CIFAR-10 dataset \cite{krizhevsky2009learning}, and the CIFAR-10 dataset includes $ 6 \times 10^{4} $ color images ($ 5 \times 10^{4} $ for training and $ 10^{4} $ for testing) associated with a label from 10 classes. A separate CNN model is trained on each dataset, to perform multi-class classification among the 10 different labels in the dataset.

\subsection{Baselines}
\label{baseline}
We compare with the following baseline approaches: 

\textbf{Centralized SGD.} The entire training dataset is stored on a single client and the model is trained directly on that client using a standard SGD and random reshuffling protocol \cite{mishchenko2020random} with $E=500$.

\textbf{FedAvg.} The standard FL approach \cite{mcmahan2017communication} that all clients use SGD and random reshuffling protocol with local epoch $E=1$ and global round $T=500$.

\textbf{GraB-FedAvg.} The online Gradient Balancing (GraB) \cite{lu2022grab} operation replaces the herding operation in the BHerd strategy with local epoch $E=1$ and global round $T=500$. In the GraB operation, when the current local gradient is received, it is roughly ordered based on the average of all the previous gradients to reduce the storage overhead and computational complexity.

\textbf{FedNova.} A novel FL approach \cite{wang2020tackling} that all clients use SGD and random reshuffling protocol with local epoch $E=1$ and global round $T=500$. Instead of ordering and selecting the local gradients, the FedNova algorithm simply performs an averaging operation and then scales the global gradients.

\textbf{SCAFFOLD.} A novel FL approach \cite{karimireddy2020scaffold} that all clients use SGD and random reshuffling protocol with local epoch $E=1$ and global round $T=500$. SCAFFOLD uses control variates (variance reduction) to correct for the ‘client-drift’ in its local updates.

A detailed implementation of the Herd-FedAvg and Grab-FedAvg algorithms can be found in Appendix \ref{alg-result}. For a fair comparison, we set the result of the first baseline as the standard model under its epoch budget, and use this model for evaluating the convergence of other trained models.

\subsection{Simulation of Dataset Distribution}
\label{sec-distribute}
To simulate the dataset distribution, we set up two different Non-IID cases and a standard IID case.

\textbf{Case 1 (IID).} Each data sample is randomly assigned to a client, thus each client has uniform (but not full) information.

\textbf{Case 2 (Non-IID).} All the data samples in each client have the same label, which means that the dataset on each client has its unique features.

\textbf{Case 3 (Non-IID).} Data samples with the first half of the labels are distributed to the first half of the clients as in Case 1, the other samples are distributed to the second half of the clients as in Case 2.

\section{Results}
\label{result}
We first evaluate the general performance of the BHerd strategy, thus we conducted experiments on a networked prototype system with five clients. The prototype system consists of five parallel distributed processes (clients) and one process for parameter initialization, distribution, and handling (server), which are implemented on an instance configured with a 24-core Intel(R) i9-13900K 5.8GHz CPU, 32GB memory and an NVIDIA GeForce RTX 4070 Ti GPU. The server has an aggregator and implements global update and offline herding steps of FL, and the five clients have model training with local datasets and different simulated dataset distributions of IID and Non-IID. We set up that in BHerd strategy, each client trains the local dataset for 1 epoch ($E=1$) in a global update round, with a total of 100 global rounds ($T=500$).

\subsection{BHerd's combination on the classic FedAvg structure}
\label{loss and acc}

\begin{figure}[!t]
\centering

\subfloat[]{\includegraphics[width=1.0\textwidth]{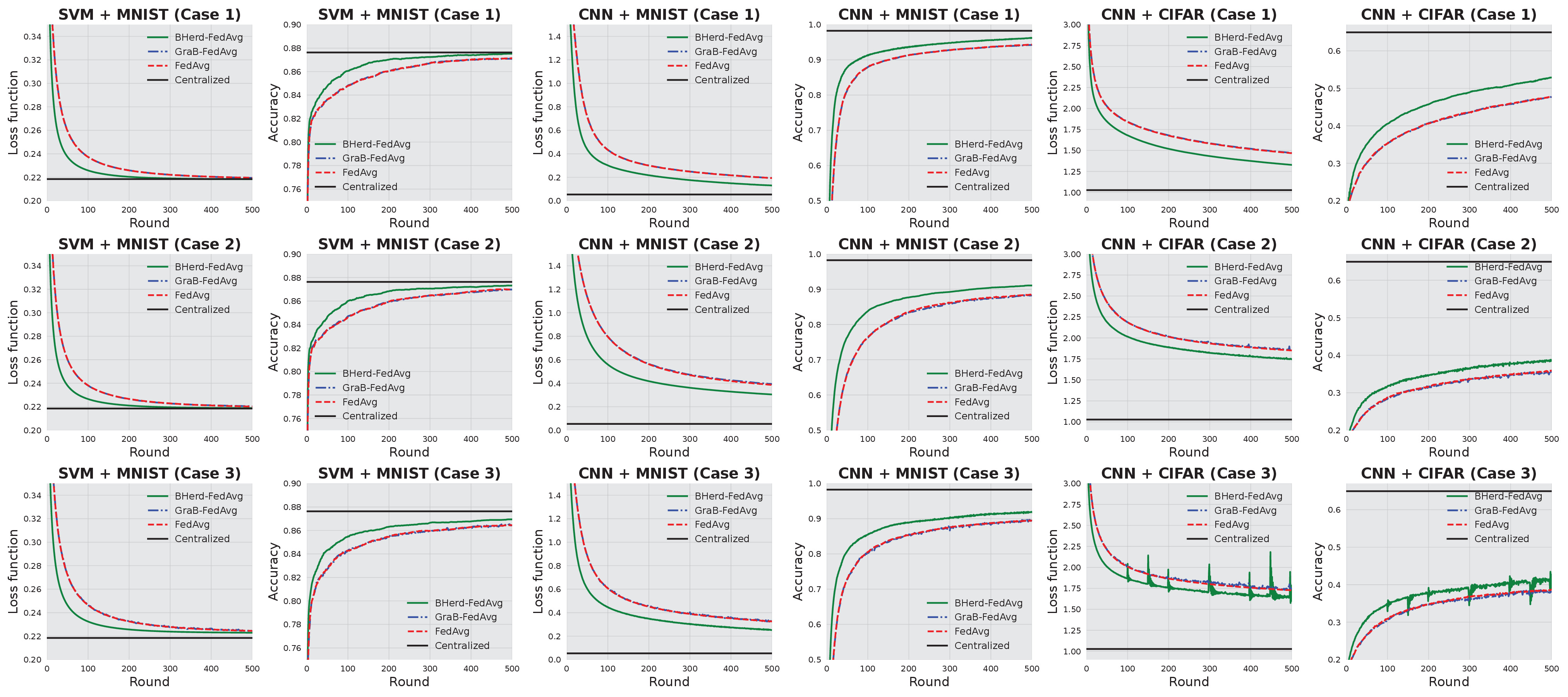}\label{lossacc:bherd-grab}}

\subfloat[]{\includegraphics[width=1.0\textwidth]{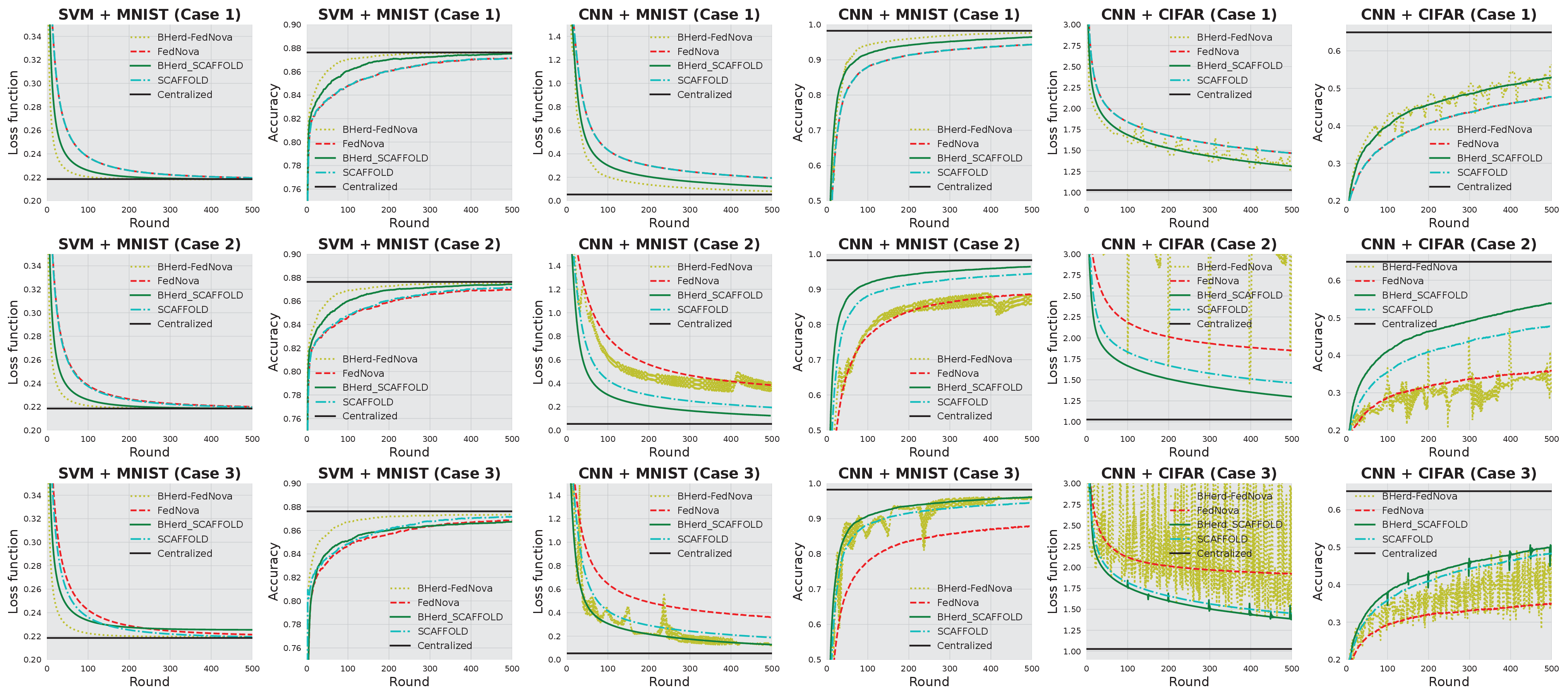}\label{lossacc:nova-scafflod}}

\caption{\footnotesize{
\textbf{Introducing the BHerd strategy to enhance each baseline in \textbf{Case 1-3}.}
In all figures requiring a quantitative assessment of the model, metrics are represented in terms of loss function values and prediction accuracy values, discernible from the vertical axis. 
(a) Comparison of BHerd and GraB method in the prototype system. The black line delineates the outcome of the centralized gradient descent previously discussed, signifying the model's convergence optimum. The red curve signifies the conventional FedAvg algorithm, serving as the optimization benchmark. The green and blue curves depict the integration of the BHerd and GraB methods into the FedAvg algorithm, respectively.
(b) Extension of the BHerd strategy to popular algorithms. The red and cyan curves signify the performance of two recently prominent algorithms, FedNova and SCAFFLOD, respectively. The yellow and green curves illustrate the outcomes of integrating the BHerd strategy with the aforementioned algorithms.
}}
\label{fig-loss-acc}
\end{figure}

In our first set of experiments, the SVM and CNN models were trained on the prototype system with \textbf{Case 1-3} and five clients ($N=5$). The initial objective of this study is to ascertain the differential efficacy in the convergence rates of federated models facilitated by the GraB sorting algorithm delineated in \cite{lu2022grab} as compared to the BHerd strategy. As shown in Figure \ref{lossacc:bherd-grab}, incorporating the BHerd strategy into the FL framework markedly enhances the speed at which convergence is attained, as well as the ultimate predictive precision of the model, surpassing the performance of the traditional FedAvg algorithm. Conversely, despite the GraB algorithm's superior performance in computational complexity and storage requirements, it falls short in enhancing the convergence rate of FedAvg. This discrepancy underscores the stochastic nature of the GraB algorithm within our devised strategy for the selection of local gradients, thereby affirming the premise that while GraB may excel in certain operational efficiencies, it does not necessarily contribute to the optimization of convergence within the Federated Learning paradigm. This comparative analysis not only highlights the distinct advantages of implementing BHerd in federated settings but also delineates the critical factors influencing the effectiveness of algorithms in enhancing the convergence and accuracy of federated models.

It is imperative to acknowledge that in the context of `CNN+CIFAR' evaluations within Case3, the performance metrics associated with BHerd exhibit fluctuations across successive rounds. This variability stems from the algorithm's engagement with datasets characterized by a significant degree of Non-IID, wherein BHerd selects local gradients that, during certain rounds, diverge markedly from the overarching global target. This tendency is particularly pronounced in the context of CNN models, which exhibit heightened sensitivity to the CIFAR dataset's samples. While it is technically feasible to mitigate these oscillations by excluding training outcomes that deviate from expected patterns at each round, our commitment to presenting an unvarnished depiction of the algorithm's performance has led us to faithfully document these results in Figure \ref{fig-loss-acc}. This approach not only ensures transparency but also facilitates a comprehensive understanding of the dynamic interplay between algorithmic selection mechanisms and dataset characteristics, thereby enriching the discourse on the optimization of federated learning strategies in complex data environments.

\subsection{BHerd's extension to the popular algorithms}
Despite the fact that our theoretical validation of the BHerd strategy was confined within the bounds of the FedAvg framework, as corroborated by the empirical evidence presented in preceding experiments, the potential for generalization of the BHerd approach extends to encompass other leading Federated Learning (FL) algorithms, notably FedNova \cite{wang2020tackling} and SCAFFOLD \cite{karimireddy2020scaffold}, which have gained prominence in recent years. 

As depicted in Figure \ref{lossacc:nova-scafflod}, the integration of our BHerd strategy into the frameworks of FedNova and SCAFFOLD results in varying degrees of improvement in convergence speeds for both algorithms. Notably, within the context of cases1-3, FedNova exhibits a marked enhancement when applied to the SVM model, whereas the improvements with SCAFFOLD are more pronounced in its application to the CNN model. Particularly noteworthy is the observation that the performance of FedNova, when training the CNN model, exhibits significant fluctuations across rounds. A feasible explanation for this phenomenon lies in the inherent sensitivity of CNN models to variations in the global learning rate. Our BHerd strategy, by eliminating certain local gradient information, inadvertently contributes to these fluctuations. Specifically, FedNova's approach to adjusting the step size—by reducing the local gradient step size while increasing the global step size—tends to amplify the observed oscillations highlighted in prior experiments. In contrast, SCAFFOLD, which focuses on correcting both the local and global gradients without significantly altering their respective magnitudes, introduces additional gradient information that acts to dampen the oscillation effect. 

Collectively, the BHerd strategy substantiates its efficacy in relation to conventional methodologies, evidenced by its convergence towards minimal loss metrics and enhanced accuracy across standard models, datasets, and data distribution frameworks. Owing to the intricate challenges associated with evaluating CNN models, this study henceforth narrows its focus to the examination of the SVM model utilizing the MNIST dataset and BHerd-FedAvg framework, aiming to elucidate further on the prototype system's performance. Subsequently, an analysis delineating the correlation between the algorithm's efficiency and the configuration of pertinent hyperparameters will be conducted.

\subsection{Sensitivity to the beneficial local gradients}
\label{alpha}

\begin{figure}[!t]
\centering

\subfloat[]{\includegraphics[width=1.0\textwidth]{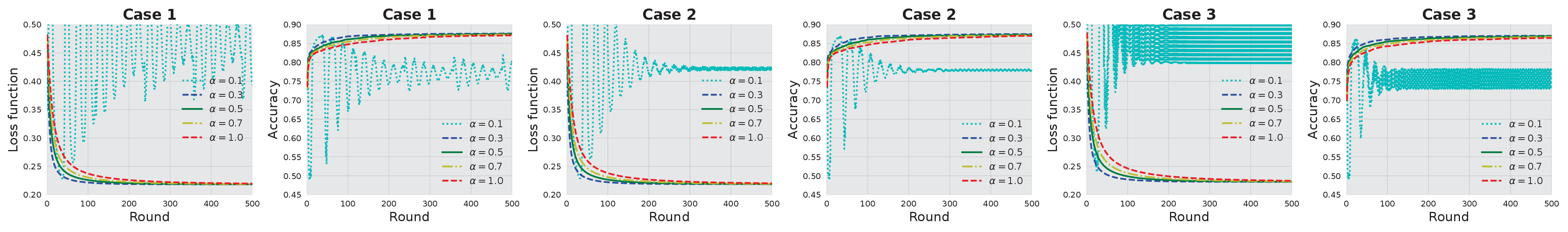}\label{fig-alpha}}

\subfloat[]{\includegraphics[width=1.0\textwidth]{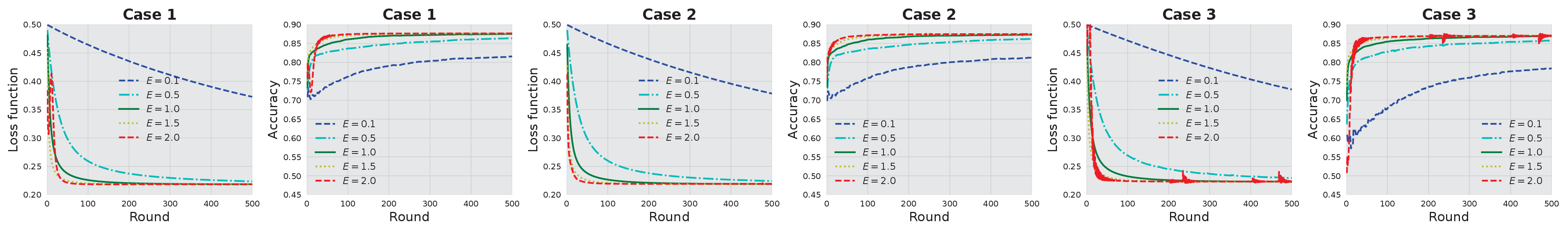}\label{fig-epoch}}

\subfloat[]{\includegraphics[width=1.0\textwidth]{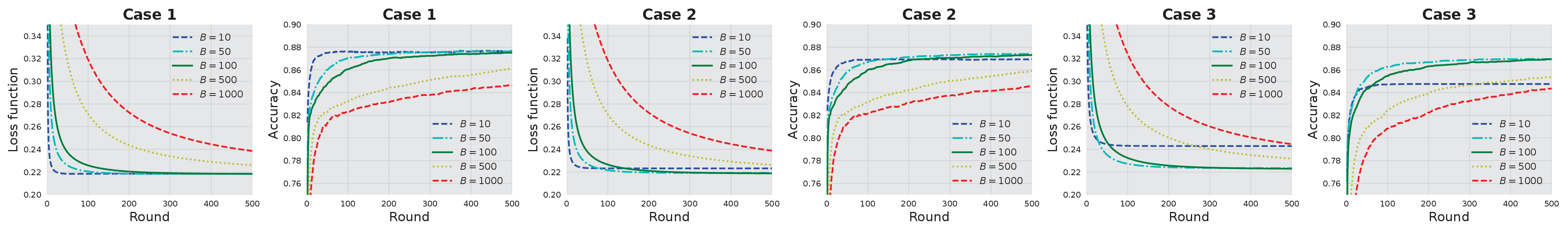}\label{fig-batch}}

\subfloat[]{\includegraphics[width=1.0\textwidth]{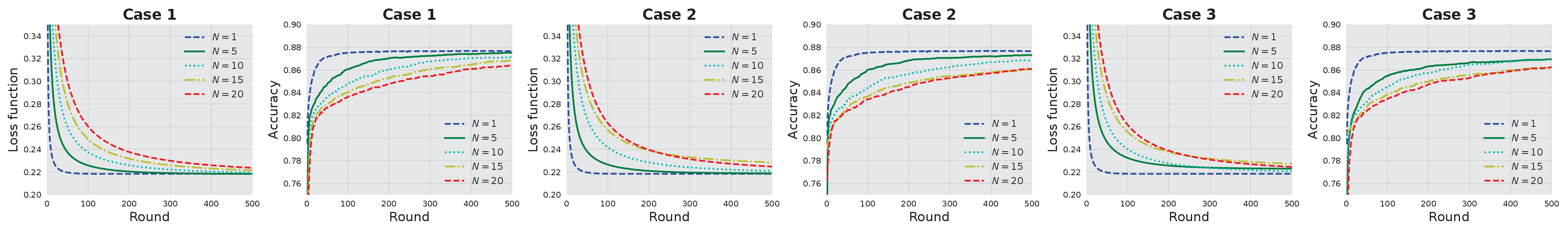}\label{fig-node}}

\caption{\footnotesize{
\textbf{Influence of hyperparameters on global model convergence.}
Consistent with the prototype system delineated in Section \ref{loss and acc}, we present the outcomes of deploying the `SVM + MNIST' configuration across five clients under \textbf{Case 1-3}; Each subplot is designed to illustrate the impact of a single hyperparameter(all inputs are constrained within their respective value ranges), with all other parameters held constant, thereby facilitating a clear analysis of its individual effect.
(a) Acceleration of model convergence is facilitated by the scaling of beneficial gradients. The legends for the individual curves are arranged from top to bottom as follows: $\alpha = 0.1, 0.3, 0.5, 0.7, 1.0$.
(b) The number of epochs executed per round influences model convergence. The legends for the individual curves are arranged from top to bottom as follows: $E = 0.1, 0.5, 1.0, 1.5, 2.0$.
(c) Batch size of the training data impacts model convergence. The legends for the individual curves are arranged from top to bottom as follows: $E = 10, 50, 100, 500, 1000$.
(d) The specificity of the prototype system is influenced by the number of clients.
The legends for the individual curves are arranged from top to bottom as follows: $N = 1, 5, 10, 15, 20$.
}}
\label{fig-parameter}
\end{figure}

The quantitative outcomes pertaining to the loss function and classification accuracy are depicted in Figure \ref{fig-alpha}. Upon comparative analysis with the `SVM + MNIST' subplots illustrated in Figure \ref{lossacc:bherd-grab}, it is discernible that the convergence trajectory of the algorithm under scrutiny bears resemblance to that of the FedAvg algorithm at $\alpha = 1.0$. This observation suggests that the FedAvg algorithm can be regarded as a particular instantiation of the algorithm proposed herein. Notably, a reduction in the value of $\alpha$ from 1.0 to 0.3 engenders an acceleration in the convergence rate of the BHerd strategy, attributable to the mitigation of gradient deviations on each client, as delineated by Equation \eqref{local-obj}. Contrariwise, diminishing $\alpha$ further to 0.1 results in the algorithm's failure to converge, indicative of the local gradient's incapacity to contribute meaningful parameters for global aggregation. Consequently, to circumvent the issues of non-convergence and oscillations observed respectively in Figures \ref{fig-alpha} and \ref{lossacc:bherd-grab}, an $\alpha$ value of 0.5 is posited as the optimal threshold. This threshold is instrumental in characterizing the overall local gradient by incorporating a selectively proportioned segment, thereby ensuring the algorithm's effective performance.

\subsection{Variation in Local Epoch per Round}
\label{epoch}

On a global level, the BHerd strategy does a scaling transformation: $\tau_i \to \alpha\tau_i $. And according to the computation on $ \tau_{(t, i)} = \lfloor E\frac{|\mathcal{D}_i|}{B} \rfloor $ in Section \ref{control}, the parameters that have an effect on $\tau_i$ are $E$, $|\mathcal{D}_i|$ (which is directly determined by $N$) and $B$. Therefore, we do ablation experiments on the three parameters $E$, $B$ and $N$, respectively.

The derived results concerning the loss function and classification accuracy are illustrated in Figure \ref{fig-epoch}. It is observed that as $E$ ascends from 0.1 to 2, there is an acceleration in the convergence of the depicted curves. This acceleration can be attributed to the increase in advantageous gradients at each client, thereby facilitating a more rapid convergence of the model. Nevertheless, at $E = 2$, the model's convergence trajectory exhibits minor oscillations across rounds. This phenomenon can be attributed to the accumulation of local gradients, divergent from the global objective, resultant from extensive local training sessions. Such divergence amplifies the likelihood of our BHerd strategy selecting these aberrant gradients in certain rounds, consequently impeding. Furthermore, it is observed that the computational complexity of BHerd escalates exponentially with an increase in $E$, thereby necessitating the selection of $E = 1$ as a pragmatic value to balance efficiency and performance.

\subsection{Variation in Local Batch Size}

The impact of these variations on the loss function and classification accuracy is documented in Figure \ref{fig-batch}. While $E$ modulates the quantity of selected gradients by adjusting the aggregate gradient count per round, $B$, in contrast, influences the total count of local gradients, denoted as $\frac{|\mathcal{D}_i|}{B}$, given fixed $E$ and $N$ (i.e., $|\mathcal{D}_i|$). Owing to the unique operational dynamics of the BHerd strategy, the selection of $B$ critically alters the distribution of local training datasets and, consequently, the number of local training gradients. This adjustment necessitates an optimal $B$ value, which varies distinctly across different \textbf{Case} scenarios.

As delineated in Figure \ref{fig-batch}, the identified optimal $B$ values for \textbf{Case 1-3} are 10, 10, and 50, respectively. It is observed that above these optimal values, an increment in batch size inversely affects the model's convergence rate and predictive accuracy. A conceivable rationale for this phenomenon posits that lower $B$ values enhance the likelihood of selecting more beneficial training features within local datasets through the herding operation, facilitating convergence to the local optimum on individual clients. Consequently, in scenarios where the training dataset is IID—thereby equating local optimization with global optimization—a reduced $B$ value can expedite convergence towards the global optimum, maintaining a minimal distance value as discussed in Section \ref{distance}. Conversely, in the presence of non-IID training data, it becomes imperative to adjust the $B$ value upwards to a level that sustains a low distance value, thereby augmenting the dataset's generalization capability and averting convergence to suboptimal local minima.

\subsection{Variation in the Number of Clients}
The results are shown in Figure \ref{fig-node}. When $N = 1$, the BHerd strategy is equivalent to centralized gradient descent, and thus its curves have the fastest convergence rate in \textbf{Case 1-3}, and next we analyze the impact of $N$ from 5 to 20.

Unlike $E$, which affects the number of beneficial gradients selected, and $B$, which affects the selection rate of beneficial gradients, $N$ affects the distribution of beneficial gradients by varying the size of $|\mathcal{D}_i|$. When $N$ is small and the data is more centralized, it is easier for the FL model to quickly converge to the local optimum for each client. This also explains that in \textbf{Case 1-3}, the curves with the fastest convergence and the best convergence results are $N = 5$.

\subsection{Distribution of selected local gradients and distance from the mean gradient}
\label{distance}

\begin{figure}[!ht]
\centering

\subfloat[]{\includegraphics[width=0.19\textwidth]{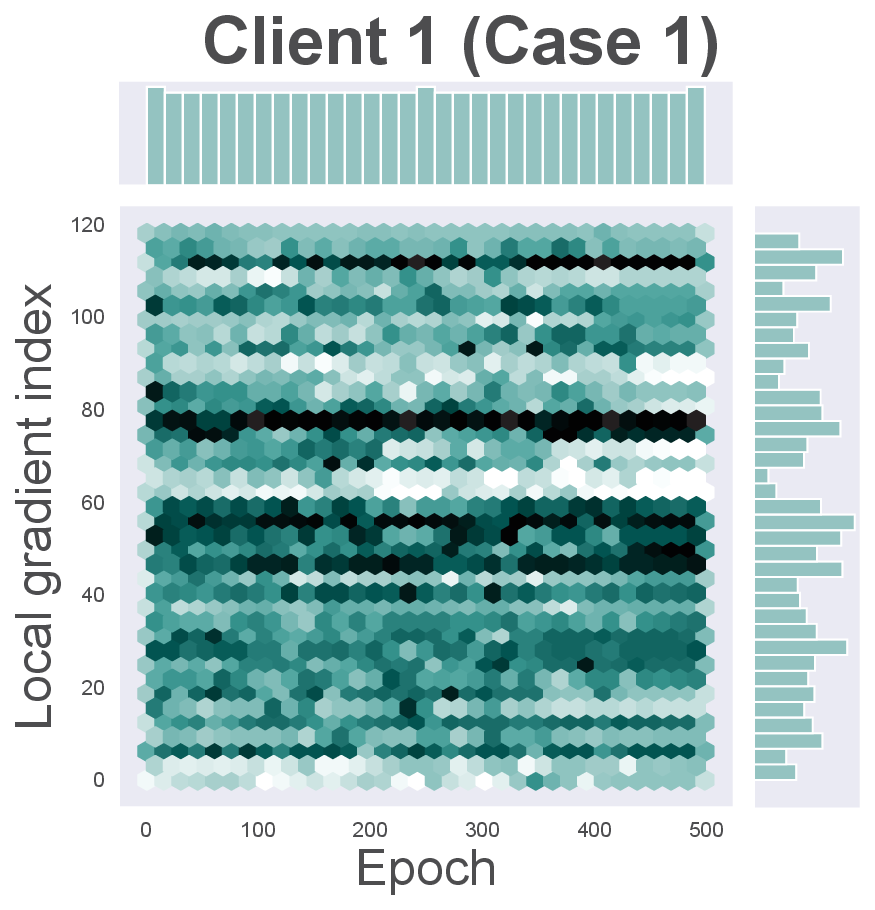}
\includegraphics[width=0.19\textwidth]{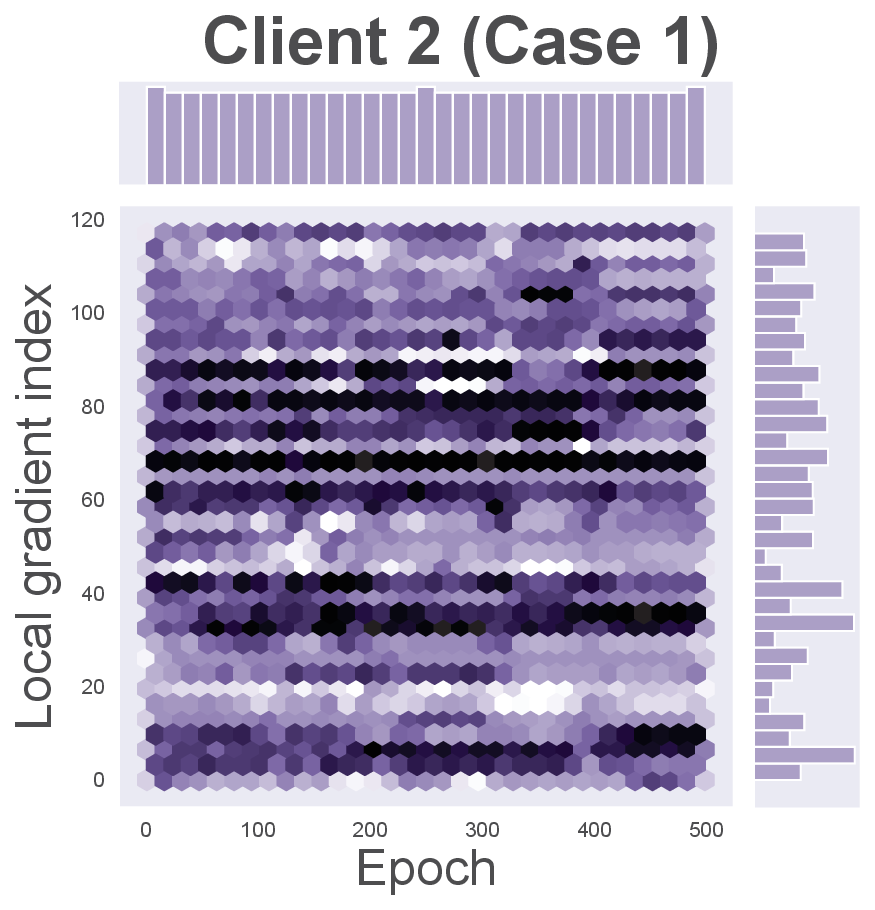}
\includegraphics[width=0.19\textwidth]{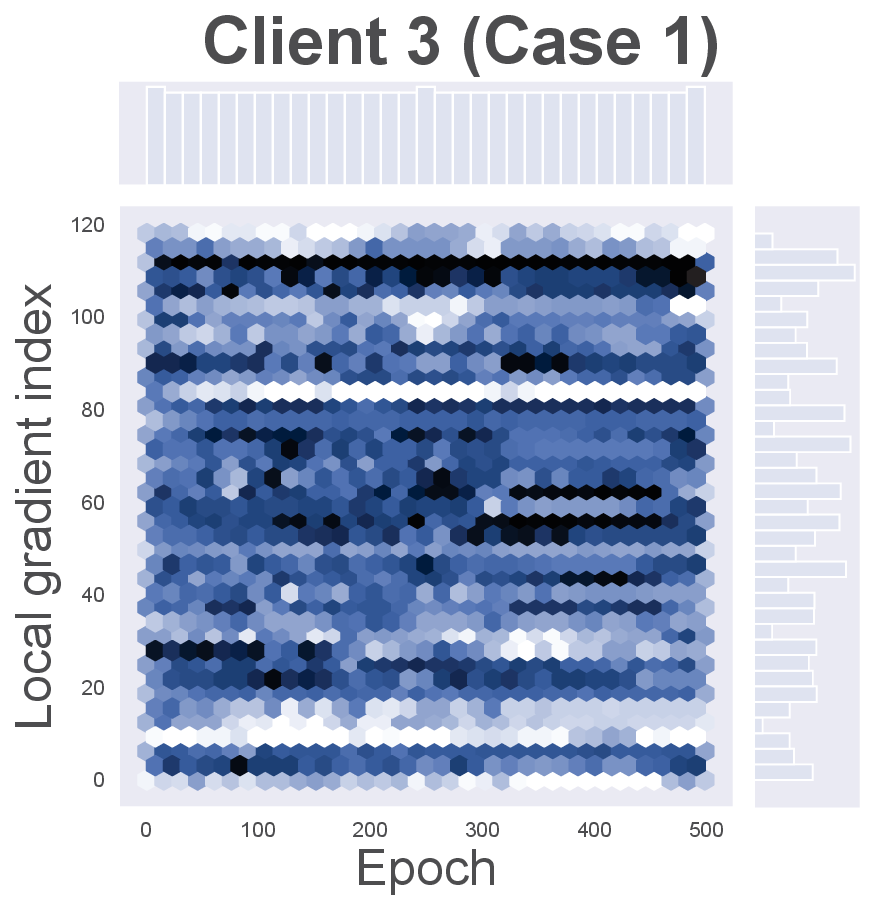}
\includegraphics[width=0.19\textwidth]{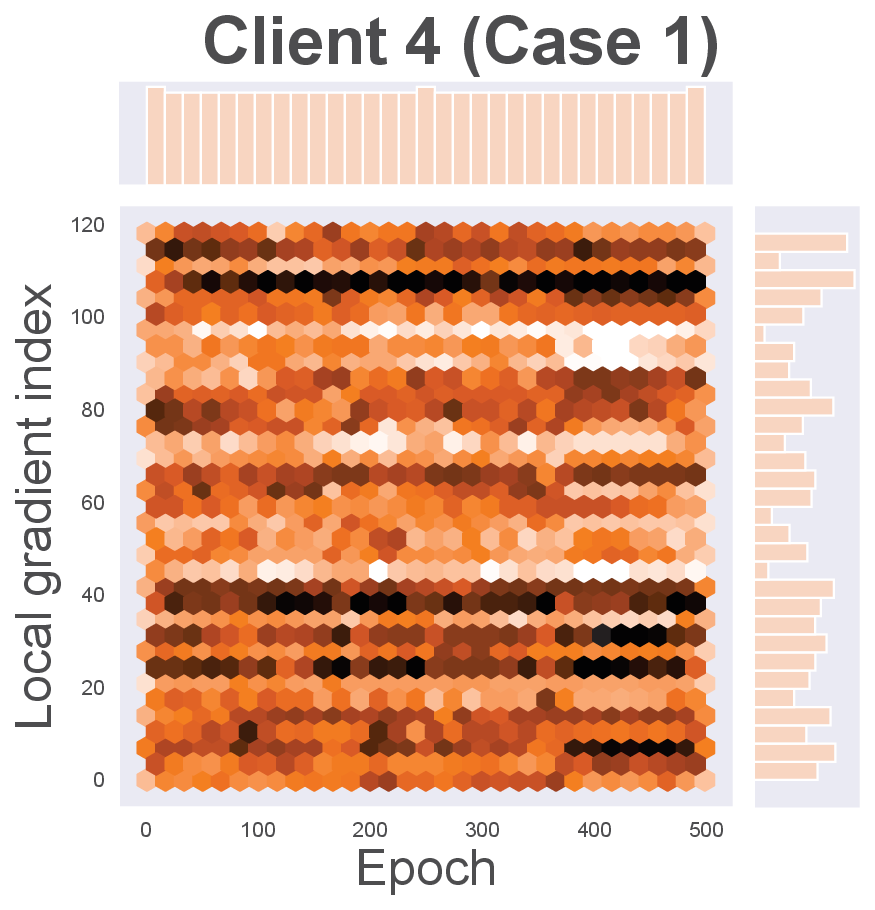}
\includegraphics[width=0.19\textwidth]{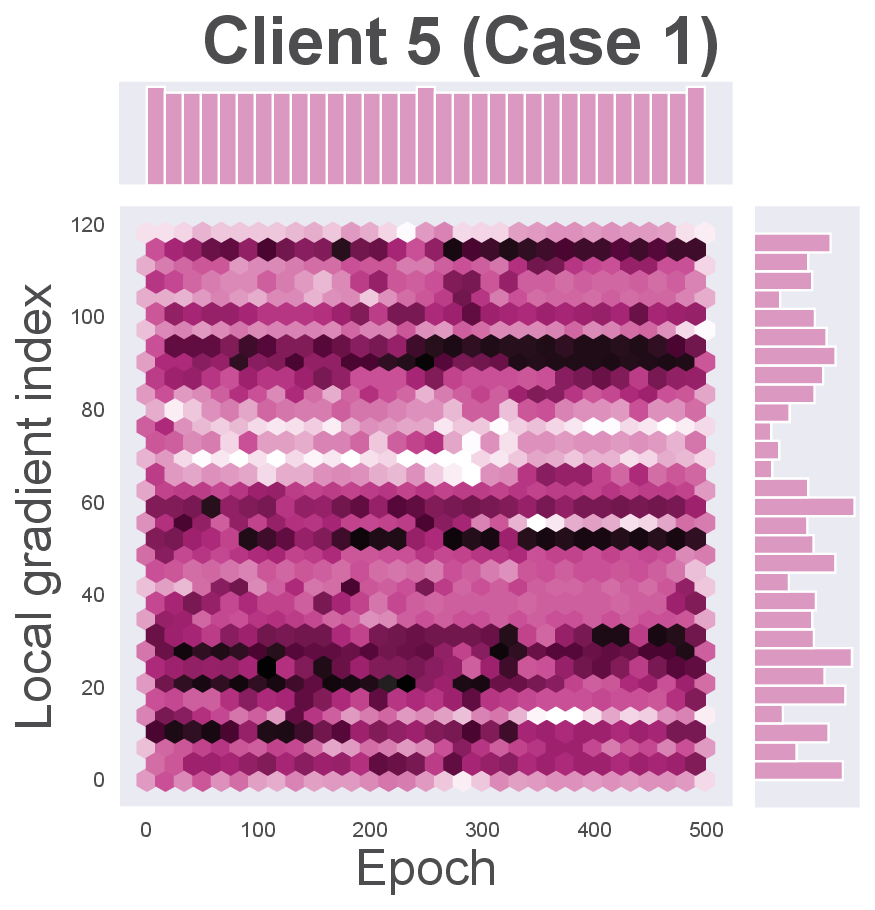}
\label{distribute_c1}}

\subfloat[]{\includegraphics[width=0.19\textwidth]{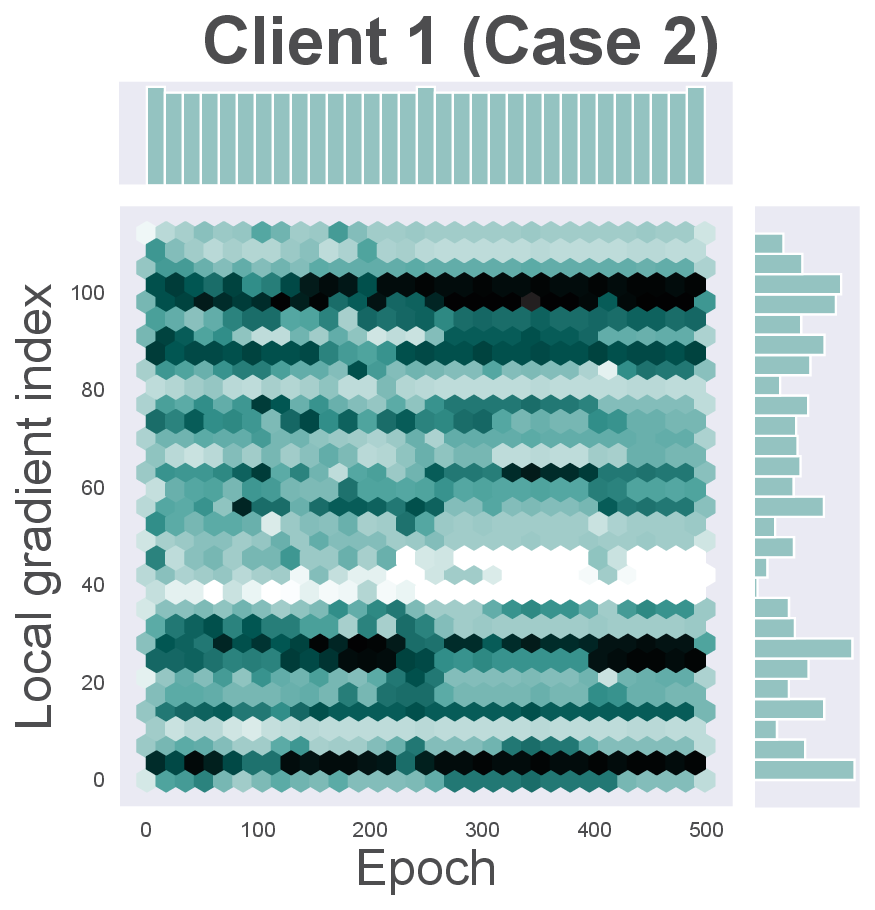}
\includegraphics[width=0.19\textwidth]{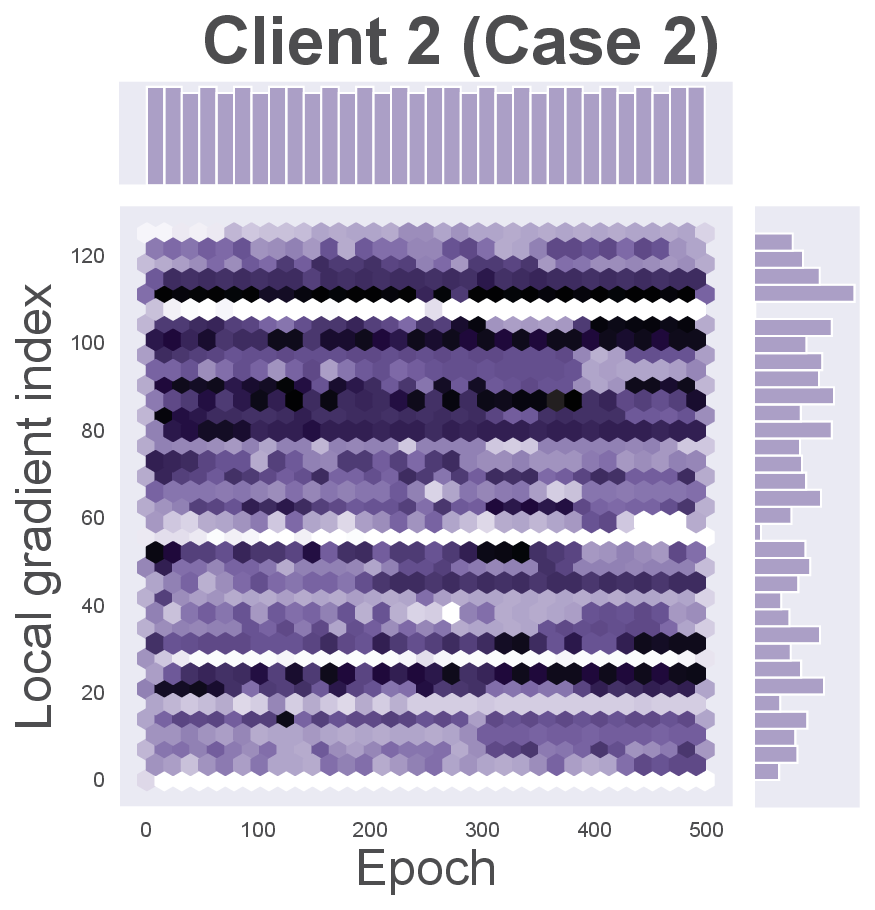}
\includegraphics[width=0.19\textwidth]{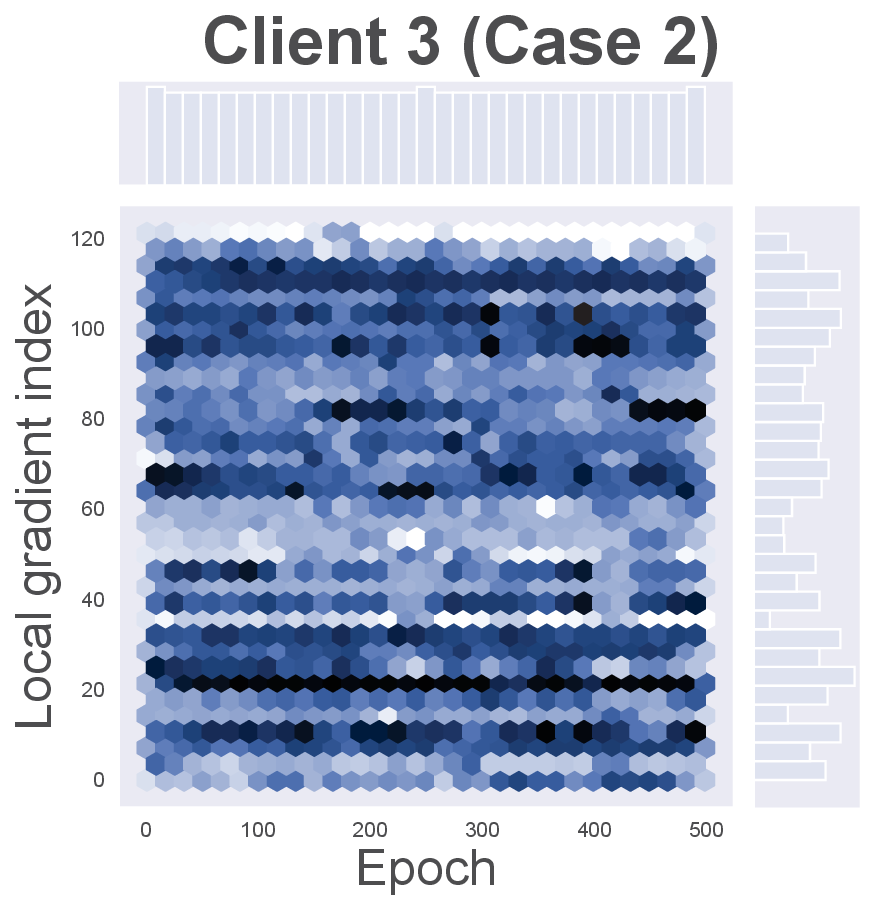}
\includegraphics[width=0.19\textwidth]{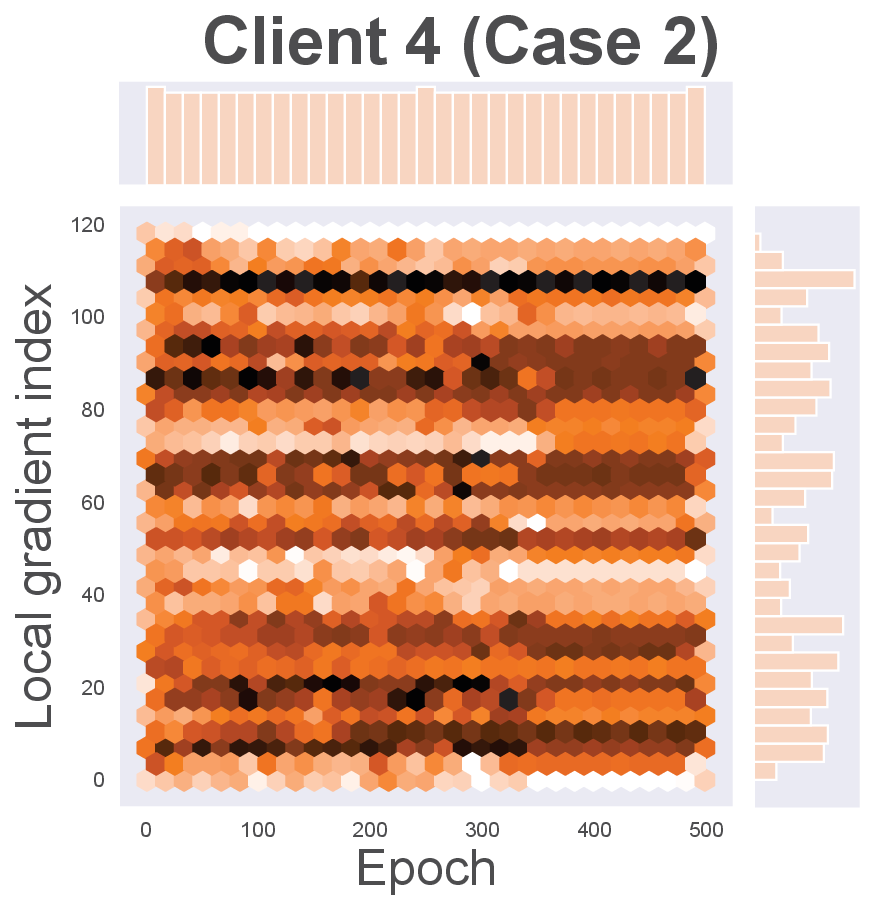}
\includegraphics[width=0.19\textwidth]{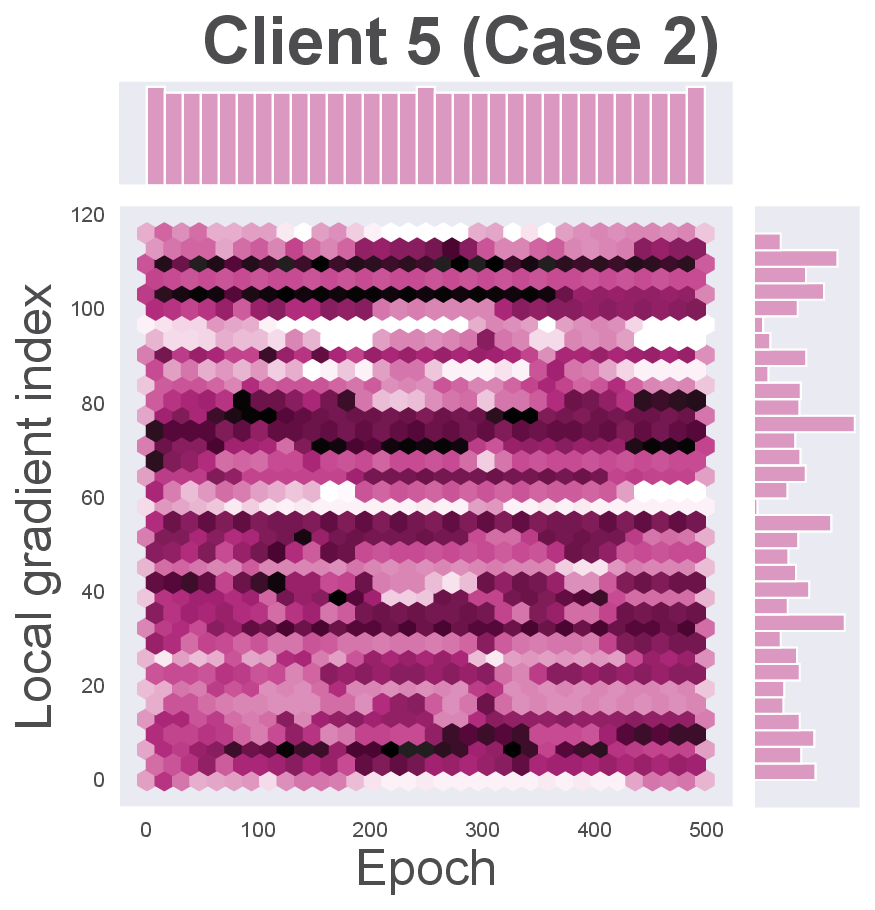}
\label{distribute_c2}}

\subfloat[]{\includegraphics[width=0.19\textwidth]{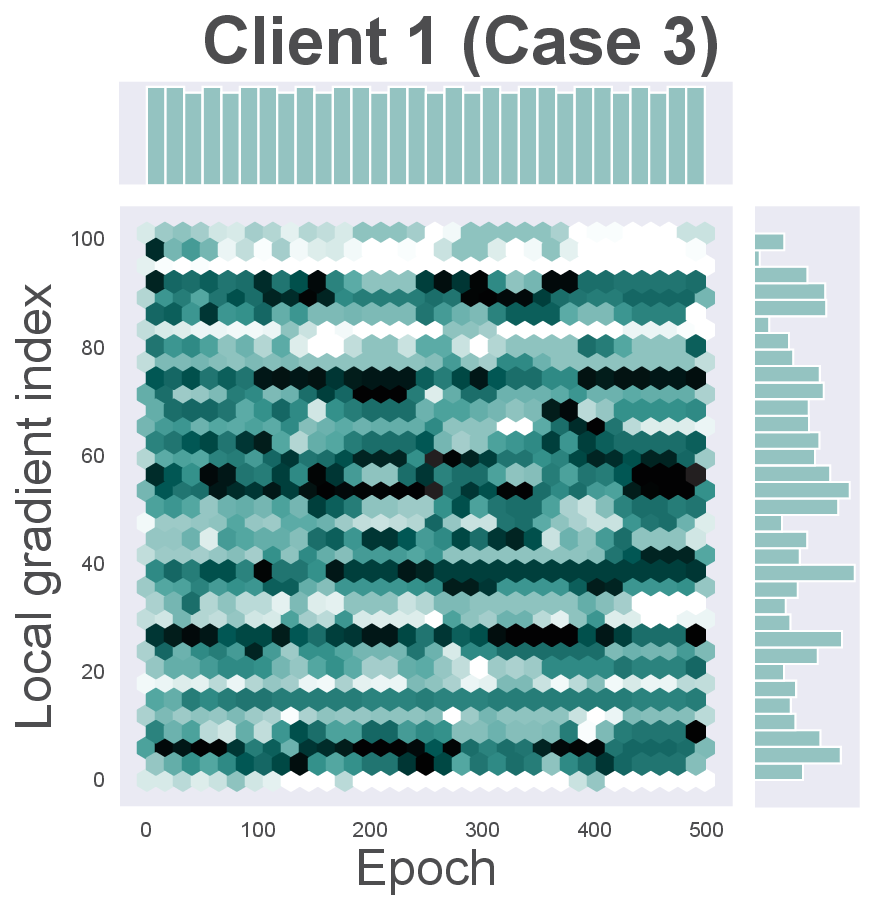}
\includegraphics[width=0.19\textwidth]{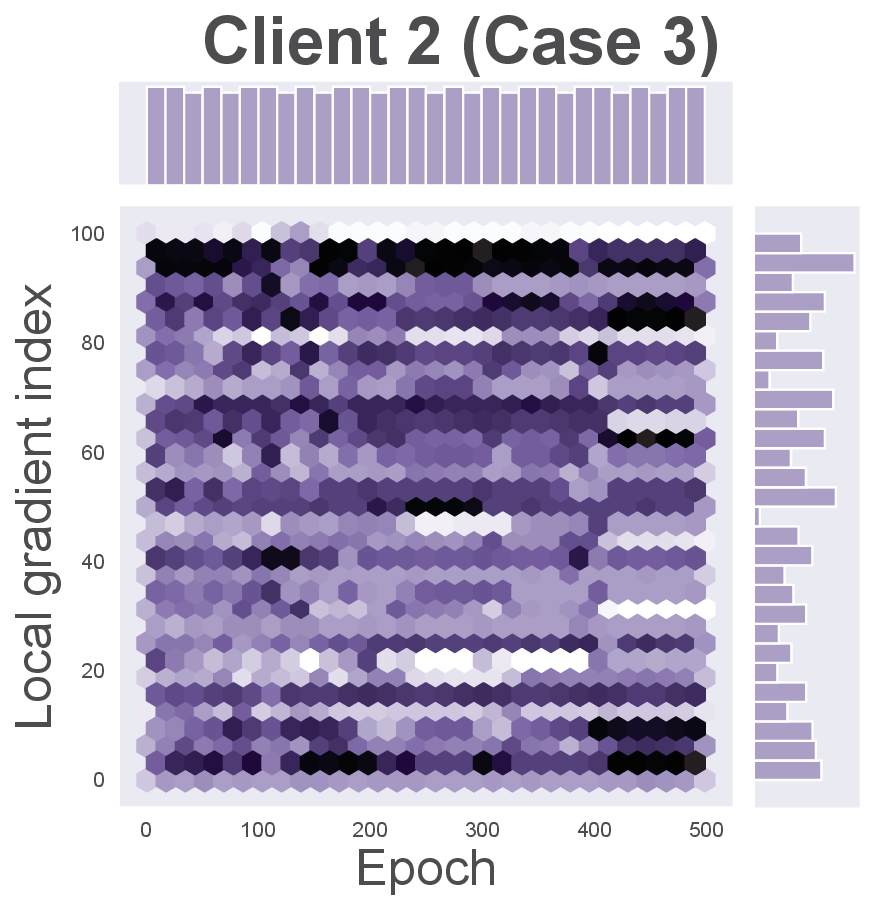}
\includegraphics[width=0.19\textwidth]{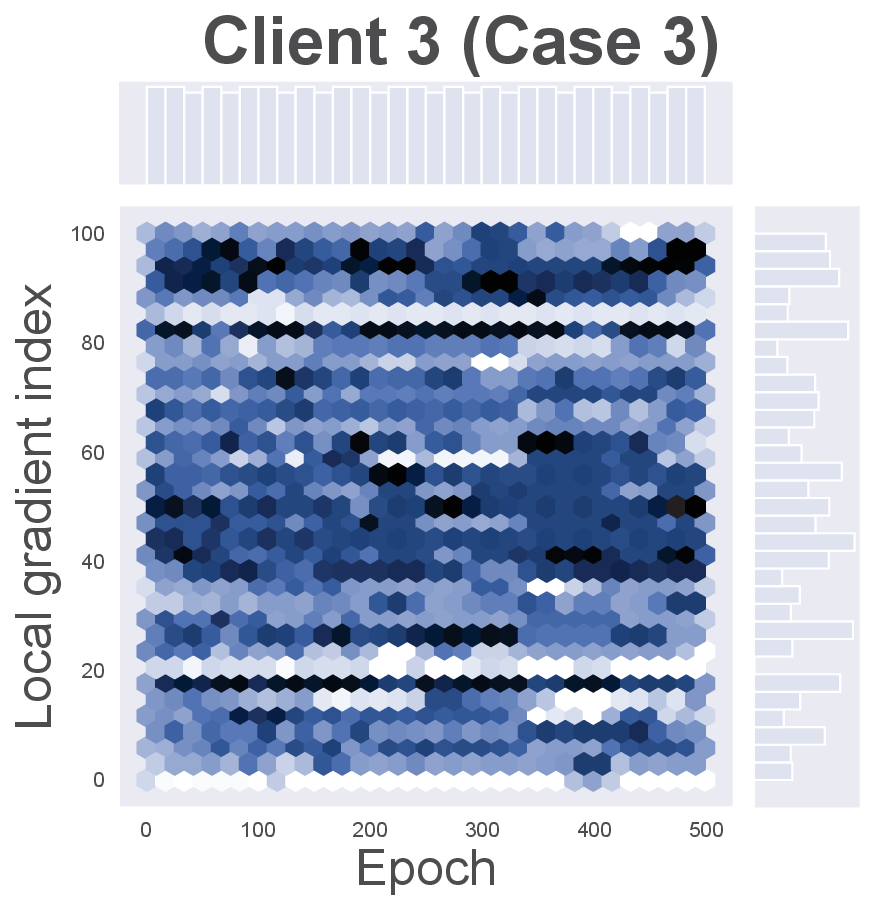}
\includegraphics[width=0.19\textwidth]{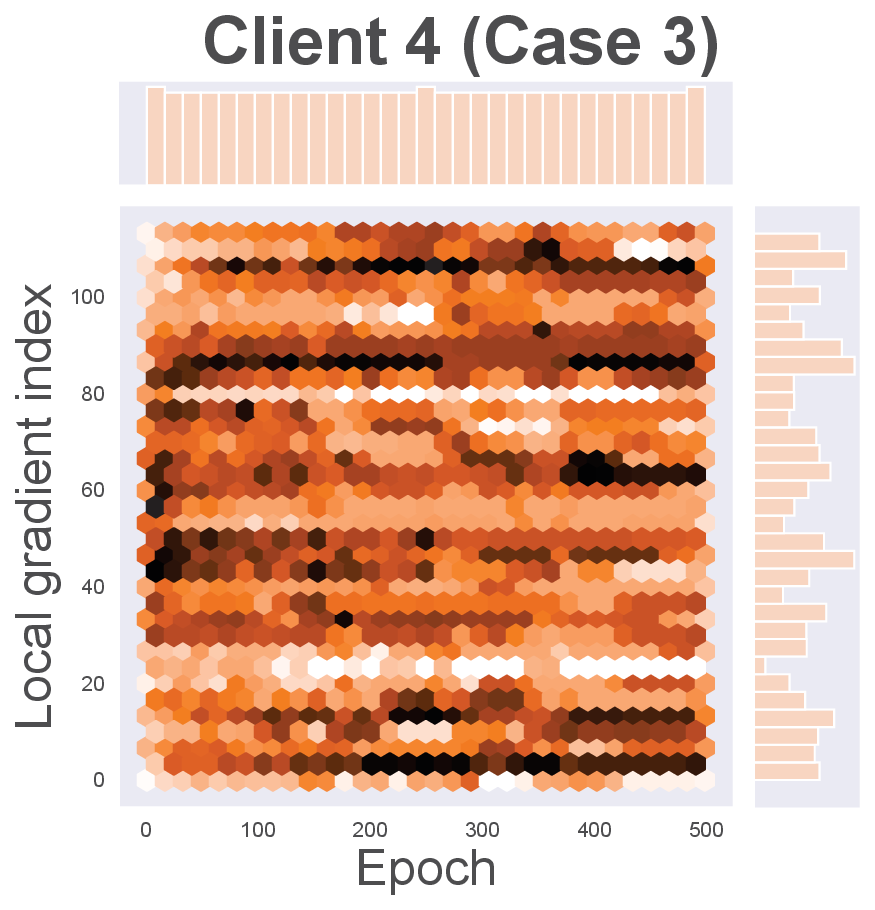}
\includegraphics[width=0.19\textwidth]{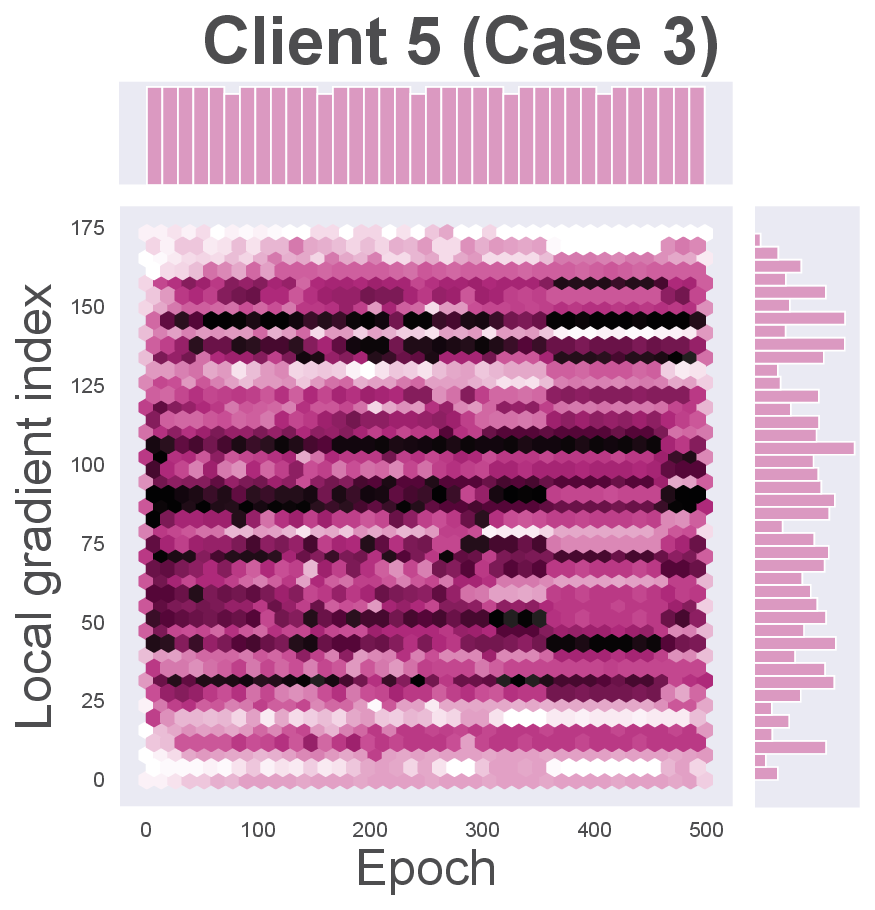}
\label{distribute_c3}}

\subfloat[]{\includegraphics[width=0.19\textwidth]{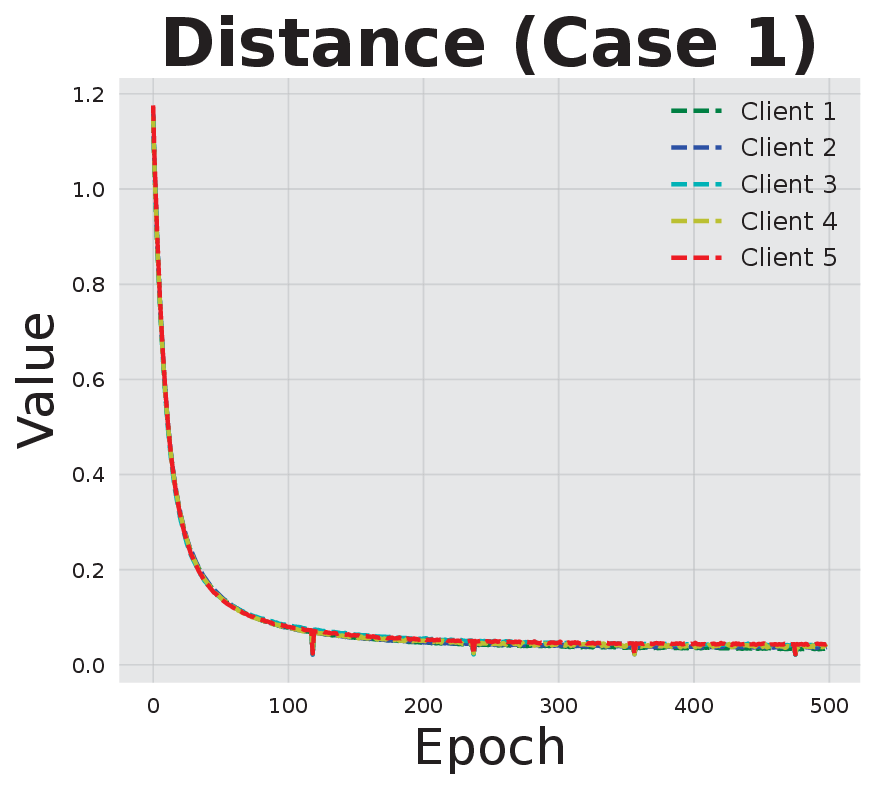}
\includegraphics[width=0.19\textwidth]{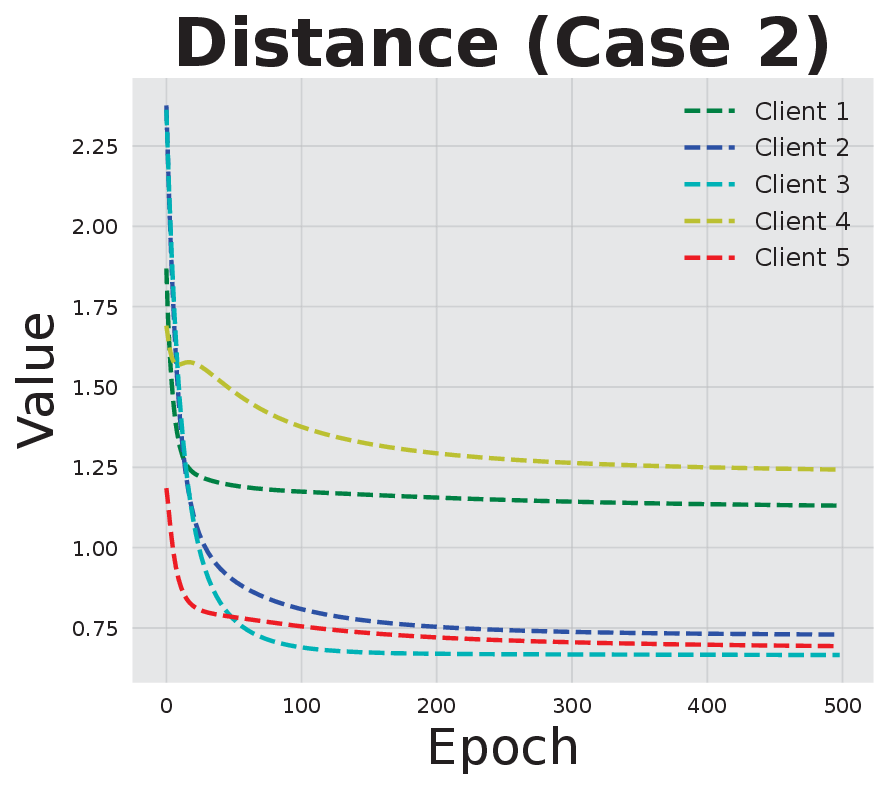}
\includegraphics[width=0.19\textwidth]{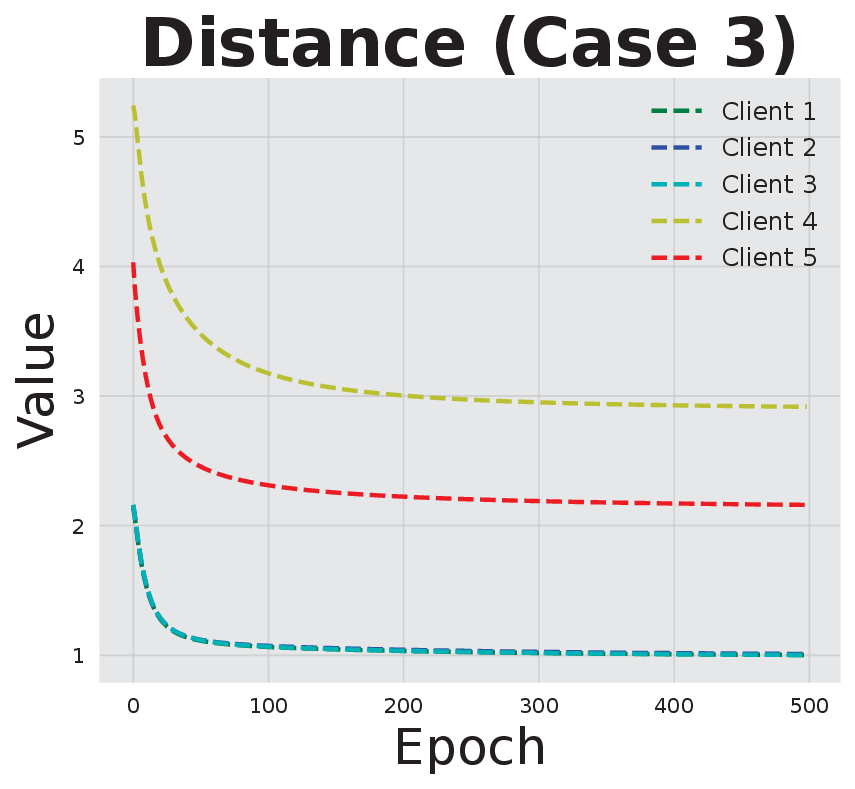}
\label{fig-distance}}
\hfil
\subfloat[]{\includegraphics[width=0.39\textwidth]{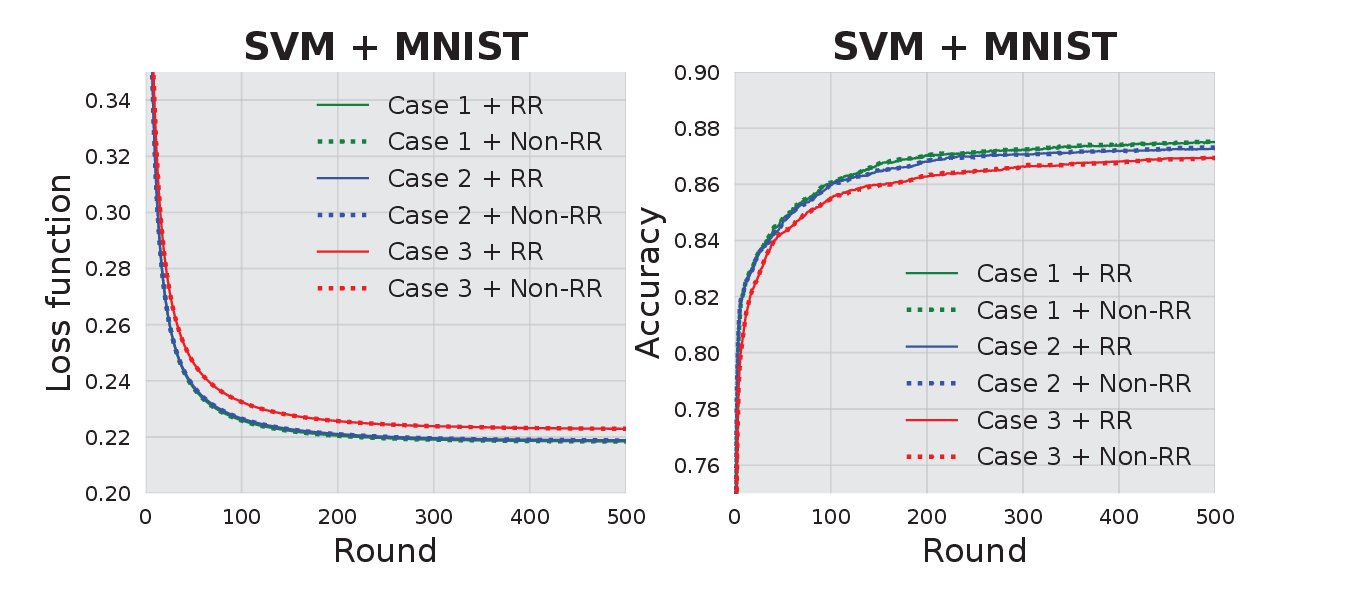}
\label{fig-RR}}

\caption{\footnotesize{
\textbf{Visualization demonstrating the validity of the gradients selected by BHerd.}
We continue to train the 'SVM+MNIST' model and dataset using the 'BHerd+FedAvg' framework with the hyperparameters previously specified; In order to more visually see the effect that the BHerd strategy produces on $\boldsymbol{g}_{(t,\sigma_i^\pi)}$, we show the results on five clients for \textbf{Case 1-3}.
(a), (b), (c) Visualization of the distribution of selected local gradients throughout the training process. We set the horizontal coordinate to be the number of epochs for which the client trains the local dataset, and the vertical coordinate to be the index of the selected local gradient; Since we set global round $T = 500$ and global epoch $E = 1$, the local epoch is equal to $ET=500$. The maximum value of the vertical coordinate is related to the number of local updates $\tau_{(t, i)}$ that we introduced in Section \ref{control}; The darker regions indicate clusters of selected local gradients.
(d) Visualization of the distance between the selected local gradient and the mean local gradient. We use  $\|\frac{1}{\alpha\tau_i}\boldsymbol{g}_{(t,\sigma_i^\pi)} - \boldsymbol{\mu}_{(t,\sigma_i)} \| $ to measure the distance between the gradient selected by our algorithm and the local average gradient; The legends for the individual curves are arranged from top to bottom as client 1-5.
(e) Random Reshuffling (RR) Protocol yields little enhancements.
The solid curves colored in green, blue, and red represent the outcomes obtained when employing the RR in \textbf{Case 1-3}, respectively, while the dashed lines denote the situations where the RR was not utilized.
}}
\label{fig-inner-logic}
\end{figure}

As shown in Figure \ref{distribute_c1}, \ref{distribute_c2} and \ref{distribute_c3}, most of the  $\boldsymbol{g}_{(t,\sigma_i^\pi)}$ on each client selected by BHerd strategy is concentrated in a fixed domain. This means that in the FL system, regardless of training IID or Non-IID local datasets, gradients with large deviations from the local average gradient $\boldsymbol{\mu}_{(t,\sigma_i)}$ always exist on each client in each round. Therefore, the BHerd strategy is to exclude these deviating gradients to select the remaining beneficial gradients, and this `benefit' is reflected in accelerating the convergence of the global model to the optimal. Meanwhile, it explains the meaning of $\alpha$: the proportion of beneficial gradients in the local gradient.

In Proposition \ref{proposition}, we replace $\boldsymbol{\mu}_{(t,\sigma_i)}$ with $\frac{1}{\alpha\tau_i}\boldsymbol{g}_{(t,\sigma_i^\pi)}$ due to the consistency of the local objective. Thus, in Figure \ref{fig-distance},  From the figure, it can be obtained that the distance values of all the clients in the corresponding \textbf{Case 1-3} are in a small range and decrease with the increase of epoch. For \textbf{Case 3}, client 1-3 are assigned half of the total dataset according to \textbf{Case 1} (IID) and thus have similar curves and the distance values remain minimal. While client 4-5 are assigned the other half of the total dataset according to \textbf{Case 2} (Non-IID) and thus have different curves and larger distance values. These experimental results show that the BHerd strategy coincides with the Proposition \ref{proposition} and is effective.

All the above analysis is based on the fact that the BHerd strategy does not use the RR protocol, so we analyze the impact of RR in the next Section.

\subsection{Impact of Random Reshuffling Protocol on System Performance}
\label{Random}
In order to validate our strategy of not using the RR protocol as mentioned in Section \ref{local update}, we do a comparative experiment on the RR protocol based on the experimental setup ($\alpha = 0.5$) in Section \ref{alpha}. The results of the loss function and classification accuracy are shown in Figure \ref{fig-RR}, we can see that in \textbf{Case 1-3}, whether or not using the RR protocol has little impact on the convergence curves and the final results. It is worth explaining that when using the RR protocol, the index of the selected gradient $\boldsymbol{g}_{(t,\sigma_i^\pi)}$ varies from epoch to epoch, making the distribution graph chaotic and detrimental to our analysis in Section \ref{distance}. Therefore, in the above experiments, we have used the Non-RR protocol to investigate the effect of other parameters in the FL system on the BHerd strategy.

\section{Discussion}
This paper proposes a BHerd strategy to improve the generalization performance of the global model by ordering and selecting the local gradients, so as to reduce the detrimental effect of Non-IID datasets on the convergence of the global model. We prove the convergence of BHerd theoretically and demonstrate its effectiveness and superiority through extensive experiments. Indeed, the BHerd strategy can be regarded as a contemporary approach to data \cite{sorscher2022beyond,qin2023infobatch,yang2022dataset} or gradient pruning \cite{lin2022fairgrape}, which has gained popularity in recent times. The same unanswered question that BHerd alongside these methods is: to what extent will the loss of sample information impact the final model? 

We can visualize in Figure \ref{fig-alpha} ($\alpha=0.1$) that when the beneficial sample information is insufficient to characterize the entire dataset, it can have disastrous consequences for the entire training process. In addition to this, although BHerd can select local gradients that are beneficial for global model convergence, the sample information discarded from this strategy has a greater impact on data-sensitive models (e.g., the CNN model in Figure \ref{lossacc:bherd-grab} and \ref{lossacc:nova-scafflod}). This effect is particularly evident in \textbf{Case 3}, where the difference between the local gradient and the global gradient is large due to the large degree of Non-IID in the data, and BHerd's measure of 'beneficial' is biased. Upon comparing the outcomes of CNN+CIFAR (\textbf{Case3}) in Figures \ref{lossacc:bherd-grab} and \ref{lossacc:nova-scafflod}, it is observed that BHerd-SCAFFLOD exhibits reduced oscillations relative to BHerd-FedAvg. As previously noted, SCAFFLOD introduces some gradient information from other clients into the local gradient of the current client thus reducing the bias between them. Consequently, future research will aim to delve into the underlying causes of this bias and identify effective strategies to ameliorate or potentially eradicate it.

An additional aspect meriting investigation pertains to the computational complexity and storage requirements of the BHerd strategy. Under the assumption that all clients execute $\tau$ epochs, the storage requisite for each local gradient is dictated by $d$ (the model dimension), while the computational overhead for a single comparison operation during sorting is denoted by $\mathcal{O}(1)$. When using the BHerd strategy, the computational complexity will increase from $\mathcal{O}(1)$ to $\mathcal{O}(\tau !)$ and the storage space will increase from $d$ to $\tau d$. While employing the GraB strategy for local gradient sorting does not escalate time complexity or storage requirements, observations from Figure \ref{lossacc:bherd-grab} indicate that the GraB-FedAvg algorithm yields marginal improvements in model convergence. In subsequent research, we will delve into the underlying causes of the shortcomings observed in the GraB strategy and endeavor to identify an efficacious ranking methodology that aligns with the principles of GraB.

It is pertinent to note, as demonstrated in Figure \ref{fig-parameter}, that our experimental investigation into the influence of individually set hyperparameters on training outcomes reveals varying degrees of impact. However, this study does not delve into the interplay among these hyperparameters. An avenue for future exploration might involve adaptively adjusting these hyperparameters in each training round to potentially expedite model convergence. 

\section{Methods}
\label{sec-preliminaries}
\subsection{Collect Local Gradients}
\label{local update} 
We assign weights to individual clients in the traditional approach (FedAvg \cite{mcmahan2017communication}),

\begin{equation}
	p_i{:}=\frac{|\mathcal{D}_i|}{|\mathcal{D}|}
\end{equation}
where $ \left| . \right| $ denotes the size of the set and $|\mathcal{D}|=\sum_{j=1}^N|\mathcal{D}_j|$ is the size of total training dataset. In the FL update rules, the training process has $ T $ ($ T \geq 1 $) communication rounds and the server has global model parameters $ \boldsymbol{w}_{t} $, where $ t = 0,1,2, ..., T $ denotes the round index. In each round, each client $ i $ training local dataset for $E$ epochs, thus has $ \tau_{i}= E\frac{|\mathcal{D}_i|}{B}$ ($ \tau_{i} \geq 1 $) local SGD iterations (number of batches)  and its local model parameters $ \boldsymbol{w}_{(t,i)}^\lambda $, where $ \lambda = 0,1,2,..., \tau_i $ denotes the local SGD iteration index. 

When FL begins ($ t=0 $), the server initializes the global model parameters $ \boldsymbol{w}_{1} $ and sends it to all clients. At $ \lambda=1 $, the local model parameters for all clients are received from the server that $\boldsymbol{w}_{(t,i)}^{\lambda=1}=\boldsymbol{w}_t$. For $ 1 \leq \lambda \leq \tau_i $, the gradients $ \nabla F_i(\boldsymbol{w}_{(t,i)}^\lambda;\boldsymbol{x}_{\sigma_i}) $ are computed according to the FedAvg local updates that
\begin{equation}
	\label{local-update}
	\boldsymbol{w}_{(t,i)}^{\lambda+1}=\boldsymbol{w}_{(t,i)}^\lambda-\eta\nabla F_i(\boldsymbol{w}_{(t,i)}^\lambda;\boldsymbol{x}_{\sigma_i}),
\end{equation}
where $\eta$ is learning rate, $ F_{i}\left(\boldsymbol{w}\right) $ is the local loss function,  $ \boldsymbol{w}_{(t,i)}^{\lambda} $ is the local model parameters and $\boldsymbol{x}_{\sigma_i}$ is the local data batch. In BHerd strategy, $\sigma_i=\{1,2,...,\tau_i\}$ denotes a permutation (ordering) of data batch $\boldsymbol{x}$, and it does not vary from epoch to epoch as if it were Random Reshuffling (RR) \cite{mishchenko2020random}. We have used experimental results in Section \ref{Random} to show that there is little difference between using and not using the RR protocol in herding strategies, and that not using the RR protocol makes it more convenient for us to analyze our results.

Next we will sort these gradients according to the idea of herding algorithm \cite{lu2022grab}.

\subsection{Herding Local Gradients}
\label{herding-local}
The herding problem originates from Welling \cite{10.1145/1553374.1553517} for sampling from a Markov random field that agrees with a given data set. Its discrete version is later formulated in Harvey and Samadi \cite{harvey2014near}. Concretely, Lu et al. \cite{lu2022general} proved that for any model parameters $\boldsymbol{x} \in \mathbb{R}^{d}$, if sums of consecutive stochastic gradients converge faster to the full gradient, then the optimizer will converge faster. Formally, in one epoch at client $i$, given $\tau_{i}$ gradients $\left\lbrace\boldsymbol{z}_{\lambda}\right\rbrace _{\lambda=1}^{\tau_i}\in \mathbb{R}^{d} $, any permutation  $\sigma_{i}^{\pi}=\{\pi_1,\pi_2,...,\pi_{\tau_{i}}\}$ minimizing the term (named average gradient error)
\begin{equation}
	\label{formulate-herding}
	\max_{k\in\{1,\ldots,\tau_i\}}\left\|\sum_{\mu=1}^{k}\left(\nabla \boldsymbol{z}_{\pi_\mu} -\frac{1}{\tau_{i}} \sum_{\lambda=1}^{\tau_i}\boldsymbol{z}_{\lambda}\right)\right\|,
\end{equation}
leads to fast convergence. For ease of representation in BHerd strategy, we define 
\begin{equation}
	\label{mu}
	\boldsymbol{\mu}_{(t,\sigma_i)}:=\frac1{\tau_i}\sum_{\lambda=1}^{\tau_i}\nabla F_i(\boldsymbol{w}_{(t,i)}^\lambda;\boldsymbol{x}_{\sigma_i})
\end{equation}
to denote the average of the gradients which we obtained in Section \ref{local update}. 

We sort the gradients collected to be close to the average gradient $\boldsymbol{\mu}_{(t,\sigma_i)}$ and pick only the top $\alpha\tau_{i}, \alpha\in\left( 0,1\right] $ (rounding when not an integer) gradients to form a new permutation. Based on these, we rewrite the Equation \eqref{local-update} as Equation \eqref{local-obj}. The purpose of Equation \eqref{local-obj} is to characterize the entire set of gradients in terms of partial of them, thus mitigating the effects of long-tailed gradients and improving the generalization ability of the model. In Appendix \ref{comparison} and \ref{sec-analysis}, we will theoretically analyze and experimentally validate this strategy.

\subsection{Global Aggregation}
\label{global-agg}
After herding and selecting the collected local gradients, we sum the gradients in the permutation $\sigma_i^{\pi}$ to get
\begin{equation}
	\label{sum-herd}
	\boldsymbol{g}_{(t,\sigma_i^\pi)}=\sum_{\lambda=1}^{\alpha\tau_i}\nabla F_i{\left(\boldsymbol{w}_{(t,i)}^\lambda;\boldsymbol{x}_{\sigma_i^\pi}\right)}.
\end{equation}
In the end of $t$-th global round, when all clients have completed this step, $\boldsymbol{g}_{(t,\sigma_i^\pi)}$ is sent to the parameter server side to complete the aggregation update step as 
\begin{equation}
	\label{global-aggregation}
	\begin{aligned}
		\boldsymbol{w}_{t+1}&=\boldsymbol{w}_{t}-\eta\frac{E}{\alpha}\sum_{i=1}^{N}p_{i}\boldsymbol{g}_{(t,\sigma_{i}^{\pi})} \\
		&=\boldsymbol{w}_t-\frac\eta\alpha\sum_{i=1}^Np_i\boldsymbol{g}_{(t,\sigma_i^\pi)}.
	\end{aligned}
\end{equation}
The last item is due to the fact that we set $E=1$ which is reasonable, i.e., we only consider the case where each client is trained for only 1 epoch, since too large or too small an epoch can lead to a bias in the global model and thus affect convergence.

\section*{Acknowledgements}
Acknowledgements are not compulsory. Where included they should be brief. Grant or contribution numbers may be acknowledged.

Please refer to Journal-level guidance for any specific requirements.

%%===========================================================================================%%
%% If you are submitting to one of the Nature Portfolio journals, using the eJP submission   %%
%% system, please include the references within the manuscript file itself. You may do this  %%
%% by copying the reference list from your .bbl file, paste it into the main manuscript .tex %%
%% file, and delete the associated \verb+\bibliography+ commands.                            %%
%%===========================================================================================%%

\bibliography{sn-bibliography}% common bib file
%% if required, the content of .bbl file can be included here once bbl is generated
%%\input sn-article.bbl

\begin{appendices}
\section{BHerd-FedAvg vs FedAvg}
\label{comparison}
For the purpose of comparison with BHerd's new herding strategy, we will introduce here the most primitive FL algorithm: the FedAvg. The logic of FedAvg with distributed SGD is presented in Algorithm \ref{fedavg-alg}, which ignores aspects related to the communication between the server and clients \cite{mcmahan2017communication}. After the previous description of the training process of FL, the FedAvg algorithm becomes clear, and can be formally described as
\begin{equation}
	\label{fedavg}
	\boldsymbol{w}_{t+1}=\boldsymbol{w}_{t}-\eta\sum_{i=1}^{N}p_{i}\sum_{\lambda=1}^{\tau_i}\nabla F_i(\boldsymbol{w}_{(t,i)}^\lambda;\boldsymbol{x}_{\sigma_i}),
\end{equation}
where it can be seen as a special case of our herding strategy in the case of $E=1$ and $\alpha=1$. According the definition of Equation \eqref{mu}, the Equation \eqref{fedavg} can also be represented as $\boldsymbol{w}_{t+1}=\boldsymbol{w}_{t}-\eta\sum_{i=1}^{N}p_{i}\tau_i\boldsymbol{\mu}_{(t,\sigma_i)}$. Compare to FedAvg, a natural research question is

\textit{How does our new herding strategy works and is it more effective than FedAvg?}
\begin{proposition}
	\label{proposition}
	According to our definition of global aggregation for the new herding strategy, its gradient aggregation is equivalent to model parameter aggregation, that is
	\begin{equation}
		\label{parameter-agg}
		\boldsymbol{w}_{t+1}=\sum_{i=1}^Np_i\boldsymbol{w}_{(t,i)}^{\tau_i+1}
	\end{equation}
\end{proposition}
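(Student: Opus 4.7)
The plan is to derive Equation \eqref{parameter-agg} directly from the global update rule \eqref{global-aggregation} in three steps: invoke the herding consistency $\frac{1}{\alpha\tau_i}\boldsymbol{g}_{(t,\sigma_i^\pi)}=\boldsymbol{\mu}_{(t,\sigma_i)}$ flagged by the authors just before the proposition, telescope the local SGD recursion \eqref{local-update} into a closed form for $\boldsymbol{w}_{(t,i)}^{\tau_i+1}$, and then recognize the right-hand side as a convex combination using $\sum_{i=1}^{N}p_i=1$.

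First I would start from $\boldsymbol{w}_{t+1}=\boldsymbol{w}_{t}-\frac{\eta}{\alpha}\sum_{i=1}^{N}p_{i}\boldsymbol{g}_{(t,\sigma_{i}^{\pi})}$ and, using the herding consistency together with the definition \eqref{mu} of $\boldsymbol{\mu}_{(t,\sigma_i)}$, rewrite $\frac{1}{\alpha}\boldsymbol{g}_{(t,\sigma_i^\pi)}=\tau_i\boldsymbol{\mu}_{(t,\sigma_i)}=\sum_{\lambda=1}^{\tau_i}\nabla F_i(\boldsymbol{w}_{(t,i)}^\lambda;\boldsymbol{x}_{\sigma_i})$. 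Second, I would telescope \eqref{local-update} from $\lambda=1$ to $\lambda=\tau_i$ using the initialization $\boldsymbol{w}_{(t,i)}^{1}=\boldsymbol{w}_t$ to obtain $\boldsymbol{w}_{(t,i)}^{\tau_i+1}=\boldsymbol{w}_t-\eta\sum_{\lambda=1}^{\tau_i}\nabla F_i(\boldsymbol{w}_{(t,i)}^\lambda;\boldsymbol{x}_{\sigma_i})$, so that $\eta\sum_{\lambda=1}^{\tau_i}\nabla F_i(\boldsymbol{w}_{(t,i)}^\lambda;\boldsymbol{x}_{\sigma_i})=\boldsymbol{w}_t-\boldsymbol{w}_{(t,i)}^{\tau_i+1}$. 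Substituting this identity into the global update and writing $\boldsymbol{w}_t=\sum_{i=1}^{N}p_i\boldsymbol{w}_t$ lets all the $\boldsymbol{w}_t$ terms cancel, leaving $\boldsymbol{w}_{t+1}=\sum_{i=1}^{N}p_i\boldsymbol{w}_{(t,i)}^{\tau_i+1}$, which is exactly \eqref{parameter-agg}.

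The only nontrivial step is the first substitution: the identity $\frac{1}{\alpha\tau_i}\boldsymbol{g}_{(t,\sigma_i^\pi)}=\boldsymbol{\mu}_{(t,\sigma_i)}$ is not a literal equality when $\alpha<1$, but is rather the defining approximation enforced by the herding objective \eqref{local-obj}, which selects the permutation $\sigma_i^\pi$ precisely to minimize the discrepancy between the length-$\alpha\tau_i$ prefix sum and $\alpha\tau_i\boldsymbol{\mu}_{(t,\sigma_i)}$. I would therefore present the substitution as a consistency-of-the-local-objective reformulation (as the authors do in Section \ref{distance}), and note that Figure \ref{fig-distance} provides empirical support by showing that the residual $\|\frac{1}{\alpha\tau_i}\boldsymbol{g}_{(t,\sigma_i^\pi)}-\boldsymbol{\mu}_{(t,\sigma_i)}\|$ is small and decaying with $t$. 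Everything else is routine telescoping and convex-combination algebra.
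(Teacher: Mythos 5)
Your proposal is correct and follows essentially the same route as the paper's own proof in Appendix \ref{appendix1}: identify $\frac{1}{\alpha\tau_i}\boldsymbol{g}_{(t,\sigma_i^\pi)}$ with $\boldsymbol{\mu}_{(t,\sigma_i)}$ via the rewritten herding objective, substitute into the global update \eqref{global-aggregation}, telescope \eqref{local-update} from $\boldsymbol{w}_{(t,i)}^{1}=\boldsymbol{w}_t$, and use $\sum_{i=1}^N p_i=1$. You are in fact more candid than the paper about the one delicate point, namely that this identification is only the minimizer of \eqref{local-obj} rather than an exact equality when $\alpha<1$, which the paper's proof silently treats as an equality.
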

\begin{proof}
	For details, see Appendix \ref{appendix1}.
\end{proof}
Equation \eqref{parameter-agg} is another representation of FedAvg's global aggregation, which performs global model parameter updating by aggregating the local model parameters obtained after local updating. Thus we can answer the first half of the question posed: BHerd strategy works in line with FedAvg and is an extension of it. 

For the second half of the problem, we will give the convergence analysis in Section 4 and the experimental validity analysis in Section 5.In this section of the paper, we first give the abstract analysis in a rough way.

We first simplify all the operations of BHerd strategy to the two-dimensional plane, i.e., model parameters $\boldsymbol{w} \in \mathbb{R}^{2}$, the local gradient $\nabla F_i(\boldsymbol{w}) \in \mathbb{R}^{2}$ and the global gradient $\nabla F(\boldsymbol{w})\in \mathbb{R}^{2}$. Then make an abstraction of their variations according to the training process we defined above. We assume that there are two clients, one with 6 local update iterations ($\tau_{i}=6$) and the other with 4 local update iterations ($\tau_i=4$). A permutation of the local gradient ${\nabla F_i(\boldsymbol{w}_{(t,i)}^\lambda;\boldsymbol{x}_{\sigma_i})}$ is obtained on each client $i$ in the order of $\sigma_i $. It is then re-ordered using the herding policy and the top $\alpha\tau_i$ elements are selected. Up to this point, we can intuitively see that BHerd herding strategy is able to characterize all the gradient information with a portion of the local gradient. This has the advantage of reducing the impact of the biased gradient ${\nabla F_i(\boldsymbol{w}_{(t,i)}^\lambda;\boldsymbol{x}_{\sigma_i-\sigma_i^\pi})}$, which allows us to increase the step size ($\frac{1}{\alpha}$) global gradient to accelerate the convergence. This conclusion can also be easily generalized to $\mathbb{R}^{d}$-space. 

In the next section, we will apply the proposed herding strategy and the new FL training procedure to design the algorithm and formally analyze its convergence.
\section{BHerd strategy}
\label{sec-analysis}
In this section, based on the definitions and FL update rule of the new herding strategy obtained in Section \ref{sec-preliminaries}, we first designed the ordering algorithm and the FL update algorithm. Next, we analyze the convergence of the BHerd strategy and get a convergence upper bound in the non-convex hypothesis.

\subsection{FL with Herding}
\label{algorithm}
The greedy ordering algorithm for herding can be used to minimize Equation \eqref{local-obj} that find a better data permutation $ \sigma_i^{\pi} $, and is shown in Algorithm \ref{herding-order} which starts with a group of gradients $\{\boldsymbol{z}_{\lambda}\}_{\lambda=1}^{\tau}$. After finding the average gradient $\frac{1}{\tau}\sum_{\mu=1}^{\tau}\boldsymbol{z}_{\mu}$, all the gradients subtract this average gradient to get the deviations, which are assigned as the new gradients to $\{\boldsymbol{z}_{\lambda}\}_{\lambda=1}^{\tau}$ (Line 1 of Algorithm \ref{herding-order}). 

To obtain the new order $\sigma_i^\pi{:}\{\pi_1,\pi_2,\ldots,\pi_{\alpha\tau_i}\}$, Algorithm \ref{herding-order} needs to select $\alpha\tau_i$ gradients from the set $\{\boldsymbol{z}_{\lambda}\}_{\lambda=1}^{\tau}$ without repetition, aiming to minimize the value of $\left\|\boldsymbol{z}_{\pi_1}+\cdots+\boldsymbol{z}_{\pi_{\alpha\tau_i}}\right\|$. The specific approach is as follows: at each iteration, a single gradient $\boldsymbol{z}_{\mu}$ is chosen to minimize the $\|\boldsymbol{s}+\boldsymbol{z}_{\mu}\|$ value, where $\boldsymbol{s}$ represents the cumulative of the selected gradients (Line 4 and 5 of Algorithm \ref{herding-order}). After selecting a gradient $\boldsymbol{z}_{\mu}$, it is removed from the gradient set $\{\boldsymbol{z}_{\lambda}\}_{\lambda=1}^{\tau}$, and the order $\pi$ is reordered according to the selection index $\mu$ Line 5 of Algorithm \ref{herding-order}.

Based on the herding strategy in Algorithm \ref{herding-order}, we can clearly get the framework of FL under the new update rule: when training the local model for an epoch, we herd all the local gradients in the current global round and take the first a items of them to form a new permutation, and we send all the local gradients in this new permutation to the parameter server for global update. This herding-gradient approach is formally described in Algorithm \ref{fed-herding}.

Algorithm \ref{fed-herding} can be considered as adding the herding operation (Algorithm \ref{herding-order}) of the local gradients to the FL framework (Algorithm \ref{fedavg-alg}), where the local gradient $ \nabla F_i(\boldsymbol{w}_{(t,i)}^\lambda;\boldsymbol{x}_{\sigma_i}) $ corresponds to the vector $\{\boldsymbol{z}_{\lambda}\}_{\lambda=1}^{\tau}$ in Algorithm \ref{herding-order}. After obtaining a new permutation $ \sigma_i^{\pi} $, we can compute the corresponding $\boldsymbol{g}_{(t,\sigma_i^\pi)}$ by using Equation \eqref{sum-herd}, and finally use the new aggregation rule Equation \eqref{global-aggregation} to update the global model (Line 7, 8 and 10 in Algorithm \ref{fed-herding}).
\subsection{Convergence Analysis}
In FL, for each global model $ \boldsymbol{w} $ there is its corresponding loss function $ F(\boldsymbol{w}) $. The objective of learning is to find optimal global model $ \boldsymbol{w}^{*} $ to minimize $ F(\boldsymbol{w}) $, which is formally described as
\begin{equation}
	\boldsymbol{w^{*}} = \arg\min F(\boldsymbol{w})
\end{equation}
For the global loss function  $ \boldsymbol{w} $, it is also necessary to define its global gradient. The following will describe how we obtain this global gradient in FL.
\begin{definition}
	\label{def}
	At each client $i$ in round $t$, we train the global model parameter $\boldsymbol{w}_{t}$ with the local datasets, with the difference that this model parameter will not be involved in the local update of Equation \eqref{local-update}, but will remain fixed at round $t$ and across all clients.
	$$\nabla F_i(\boldsymbol{w}_t):=\frac1{\tau_i}\sum_{\lambda=1}^{\tau_i}\nabla F_i(\boldsymbol{w}_t;\boldsymbol{x}_{\sigma_i})$$
	Then weighted sum $\nabla F_i(\boldsymbol{w}_t)$ at each client to obtain the global gradient $\nabla F(\boldsymbol{w}_t)$.
	$$\nabla F(\boldsymbol{w}_t):=\sum_{i=1}^Np_i\nabla F_i(\boldsymbol{w}_t)$$
\end{definition}

For further analysis, we use some assumptions for non-convex optimization.
\begin{assumption}[Lipschitz smooth.]
	\label{L-smooth}
	For any $w,v\in\mathbb{R}^{d}$, it holds that
	$$\|\nabla F(\boldsymbol{w})-\nabla F(\boldsymbol{v})\|\leq L\|\boldsymbol{w}-\boldsymbol{v}\|$$
\end{assumption}

\begin{assumption}[Differences in model parameters]
\label{model diff}
    In ground $t\in[1,T]$, if we defined $\Delta_{t}:=\max_{i,\lambda}\|\boldsymbol{w}_{t}-\boldsymbol{w}_{(t,i)}^{\lambda}\| $ for all client $i\in[1,N]$ and local iteration $\lambda\in[2,\tau_i+1]$, it exist $\beta \in (0,1]$ that
    $$\beta\Delta_{t}\leq\|\boldsymbol{w}_{t}-\boldsymbol{w}_{t+1}\|$$
    
\end{assumption}

\begin{assumption}[Bounded gradient error.]
	\label{bound-error}
	For any $t\in[1,T]$ and $i\in[1,N]$, it holds that
	$$\frac{1}{\tau_{i}}\sum_{\lambda=1}^{\tau_{i}}\left\|\left(\nabla F_{i}(\boldsymbol{w}_{t};\boldsymbol{x}_{\sigma_{i}})-\nabla F_{i}(\boldsymbol{w}_{t})\right)\right\|^{2}\leq\varsigma^{2}$$
	
\end{assumption}

\begin{assumption}[PL condition.]
	\label{PL}
	We say the loss function $F$ fulfills the Polyak-Lojasiewicz (PL) condition if there exists $\gamma > 0$ such that for any $\boldsymbol{w}_t \in \mathbb{R}^{d}$,
	$$\frac12\|\nabla F(\boldsymbol{w})\|^2\geq\gamma\big(F(\boldsymbol{w})-F(\boldsymbol{w}^*)\big),$$
	where $F(\boldsymbol{w}^*)=\inf_{\boldsymbol{v}\in\mathbb{R}^d}F(\boldsymbol{v})$.
	
\end{assumption}
The symbol of $ \left\|.\right\| $ denotes the $L$ norm of vector, and we use the default $L_2$ norm in this paper. Assumptions \ref{L-smooth}, \ref{model diff} and \ref{bound-error} are commonly used assumptions in the study of SGD. Assumptions \ref{L-smooth} measure smooth upper bounds on the global and local loss functions. Assumption \ref{bound-error} expresses the maximum value of the difference between the gradient set $\nabla F_i(\boldsymbol{w}_t;\boldsymbol{x}_{\sigma_i})$ and its average gradient $\nabla F_i(\boldsymbol{w}_t)$ in Definition \ref{def}.

Before we analyze the upper bound on convergence, we introduce some of the following Lemmas.
\begin{lemma}
	\label{converge-bound}
	We denote the maximal number of local iterations for all clients by $\tau_{max}:=\max_{i\in[1,N]}\tau_{i}$. If $\alpha\eta L \tau_{max}\leq1$ with $ \alpha \in (0,1] $ holds and Assumption \ref{L-smooth} and \ref{model diff} hold, then
	$$\frac1T\sum_{t=1}^T\|\nabla F(\boldsymbol{w}_t)\|^2\leq\frac{2\left(F(\boldsymbol{w}_1)-F(\boldsymbol{w}^*)\right)}{\alpha\eta\tau_{max} T}+\frac{ L ^2}T\sum_{t=1}^T\Delta_t^2$$
\end{lemma}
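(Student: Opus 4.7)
The plan is to derive a per-round descent inequality of the form
$$F(\boldsymbol{w}_{t+1}) \le F(\boldsymbol{w}_t) - \tfrac{\alpha\eta\tau_{\max}}{2}\|\nabla F(\boldsymbol{w}_t)\|^2 + \tfrac{\alpha\eta\tau_{\max}}{2} L^2\Delta_t^2,$$
then telescope it for $t=1,\ldots,T$, use $F(\boldsymbol{w}_{T+1})\ge F(\boldsymbol{w}^{*})$, and divide through by $\tfrac{1}{2}\alpha\eta\tau_{\max}T$ to obtain exactly the claim.

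\textbf{Derivation of the one-step descent.} Set $\boldsymbol{h}_t := \boldsymbol{w}_t - \boldsymbol{w}_{t+1} = (\eta/\alpha)\sum_i p_i \boldsymbol{g}_{(t,\sigma_i^\pi)}$ from \eqref{global-aggregation} and $s := \alpha\eta\tau_{\max}$. Applying $L$-smoothness (Assumption \ref{L-smooth}) gives
$$F(\boldsymbol{w}_{t+1}) \le F(\boldsymbol{w}_t) - \langle \nabla F(\boldsymbol{w}_t), \boldsymbol{h}_t\rangle + \tfrac{L}{2}\|\boldsymbol{h}_t\|^2.$$
The polarization identity $-\langle a,b\rangle = \tfrac{s}{2}\|a-b/s\|^2 - \tfrac{s}{2}\|a\|^2 - \tfrac{1}{2s}\|b\|^2$ applied to the cross term then yields
$$F(\boldsymbol{w}_{t+1}) \le F(\boldsymbol{w}_t) - \tfrac{s}{2}\|\nabla F(\boldsymbol{w}_t)\|^2 + \left(\tfrac{L}{2} - \tfrac{1}{2s}\right)\|\boldsymbol{h}_t\|^2 + \tfrac{s}{2}\bigl\|\nabla F(\boldsymbol{w}_t) - \boldsymbol{h}_t/s\bigr\|^2.$$
The step-size hypothesis $\alpha\eta L\tau_{\max}\le 1$ is precisely $L\le 1/s$, so the middle coefficient is non-positive and that term is discarded.

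\textbf{Bounding the bias.} The remaining task is to verify the key estimate
$$\bigl\|\nabla F(\boldsymbol{w}_t) - \boldsymbol{h}_t/s\bigr\|^2 \le L^2\Delta_t^2.$$
I would expand $\nabla F(\boldsymbol{w}_t)$ through Definition \ref{def} and $\boldsymbol{h}_t/s$ through \eqref{sum-herd}, and rewrite the difference as a weighted average of local drift terms $\nabla F_i(\boldsymbol{w}_{(t,i)}^\lambda;\boldsymbol{x}_{\sigma_i^\pi}) - \nabla F_i(\boldsymbol{w}_t;\boldsymbol{x}_{\sigma_i^\pi})$, each of which Lipschitzness bounds by $L\|\boldsymbol{w}_{(t,i)}^\lambda - \boldsymbol{w}_t\| \le L\Delta_t$. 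Jensen's inequality combined with $\sum_i p_i = 1$ collapses the resulting double sum over $i$ and $\lambda$ into a single $L^2\Delta_t^2$. Assumption \ref{model diff} enters here as the hinge that aligns the scale of $\boldsymbol{h}_t/s$ with $\Delta_t$, so that the Lipschitz drift estimate packages into the final $L^2\Delta_t^2$ without leaving a $\beta^{-1}$ in the bound. Substituting this bias bound into the previous display and telescoping gives the lemma.

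\textbf{Main obstacle.} The crux is exactly this bias-control step. Because the herded aggregate $\boldsymbol{g}_{(t,\sigma_i^\pi)}$ sums only the $\alpha\tau_i$ selected local gradients out of the available $\tau_i$, and the client lengths may be heterogeneous ($\tau_i\le\tau_{\max}$), the normalization constant that converts $\boldsymbol{h}_t/s$ into something directly comparable to $\nabla F(\boldsymbol{w}_t)$ is not an obvious identity. Reconciling the $\alpha$, $\tau_i$, and $\tau_{\max}$ factors while eliminating any spurious $1/\alpha$ or $1/\beta$ from the final bound, using only $L$-smoothness together with Assumption \ref{model diff}, is where the careful bookkeeping has to happen; once that clean $L^2\Delta_t^2$ estimate is in hand the rest of the argument is routine telescoping.
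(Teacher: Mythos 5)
Your skeleton---the $L$-smoothness descent inequality, the polarization identity $-\langle a,b\rangle=-\tfrac12\|a\|^2-\tfrac12\|b\|^2+\tfrac12\|a-b\|^2$ applied at scale $\alpha\eta\tau_{\max}$, discarding the quadratic term via $\alpha\eta L\tau_{\max}\le 1$, then telescoping and dividing---is exactly the paper's route. The genuine gap sits precisely in the step you flag as the ``main obstacle'': the bound $\bigl\|\nabla F_i(\boldsymbol{w}_t)-\tfrac{1}{\alpha\tau_i}\boldsymbol{g}_{(t,\sigma_i^\pi)}\bigr\|^2\le L^2\Delta_t^2$ cannot be obtained from Lipschitz drift and Jensen alone. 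By Definition \ref{def}, $\nabla F_i(\boldsymbol{w}_t)$ averages the gradient at $\boldsymbol{w}_t$ over \emph{all} $\tau_i$ batches, whereas $\tfrac{1}{\alpha\tau_i}\boldsymbol{g}_{(t,\sigma_i^\pi)}$ in \eqref{sum-herd} averages over only the $\alpha\tau_i$ \emph{selected} batches. Lipschitzness moves each selected gradient from $\boldsymbol{w}_{(t,i)}^\lambda$ back to $\boldsymbol{w}_t$ at cost $L\Delta_t$, but you are then left comparing a subset average of $\{\nabla F_i(\boldsymbol{w}_t;\boldsymbol{x}_{\sigma_i})\}_{\lambda}$ against the full average over $\tau_i$ batches---a batch-heterogeneity residual that is controlled neither by $L$ nor by $\Delta_t$, and that no reconciliation of the $\alpha$, $\tau_i$, $\tau_{\max}$ factors will eliminate.

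The missing idea is the selection rule itself. The paper closes this hole by invoking the herding objective (Equation \eqref{re-local-obj}, equivalently \eqref{local-obj}): since $\sigma_i^\pi$ is chosen to minimize $\|\tfrac{1}{\alpha\tau_i}\boldsymbol{g}_{(t,\sigma_i^\pi)}-\boldsymbol{\mu}_{(t,\sigma_i)}\|$, the proof simply replaces $\tfrac{1}{\alpha\tau_i}\boldsymbol{g}_{(t,\sigma_i^\pi)}$ by the full-epoch average $\boldsymbol{\mu}_{(t,\sigma_i)}$ (treating the herding error as negligible). After that substitution the comparison is between two averages over the \emph{same} $\tau_i$ batches, pairing term by term, and the Lipschitz/Jensen argument you describe does yield $L^2\Delta_t^2$. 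Two further points: Assumption \ref{model diff} contributes nothing to this lemma beyond the definition of $\Delta_t$ (the $\beta$ inequality is only used in Lemma \ref{delta-bound}), so it is not the ``hinge'' you describe; and your normalization $\boldsymbol{h}_t/s=\tfrac{1}{\alpha^2\tau_{\max}}\sum_i p_i\boldsymbol{g}_{(t,\sigma_i^\pi)}$ does not match the quantity $\sum_i p_i\tfrac{1}{\alpha\tau_i}\boldsymbol{g}_{(t,\sigma_i^\pi)}$ that the per-client Jensen decomposition requires, so the client-by-client rescaling must be done before the polarization step, as in the paper's second display.
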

\begin{proof}
	For details, see Appendix \ref{appendix2}. 
\end{proof}
In Lemma \ref{converge-bound}, with both the dataset and the model structure already determined, the loss value of its optimal model parameter $\boldsymbol{w}^*$ is uniquely determined. This means that $F(\boldsymbol{w}_1)-F(\boldsymbol{w}^*)$ is fixed and the magnitude of the value is determined by the initialized model parameter $\boldsymbol{w}_1$. Parameters $\alpha$, $\eta$, and $T$ are manually set fixed values. Parameter $\tau_{max}$ is a fixed value only with respect to the total dataset size $|\mathcal{D}|$ and the number of clients $N$. Parameter $ L $ is the smooth upper bound that we defined in Assumption \ref{L-smooth}, and is also fixed in the actual FL training.

Next, Lemma \ref{delta-bound} will prove the relationship between the upper bound of $\sum_{t=1}^T\Delta_t^2$ and the global gradient $\nabla F(\boldsymbol{w}_t)$.
\begin{lemma}
	\label{delta-bound}
		Under Assumptions \ref{L-smooth}, \ref{model diff} and \ref{bound-error}, if the learning rate $\eta$ fulfills $\eta\leq\frac{\sqrt{3}\beta}{6 L\tau_{max}}$ with $t\in[1,T-1]$, then the following inequalities hold:
	$$\Delta_{t+1}^{2}\leq\frac12{\beta^{2}}\eta^{2}\tau_{\max}^{2}\varsigma^{2}+\frac12{\beta^{2}}\eta^{2}\tau_{\max}^{2}\|\nabla F(\boldsymbol{w}_{t+1})\|^{2}$$
    and,
	$$\Delta_1^2\leq\frac6{\beta^2}\eta^2\tau_{\mathrm{max}}^2$$
    and finally,
	$$\sum_{t=1}^T\Delta_t^2\leq\frac{12}{\beta^{2}}(T-1)\eta^{2}\tau_{max}^{2}\left(\varsigma^{2}+\sum_{t=1}^{T-1}\parallel\nabla F(\boldsymbol{w}_{t})\parallel^{2}\right)+\frac{6}{\beta^{2}}\eta^{2}\tau_{max}^{2}\varsigma^{2}$$
	
\end{lemma}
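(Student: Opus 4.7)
The plan is to bound $\Delta_{t+1}$ through Assumption \ref{model diff}, unfold it into a sum of stochastic local gradients via the global aggregation rule \eqref{global-aggregation} together with \eqref{sum-herd}, then split each local stochastic gradient into three pieces whose norms can be controlled by the three standing assumptions; the step-size condition will be used precisely to absorb a self-referential $\Delta_{t+1}^2$ term that emerges on the right-hand side.

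Concretely, first I would apply Assumption \ref{model diff} at the $(t+1)$-th round, which gives $\Delta_{t+1}\le\beta^{-1}\|\boldsymbol{w}_{t+1}-\boldsymbol{w}_{t+2}\|$, and substitute the global aggregation identity to rewrite the right-hand side as $(\eta/(\alpha\beta))\,\bigl\|\sum_{i}p_i\sum_{\lambda=1}^{\alpha\tau_i}\nabla F_i(\boldsymbol{w}_{(t+1,i)}^{\lambda};\boldsymbol{x}_{\sigma_i^\pi})\bigr\|$. Next, inside each local stochastic gradient I would insert and subtract $\nabla F_i(\boldsymbol{w}_{t+1};\boldsymbol{x}_{\sigma_i^\pi})$ and $\nabla F_i(\boldsymbol{w}_{t+1})$, producing three pieces: a Lipschitz piece bounded by $L\|\boldsymbol{w}_{(t+1,i)}^{\lambda}-\boldsymbol{w}_{t+1}\|\le L\Delta_{t+1}$ via Assumption \ref{L-smooth}, a sampling-noise piece whose squared sum over $\lambda$ is bounded by $\tau_i\varsigma^2$ via Assumption \ref{bound-error} (and the selection $\sigma_i^\pi\subseteq\sigma_i$), and a deterministic piece that aggregates through the weights $p_i$ into $\nabla F(\boldsymbol{w}_{t+1})$ as in Definition \ref{def}. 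Applying $\|a+b+c\|^2\le 3(\|a\|^2+\|b\|^2+\|c\|^2)$ together with Jensen and Cauchy--Schwarz across the $i$ and $\lambda$ sums yields an inequality of the form
\begin{equation*}
\Delta_{t+1}^2 \;\le\; \tfrac{c_1\,\eta^2\tau_{max}^2}{\beta^2}\bigl(L^2\Delta_{t+1}^2+\varsigma^2+\|\nabla F(\boldsymbol{w}_{t+1})\|^2\bigr)
\end{equation*}
with an explicit absolute constant $c_1$. The hypothesis $\eta\le\sqrt{3}\beta/(6L\tau_{max})$ is exactly what is needed to force $c_1L^2\eta^2\tau_{max}^2/\beta^2\le 1/2$, so the offending $\Delta_{t+1}^2$ on the right is absorbed into the left and one recovers the first inequality.

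The base case $\Delta_1^2$ admits the same decomposition at $t=1$, except that there is no useful a priori bound on $\|\nabla F(\boldsymbol{w}_1)\|^2$; I would handle it by combining the gradient and variance contributions with the same step-size condition and the three-way split, which produces a pure constant bound of the stated form $(6/\beta^2)\eta^2\tau_{max}^2$. The summed inequality then follows by telescoping: adding $\Delta_1^2$ to the $(T-1)$-fold sum of the per-round bound, and using the (deliberately loose) overestimate $\sum_{t=2}^T(\varsigma^2+\|\nabla F(\boldsymbol{w}_t)\|^2)\le (T-1)\bigl(\varsigma^2+\sum_{t=1}^{T-1}\|\nabla F(\boldsymbol{w}_t)\|^2\bigr)$, which matches the structure of the stated third inequality.

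The principal obstacle is the self-reference of $\Delta_{t+1}$ on the right-hand side through the Lipschitz term: making the learning-rate restriction tight enough to close this recursion without over-shrinking $\eta$ requires careful tracking of all constants emerging from the Young, Jensen, and Cauchy--Schwarz steps, and this is exactly where the factor $\sqrt{3}/6$ in the step-size hypothesis is calibrated. A secondary technical nuisance is that the aggregated deterministic piece must collapse cleanly to $\nabla F(\boldsymbol{w}_{t+1})$; if the $\alpha\tau_i$ differ substantially across clients, one needs an extra application of Jensen on the composite weights $p_i\alpha\tau_i$ (or a near-uniformity argument on $\tau_i$) to reduce $\sum_i p_i\alpha\tau_i\nabla F_i(\boldsymbol{w}_{t+1})$ to a scaled copy of $\nabla F(\boldsymbol{w}_{t+1})$ without losing the stated constants.
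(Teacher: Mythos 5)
Your overall architecture (bound $\Delta$ by the round-to-round displacement via Assumption \ref{model diff}, split the displacement into a Lipschitz piece, a sampling-noise piece, and a deterministic-gradient piece, and use the step-size condition to absorb the self-referential term) is in the right family, but it diverges from the paper's proof in a way that opens two genuine gaps. The paper does \emph{not} work with the truncated update $\frac{\eta}{\alpha}\sum_i p_i\sum_{\lambda=1}^{\alpha\tau_i}\nabla F_i(\cdot;\boldsymbol{x}_{\sigma_i^\pi})$ at all: it first invokes Proposition \ref{proposition} to rewrite $\boldsymbol{w}_{t+1}-\boldsymbol{w}_t$ exactly as $-\eta\sum_i p_i\sum_{\lambda=1}^{\tau_i}\nabla F_i(\boldsymbol{w}_{(t,i)}^\lambda;\boldsymbol{x}_{\sigma_i})$, so that $\alpha$ and the selected subset $\sigma_i^\pi$ disappear from the analysis entirely. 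In your version the noise piece, after Cauchy--Schwarz over the $\alpha\tau_i$ selected terms with the $1/\alpha$ prefactor, acquires a residual $1/\alpha$ factor that the stated bound does not contain; more seriously, your deterministic piece aggregates to $\eta\sum_i p_i\tau_i\nabla F_i(\boldsymbol{w}_{t+1})$, and there is no way to upper-bound its norm by a multiple of $\|\nabla F(\boldsymbol{w}_{t+1})\|$ from the stated assumptions --- Jensen gives $\sum_i p_i\|\nabla F_i\|^2\geq\|\sum_i p_i\nabla F_i\|^2$, i.e.\ the wrong direction, and the $\tau_i$ are explicitly \emph{not} uniform since $\tau_i=E|\mathcal{D}_i|/B$ varies with the client. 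The paper sidesteps this by adding and subtracting the \emph{global} gradient at the \emph{previous} iterate, $\eta\sum_i p_i\tau_i\nabla F(\boldsymbol{w}_{t-1})$, and pushing the heterogeneity term $\nabla F_i(\boldsymbol{w}_t)-\nabla F(\boldsymbol{w}_{t-1})$ into a Lipschitz bound controlled by $\Delta_{t-1}$.

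That choice of reference point is also what makes the base case work. The paper's estimate is a backward recursion, $\beta^2\Delta_{t+1}^2\leq 4\eta^2\tau_{max}^2\varsigma^2+4\eta^2L^2\tau_{max}^2\left(\Delta_t^2+\Delta_{t+1}^2\right)+4\eta^2\tau_{max}^2\|\nabla F(\boldsymbol{w}_t)\|^2$, which is then unrolled as a geometric series (whence the constants $12/\beta^2$ and $6/\beta^2$); at the base case the backward quantities $\Delta_0$ and $\nabla F(\boldsymbol{w}_0)$ are vacuous, yielding the pure-constant bound on $\Delta_1^2$. Your one-shot, forward-looking bound necessarily leaves $\|\nabla F(\boldsymbol{w}_1)\|^2$ on the right-hand side when applied to $\Delta_1$, and your stated plan to ``combine the gradient and variance contributions'' into a pure constant there has no mechanism behind it. (For what it is worth, the displayed constants $\frac12\beta^2$ in the first inequality and the missing $\varsigma^2$ in the $\Delta_1^2$ bound are typos for $\frac{12}{\beta^2}$ and $\frac{6}{\beta^2}\eta^2\tau_{max}^2\varsigma^2$, as the appendix proof shows, so you should not calibrate your constants against the lemma as printed.)
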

\begin{proof}
	For details, see Appendix \ref{appendix2}. 
\end{proof}

According to the definition of $ \varsigma $ in Assumption \ref{bound-error}, we can always find a suitable fixed value of $ \varsigma $ in FL training. Then, it is only sufficient to carry over the result in Lemma \ref{delta-bound} to Lemma \ref{converge-bound}, which leads to Theorem \ref{theo-bound}.

\begin{theorem}
	\label{theo-bound}
	In the BHerd strategy, under Assumptions \ref{L-smooth}, \ref{model diff} and \ref{bound-error}, if the learning rate $\eta$ fulfills $\eta\leq\frac{\sqrt{3}\beta}{6 L\tau_{max}}$ and the total global round $T$ fulfills $ T\leq\frac{\beta^{2}}{24\eta^{2}L^{2}\tau_{max}^{2}} $, then it converges at the rate
	\begin{equation}
        \label{eq-bound}
	    \frac{1}{T}\sum_{t=1}^{T}\parallel\nabla F(\boldsymbol{w}_{t})\parallel^{2}\leq\frac{4(F(\boldsymbol{w}_{1})-F(\boldsymbol{w}^{*}))}{\alpha\eta\tau_{max}T}+\frac{\varsigma^{2}}{T}+\frac{\varsigma^{2}}{2T^{2}}
	\end{equation}
\end{theorem}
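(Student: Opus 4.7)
The plan is to chain Lemma \ref{converge-bound} and Lemma \ref{delta-bound} by direct substitution, with the hypotheses on $\eta$ and $T$ serving precisely to close the resulting self-referential inequality. First I would verify the prerequisites: since $\alpha,\beta\in(0,1]$, the learning-rate bound $\eta\leq \frac{\sqrt{3}\beta}{6L\tau_{max}}$ gives $\alpha\eta L\tau_{max}\leq \frac{\sqrt{3}}{6}<1$, so Lemma \ref{converge-bound} applies, and the very same bound is exactly the hypothesis needed to invoke Lemma \ref{delta-bound}.

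Inserting the bound on $\sum_{t=1}^T \Delta_t^2$ from Lemma \ref{delta-bound} into the right-hand side of Lemma \ref{converge-bound} produces an inequality of the form
\begin{equation*}
\frac{1}{T}\sum_{t=1}^{T}\|\nabla F(\boldsymbol{w}_t)\|^2 \leq \frac{2(F(\boldsymbol{w}_1)-F(\boldsymbol{w}^*))}{\alpha\eta\tau_{max}T} + \frac{12 L^2 \eta^2 \tau_{max}^2 (T-1)}{T\beta^2}\sum_{t=1}^{T-1}\|\nabla F(\boldsymbol{w}_t)\|^2 + (\text{pure } \varsigma^2 \text{ terms}),
\end{equation*}
in which the gradient-norm sum reappears on both sides. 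Closing this recursion is the main obstacle, and it is exactly what the otherwise-unusual upper bound $T\leq \frac{\beta^2}{24\eta^2 L^2\tau_{max}^2}$ is engineered to deliver: rearranging it yields $\frac{12L^2\eta^2\tau_{max}^2}{\beta^2}\leq \frac{1}{2T}$, so after the harmless $(T-1)/T\leq 1$ factor the coefficient of the reappearing sum is at most $\frac{1}{2T}$. Using $\sum_{t=1}^{T-1}\leq \sum_{t=1}^{T}$, I can then subtract $\frac{1}{2T}\sum_{t=1}^{T}\|\nabla F(\boldsymbol{w}_t)\|^2$ from both sides and multiply through by $2$, which doubles the optimization-gap term into $\frac{4(F(\boldsymbol{w}_1)-F(\boldsymbol{w}^*))}{\alpha\eta\tau_{max}T}$ as required.

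The remaining step is cosmetic: the same bound on $T$ supplies $\frac{24L^2\eta^2\tau_{max}^2}{\beta^2}\leq \frac{1}{T}$, which I would use to collapse the surviving noise terms. The $(T-1)$-weighted piece then becomes $\leq \varsigma^2/T$ (absorbing the factor $(T-1)/T\leq 1$), while the standalone piece becomes $\leq \varsigma^2/(2T^2)$, so the combined right-hand side is exactly the one stated in \eqref{eq-bound}. No additional assumptions or lemmas beyond those already invoked are needed; the only place requiring care is the bookkeeping of constants at the absorption step, since an overly loose estimate there would force a sharper condition on $T$ than what the theorem actually claims.
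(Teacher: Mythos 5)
Your proposal is correct and follows essentially the same route as the paper's proof: substitute Lemma \ref{delta-bound} into Lemma \ref{converge-bound}, use the hypothesis on $T$ to make the coefficient of the reappearing gradient sum at most $\tfrac{1}{2T}$ so the recursion closes with an overall factor of $2$, and then absorb the $\varsigma^2$ terms. The only difference is bookkeeping --- the paper runs the argument over sums up to $T-1$ and relabels $T-1\to T$ at the end (incidentally landing on $\varsigma^2/(8T^2)$ in its final display rather than the stated $\varsigma^2/(2T^2)$), whereas your direct-in-$T$ version avoids that relabeling and matches the stated bound cleanly.
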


\begin{proof}
	For details, see Appendix \ref{appendix2}. 
\end{proof}
From Theorem \ref{theo-bound}, the right-hand side of the inequality in Equation \ref{eq-bound} is a fixed value when the appropriate values of $T$ and $\eta$ are selected, proving that our algorithm converges below a definite value.

\begin{theorem}
\label{PL-bound}
   Under Assumption \ref{L-smooth}, \ref{model diff}, \ref{bound-error} and \ref{PL} (PL condition), if $\eta\leq\frac{\sqrt{3}\beta}{6 L\tau_{max}}$ and $ T\leq\frac{\beta^{2}}{24\alpha\eta^{3}L^{3}\tau_{max}^{3}} $ hold, then the BHerd strategy converges at the upper bound
		$$F(\boldsymbol{w}_T)-F(\boldsymbol{w}^*)=\mathcal{O}(\frac{\varsigma^{2}}{T\eta L\tau_{max}\gamma})+\mathcal{O}(\frac{F}{T\eta L\tau_{max}})+\mathcal{O}(\frac{\alpha\varsigma^{2}}{TL}), $$
		where $\mathcal{O}$ swallows all other constants and $F:=F(\boldsymbol{w}_1)-F(\boldsymbol{w}^*)$ .
\end{theorem}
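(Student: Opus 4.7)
The plan is to strengthen the averaged-squared-gradient bound of Theorem \ref{theo-bound} into a bound on the suboptimality gap $a_t := F(\boldsymbol{w}_t) - F(\boldsymbol{w}^*)$ by invoking the PL condition $\|\nabla F(\boldsymbol{w})\|^2 \geq 2\gamma a_t$. The PL inequality functions as a bridge: wherever $\|\nabla F\|^2$ appears in the descent analysis, it can be exchanged for (a constant multiple of) $a_t$, converting a descent-type inequality into a contractive recursion in $a_t$.

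First I would revisit the proof of Lemma \ref{converge-bound} and extract the per-round descent inequality that drives its telescoping argument. Applying Assumption \ref{L-smooth} to the global update \eqref{global-aggregation} and controlling the cross term via Assumption \ref{model diff}, one obtains an inequality of the form
$$a_{t+1} \leq a_t - c_1 \alpha\eta\tau_{max}\|\nabla F(\boldsymbol{w}_t)\|^2 + c_2 L^2 \Delta_t^2.$$
I would then substitute the bound from Lemma \ref{delta-bound}, namely $\Delta_t^2 \lesssim \beta^2\eta^2\tau_{max}^2(\varsigma^2 + \|\nabla F(\boldsymbol{w}_t)\|^2)$. Because the step-size restriction $\eta \leq \sqrt{3}\beta/(6L\tau_{max})$ makes the induced $\|\nabla F\|^2$ coefficient strictly smaller than $c_1\alpha\eta\tau_{max}$, half of that coupling can be absorbed into the descent, leaving
$$a_{t+1} \leq a_t - \tfrac{c_1\alpha\eta\tau_{max}}{2}\|\nabla F(\boldsymbol{w}_t)\|^2 + B,$$
where $B = \Theta(\eta^2 L^2 \tau_{max}^2 \beta^2 \varsigma^2)$ is a pure-variance bias.

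Applying PL to the surviving gradient-norm term yields the contractive recursion
$$a_{t+1} \leq (1-\rho) a_t + B, \qquad \rho = \Theta(\alpha\eta\tau_{max}\gamma),$$
which unrolls to $a_T \leq (1-\rho)^{T-1} a_1 + B/\rho$. The round-count hypothesis $T \leq \beta^2/(24\alpha\eta^3 L^3 \tau_{max}^3)$ is precisely what is needed to trade the transient against $1/T$: via the elementary inequality $(1-\rho)^{T-1} \leq 1/(1+(T-1)\rho)$, the transient delivers the $\mathcal{O}(F/(T\eta L\tau_{max}))$ term; the ratio $B/\rho$ delivers the $\mathcal{O}(\varsigma^2/(T\eta L\tau_{max}\gamma))$ term after the same trade; and the remaining $\mathcal{O}(\alpha\varsigma^2/(TL))$ piece collects the initial-step residue $\Delta_1^2 \leq 6\beta^{-2}\eta^2\tau_{max}^2$ from Lemma \ref{delta-bound}.

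The main obstacle I expect is cleanly disentangling the $\|\nabla F(\boldsymbol{w}_t)\|^2$ coupling inside Lemma \ref{delta-bound} without contaminating the bias $B$ with spurious factors of $\gamma^{-1}$ or $\alpha^{-1}$ that would break the stated $\mathcal{O}$ form. Matching the transient $(1-\rho)^{T-1}a_1$ to the target $\mathcal{O}(F/(T\eta L\tau_{max}))$ further requires the constants inside the hypothesis $T \leq \beta^2/(24\alpha\eta^3 L^3 \tau_{max}^3)$ to line up exactly with the coefficients coming from the descent lemma, so the main technical work is constant-chasing through each substitution rather than inventing new ideas beyond the PL-plus-descent template.
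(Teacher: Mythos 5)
Your overall template (PL condition plus per-round descent, converting the recursion into a contraction with a constant variance bias) is a genuinely different route from the paper's. The paper does \emph{not} absorb the gradient coupling from $\Delta_t^2$ into the descent term: it applies PL first to obtain $F(\boldsymbol{w}_{t+1})-F(\boldsymbol{w}^*)\leq\rho\bigl(F(\boldsymbol{w}_t)-F(\boldsymbol{w}^*)\bigr)+\frac{\alpha\eta L^{2}\tau_{max}}{2}\Delta_t^2$ with $\rho=1-\alpha\eta\tau_{max}\gamma$, unrolls, and only then bounds the weighted sum $\sum_{t}\rho^{T-1-t}\Delta_t^2$ by inserting Lemma \ref{delta-bound} and controlling the resulting $\sum_t\|\nabla F(\boldsymbol{w}_t)\|^2$ with the \emph{conclusion of Theorem \ref{theo-bound}}; the $\mathcal{O}(F/(T\eta L\tau_{max}))$ term comes from that gradient-sum bound, while the transient $\rho^T F$ is discarded by letting $T\to\infty$.

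Two concrete problems with your version. First, the absorption step fails at the stated step size: substituting $\Delta_t^2\leq\frac{12}{\beta^2}\eta^2\tau_{max}^2\bigl(\varsigma^2+\|\nabla F(\boldsymbol{w}_t)\|^2\bigr)$ into the descent inequality adds back $\frac{6\alpha\eta^3L^2\tau_{max}^3}{\beta^2}\|\nabla F(\boldsymbol{w}_t)\|^2$, and $\eta\leq\frac{\sqrt{3}\beta}{6L\tau_{max}}$ only gives $\eta^2L^2\tau_{max}^2\leq\frac{\beta^2}{12}$, so this coefficient is bounded by $\frac{\alpha\eta\tau_{max}}{2}$ --- the \emph{full} descent coefficient, not half of it. In the worst case no negative gradient term survives and there is no contraction; retaining half the descent requires the strictly stronger restriction $\eta\leq\frac{\beta}{2\sqrt{6}L\tau_{max}}$, which is not implied by the hypothesis. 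Second, even granting a surviving contraction rate $\rho'=\Theta(\alpha\eta\tau_{max}\gamma)$, your transient $(1-\rho')^{T-1}F\leq F/((T-1)\rho')$ evaluates to $\Theta\bigl(\frac{F}{T\alpha\eta\tau_{max}\gamma}\bigr)$, which carries a factor $1/\gamma$ where the theorem's second term has $1/L$; since $\gamma\leq L$ for an $L$-smooth PL function, this is strictly weaker than the stated $\mathcal{O}(F/(T\eta L\tau_{max}))$ and cannot be folded into it. You would need to recover the $F$-term the way the paper does --- through Theorem \ref{theo-bound}'s bound on $\frac{1}{T}\sum_t\|\nabla F(\boldsymbol{w}_t)\|^2$ applied inside the $\Delta$-sum --- rather than from the transient. (To be fair, the paper's own treatment of its transient, the step marked ``$T\to\infty$'', is itself not rigorous for finite $T$, so neither argument is fully airtight on this point.)
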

\begin{proof}
	For details, see Appendix \ref{appendix2}. 
\end{proof}

\setcounter{theorem}{0}
\setcounter{proposition}{0}
\setcounter{lemma}{0}

\section{Partial Gradients vs. Total Gradients}
\label{appendix1}
\begin{proposition}
	According to our definition of global aggregation for the new herding strategy, its gradient aggregation is equivalent to model parameter aggregation, that is
	
	$$\boldsymbol{w}_{t+1}=\sum_{i=1}^Np_i\boldsymbol{w}_{(t,i)}^{\tau_i+1}$$
\end{proposition}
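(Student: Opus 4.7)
My plan is to start from the global-aggregation rule in Equation \eqref{global-aggregation} and unpack the herding sum using the consistency identity that the authors highlight in Section \ref{distance}, namely $\frac{1}{\alpha\tau_i}\boldsymbol{g}_{(t,\sigma_i^\pi)}=\boldsymbol{\mu}_{(t,\sigma_i)}$. This identity is precisely what the herding objective in Equation \eqref{local-obj} is designed to enforce: the role of the scaling factor $1/\alpha$ in Equation \eqref{global-aggregation} is exactly to make the partial, $\alpha$-scaled sum of the $\alpha\tau_i$ selected gradients coincide with the full-epoch mean on client $i$. Substituting this identity into Equation \eqref{global-aggregation} gives
\begin{equation*}
\boldsymbol{w}_{t+1}=\boldsymbol{w}_t-\frac{\eta}{\alpha}\sum_{i=1}^N p_i\,\boldsymbol{g}_{(t,\sigma_i^\pi)}=\boldsymbol{w}_t-\eta\sum_{i=1}^N p_i\,\tau_i\,\boldsymbol{\mu}_{(t,\sigma_i)}.
\end{equation*}

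Next I would expand $\boldsymbol{\mu}_{(t,\sigma_i)}$ via its definition in Equation \eqref{mu} to obtain $\tau_i\boldsymbol{\mu}_{(t,\sigma_i)}=\sum_{\lambda=1}^{\tau_i}\nabla F_i(\boldsymbol{w}_{(t,i)}^\lambda;\boldsymbol{x}_{\sigma_i})$, and telescope the local SGD recursion in Equation \eqref{local-update} from $\lambda=1$ to $\lambda=\tau_i$ to rewrite this full-epoch gradient sum as $\eta^{-1}\bigl(\boldsymbol{w}_{(t,i)}^{1}-\boldsymbol{w}_{(t,i)}^{\tau_i+1}\bigr)$. Plugging back into the previous display and using the initial condition $\boldsymbol{w}_{(t,i)}^{1}=\boldsymbol{w}_t$ together with $\sum_{i=1}^N p_i=1$ (which follows from $p_i=|\mathcal{D}_i|/|\mathcal{D}|$), the update collapses to
\begin{equation*}
\boldsymbol{w}_{t+1}=\boldsymbol{w}_t-\sum_{i=1}^N p_i\bigl(\boldsymbol{w}_t-\boldsymbol{w}_{(t,i)}^{\tau_i+1}\bigr)=\sum_{i=1}^N p_i\,\boldsymbol{w}_{(t,i)}^{\tau_i+1},
\end{equation*}
which is the desired Equation \eqref{parameter-agg}.

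The main obstacle I anticipate is justifying $\frac{1}{\alpha\tau_i}\boldsymbol{g}_{(t,\sigma_i^\pi)}=\boldsymbol{\mu}_{(t,\sigma_i)}$ as an exact equality rather than an approximation. The herding problem in Equation \eqref{formulate-herding} only minimizes the discrepancy between the partial selected sum and the full average; literal equality holds only in the herding design limit. My proof would therefore either invoke this as a definitional consistency---following the authors' own remark in Section \ref{distance} that they ``replace'' $\boldsymbol{\mu}_{(t,\sigma_i)}$ with $\frac{1}{\alpha\tau_i}\boldsymbol{g}_{(t,\sigma_i^\pi)}$ by consistency of the local objective---or treat it as an idealized equivalence whose residual bias is absorbed into the $\varsigma^2$ term of Assumption \ref{bound-error} during the downstream convergence analysis. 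A secondary bookkeeping point is the distinction between the subscripts $\boldsymbol{x}_{\sigma_i}$ and $\boldsymbol{x}_{\sigma_i^\pi}$ in the gradient evaluations: since the local iterates $\boldsymbol{w}_{(t,i)}^\lambda$ are generated along the original $\sigma_i$ schedule and herding only re-selects among the already-computed gradients (without recomputing them at new points), the order within the partial sum is immaterial once the consistency identity above is granted, and the telescoping argument above goes through unchanged.
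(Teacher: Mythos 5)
Your proposal is correct and follows essentially the same route as the paper's own proof: substitute the consistency identity $\frac{1}{\alpha\tau_i}\boldsymbol{g}_{(t,\sigma_i^\pi)}=\boldsymbol{\mu}_{(t,\sigma_i)}$ into the aggregation rule, expand $\tau_i\boldsymbol{\mu}_{(t,\sigma_i)}$ as the full gradient sum, telescope the local updates, and use $\boldsymbol{w}_{(t,i)}^{1}=\boldsymbol{w}_t$ with $\sum_i p_i=1$. You are in fact more careful than the paper in explicitly flagging that the herding objective only minimizes, rather than zeroes, the discrepancy between the partial selected sum and the full mean, so the identity is an idealization that the paper likewise invokes without further justification.
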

\begin{proof}
	In order to proceed to the proof, we first rewrite the local objective
	
	\begin{equation}
		\label{re-local-obj}
		\begin{aligned}
			\sigma_i^{\pi}&=\arg \min\left\|\sum_{\lambda=1}^{\alpha\boldsymbol{\tau}_i}\left(\nabla F_i\left(\boldsymbol{w}_{(t,i)}^{\lambda};\boldsymbol{x}_{\sigma_i^{\pi}}\right)-\boldsymbol{\mu}_{(t,\sigma_i)}\right)\right\| \\
			&=\arg \min\left\|\frac{1}{\alpha\tau_i}\sum_{\lambda=1}^{\alpha\tau_i}\left(\nabla F_i\left(\boldsymbol{w}_{(t,i)}^\lambda;x_{\sigma_i^T}\right)-\boldsymbol{\mu}_{(t,\sigma_i)}\right)\right\|^2 \\
			&=\arg\min\left\|\frac1{\alpha\tau_i}\boldsymbol{g}_{(t,\sigma_i^\pi)}-\boldsymbol{\mu}_{(t,\sigma_i)}\right\|^2 \\
		\end{aligned}
	\end{equation}
	
	It is easy to see that our goal is to select a small number of gradients $\frac1{\alpha\tau_i}\boldsymbol{g}_{(t,\sigma_i^\pi)}$  to characterize the average of all gradients $\boldsymbol{\mu}_{(t,\sigma_i)}$ at each client, and we note that $\boldsymbol{w}_{(t,i)}^{\lambda=1}=\boldsymbol{w}_t$. Thus, just rewrite the Equation \eqref{global-aggregation}, we get 
	
	$$\begin{aligned}
		&\boldsymbol{w}_{t+1}=\boldsymbol{w}_{t}-\frac{\eta}{\alpha}\sum_{i=1}^{N}p_{i}\boldsymbol{g}_{(t,\sigma_{i}^{\pi})} \\
		&=\boldsymbol{w}_t-\eta\sum_{i=1}^Np_i\frac1\alpha\boldsymbol{g}_{(t,\sigma_i^n)} \\
		&=\sum_{i=1}^Np_i\left(\boldsymbol{w}_{(t,i)}^{\lambda=1}-\eta\tau_i\boldsymbol{\mu}_{(t,\sigma_i)}\right) \\
		&=\sum_{i=1}^Np_i\left(\boldsymbol{w}_{(t,i)}^{\lambda=1}-\eta\sum_{\lambda=1}^{\tau_i}\nabla F_i(\boldsymbol{w}_{(t,i)}^\lambda;\boldsymbol{x}_{\sigma_i})\right) \\
		&=\sum_{i=1}^Np_i\boldsymbol{w}_{(t,i)}^{\tau_i+1}
	\end{aligned}$$
	
	That completes the proof.
\end{proof}
\section{Convergence Analysis}
\label{appendix2}
Now we start proving some lemmas.

\begin{lemma}
	We denote the maximal number of local iterations for all clients by $\tau_{max}:=\max_{i\in[1,N]}\tau_{i}$. If $\alpha\eta L \tau_{max}\leq1$ with $ \alpha \in (0,1] $ holds and Assumption \ref{L-smooth} and \ref{model diff} hold, then
	$$\frac1T\sum_{t=1}^T\|\nabla F(\boldsymbol{w}_t)\|^2\leq\frac{2\left(F(\boldsymbol{w}_1)-F(\boldsymbol{w}^*)\right)}{\alpha\eta\tau_{max} T}+\frac{ L ^2}T\sum_{t=1}^T\Delta_t^2$$
\end{lemma}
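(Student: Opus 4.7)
The plan is to derive a one-step descent inequality for $F$ and then telescope it across rounds. I would begin by applying the $L$-smoothness of $F$ (Assumption \ref{L-smooth}) to the consecutive iterates $\boldsymbol{w}_t$ and $\boldsymbol{w}_{t+1}$, which gives
$$F(\boldsymbol{w}_{t+1}) \leq F(\boldsymbol{w}_t) + \langle \nabla F(\boldsymbol{w}_t), \boldsymbol{w}_{t+1} - \boldsymbol{w}_t\rangle + \frac{L}{2}\|\boldsymbol{w}_{t+1} - \boldsymbol{w}_t\|^2.$$
I would then substitute the BHerd aggregation rule from Equation \eqref{global-aggregation}; with Proposition \ref{proposition} in hand, the update step reduces cleanly to $\boldsymbol{w}_{t+1} - \boldsymbol{w}_t = -\eta\sum_{i=1}^N p_i\sum_{\lambda=1}^{\tau_i}\nabla F_i(\boldsymbol{w}_{(t,i)}^\lambda;\boldsymbol{x}_{\sigma_i})$, so the subsequent algebra becomes the standard FedAvg-style descent analysis modulated by the $1/\alpha$ factor in the global aggregation. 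Denote this local-update pseudo-gradient by $\boldsymbol{D}_t$.

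The next step is to compare $\boldsymbol{D}_t$ against the ``anchored'' quantity $\widetilde{\boldsymbol{D}}_t := \sum_{i}p_i\tau_i\nabla F_i(\boldsymbol{w}_t)$ that is built from Definition \ref{def}. The difference $\boldsymbol{D}_t - \widetilde{\boldsymbol{D}}_t = \sum_{i}p_i\sum_{\lambda}\bigl[\nabla F_i(\boldsymbol{w}_{(t,i)}^\lambda;\boldsymbol{x}_{\sigma_i}) - \nabla F_i(\boldsymbol{w}_t;\boldsymbol{x}_{\sigma_i})\bigr]$ is controlled by invoking $L$-smoothness termwise, giving $\|\boldsymbol{D}_t - \widetilde{\boldsymbol{D}}_t\| \leq \tau_{max}L\Delta_t$ after routing through Assumption \ref{model diff}. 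A Young-type inner-product split of the form $\langle \nabla F, \boldsymbol{D}_t\rangle = \frac{c}{2}\|\nabla F\|^2 + \frac{1}{2c}\|\boldsymbol{D}_t\|^2 - \frac{1}{2c}\|\boldsymbol{D}_t - c\nabla F\|^2$ then cleanly separates a negative ``descent'' contribution proportional to $\|\nabla F(\boldsymbol{w}_t)\|^2$ from an error term controlled by $\Delta_t^2$ via the drift bound just obtained.

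Choosing the Young coupling $c$ in sync with the effective step-length $\alpha\tau_{max}$ produces the precise threshold: the coefficient of $\|\boldsymbol{D}_t\|^2$ flips sign exactly when $\alpha\eta L\tau_{max}\leq 1$, which is why that is the hypothesized step condition. Under this condition the $\|\boldsymbol{D}_t\|^2$ contribution can be dropped, leaving a per-round inequality of the form
$$F(\boldsymbol{w}_{t+1}) - F(\boldsymbol{w}_t) \leq -\frac{\alpha\eta\tau_{max}}{2}\|\nabla F(\boldsymbol{w}_t)\|^2 + \frac{\alpha\eta\tau_{max}L^2}{2}\Delta_t^2.$$
Summing this from $t=1$ to $T$, using $F(\boldsymbol{w}_{T+1}) \geq F(\boldsymbol{w}^*)$ to bound the telescoped left-hand side by $F(\boldsymbol{w}_1)-F(\boldsymbol{w}^*)$, and dividing through by $\alpha\eta\tau_{max}T/2$ yields the stated bound.

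The principal obstacle is the constant bookkeeping in the third step: one must carry the $1/\alpha$ scaling introduced by the BHerd aggregation and choose the Young coupling so that the $\|\boldsymbol{D}_t\|^2$ coefficient cancels at precisely $\alpha\eta L\tau_{max}=1$, while still leaving a clean $\tfrac{\alpha\eta\tau_{max}}{2}\|\nabla F\|^2$ on the descent side and routing the residual into an $L^2\Delta_t^2$ form consistent with the lemma statement. The heterogeneity $\tau_i \leq \tau_{max}$ across clients also has to be handled, typically by bounding $\tau_i$ by $\tau_{max}$ at the expense of some looseness.
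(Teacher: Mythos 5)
Your proposal follows essentially the same route as the paper's proof: the $L$-smoothness descent lemma applied to $\boldsymbol{w}_t,\boldsymbol{w}_{t+1}$, the polarization/Young identity to split the inner product so that the quadratic term is absorbed exactly under $\alpha\eta L\tau_{max}\leq 1$, a bound on the residual deviation by $L^{2}\Delta_t^{2}$ via the herding identification of $\frac{1}{\alpha\tau_i}\boldsymbol{g}_{(t,\sigma_i^{\pi})}$ with $\boldsymbol{\mu}_{(t,\sigma_i)}$ together with smoothness, and a final telescoping sum using $F(\boldsymbol{w}_{T+1})\geq F(\boldsymbol{w}^{*})$. Your per-round inequality is exactly the paper's Equation \eqref{inter-formula}, and the loose handling of the $\tau_i\leq\tau_{max}$ heterogeneity you flag is present in the paper's argument as well.
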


\begin{proof}
	By the Taylor Theorem, for all the $t\in[1,T]$,
	
	$$\begin{aligned}
		&F(\boldsymbol{w}_{t+1})\leq F(\boldsymbol{w}_{t})-\langle\nabla F(\boldsymbol{w}_{t}),\eta\sum_{i=1}^{N}p_{i}\boldsymbol{g}_{(t,\sigma_{i}^{\pi})}\rangle+\frac{L}{2}\left\|\eta\sum_{i=1}^{N}p_{i}\boldsymbol{g}_{\sigma_{i}^{\pi}}\right\|^{2} \\
		&\leq F(\boldsymbol{w}_t)-\alpha\eta\tau_{max}\left<\nabla F(\boldsymbol{w}_t),\sum_{i=1}^Np_i\frac1{\alpha\tau_i}\boldsymbol{g}_{(t,\sigma_i^\pi)}\right> +\frac{\alpha^2\tau_{max}^2\eta^2L}2\left\Vert\sum_{i=1}^Np_i\frac1{\alpha\tau_i}\boldsymbol{g}_{(t,\sigma_i^\pi)}\right\Vert^2 \\
		&=\left.F(\boldsymbol{w}_t)-\frac{\alpha\eta\tau_{max}}2\|\nabla F(\boldsymbol{w}_t)\|^2-\frac{\alpha\eta\tau_{max}}2\left\|\sum_{i=1}^Np_i\frac1{\alpha\tau_i}\boldsymbol{g}_{(t,\sigma_i^\pi)}\right\|^2\right.  \\
		&+\frac{\alpha\eta\tau_{max}}{2}\left\|\nabla F(\boldsymbol{w}_{t})-\sum_{i=1}^{N}p_{i}\frac{1}{\alpha\tau_{i}}\boldsymbol{g}_{(t,\sigma_{i}^{\pi})}\right\|^{2}
		+\frac{\alpha^2\tau_{max}^2\eta^2L}2\left\|\sum_{i=1}^Np_i\frac1{\alpha\tau_i}\boldsymbol{g}_{(t,\sigma_i^\pi)}\right\|^2 \\
		& \leq F(\boldsymbol{w}_t)-\frac{\alpha\eta\tau_{max}}2\|\nabla F(\boldsymbol{w}_t)\|^2+\frac{\alpha\eta\tau_{max}}2\left\|\nabla F(\boldsymbol{w}_t)-\sum_{i=1}^Np_i\frac1{\alpha\tau_i}\boldsymbol{g}_{(t,\sigma_i^\pi)}\right\|^2
	\end{aligned}$$
	
	In the second step, we use the fact that $\tau_{max}\leq\tau_i$ for any client $i$. In the third step, we apply $-\langle a,b\rangle=-\frac{1}{2}\|a\|^{2}-\frac{1}{2}\|b\|^{2}+\frac{1}{2}\|a-b\|^{2}$. In the last step, we use the condition that $\alpha\eta L \tau_{max}<1$. For the last term, using the Jensen Inequality, we get
	
	$$\begin{aligned}
		\left\|\nabla F(\boldsymbol{w}_t)-\sum_{i=1}^Np_i\frac1{\alpha\tau_i}\boldsymbol{g}_{(t,\sigma_i^\pi)}\right\|^2
		&=\left\|\sum_{i=1}^Np_i\nabla F_i(\boldsymbol{w}_t)-\sum_{i=1}^Np_i\frac1{\alpha\tau_i}\boldsymbol{g}_{(t,\sigma_i^\pi)}\right\|^2 \\
		&\leq\sum_{i=1}^Np_i\left\|(\nabla F_i(\boldsymbol{w}_t)-\frac1{\alpha\tau_i}\boldsymbol{g}_{(t,\sigma_i^n)})\right\|^2
	\end{aligned}$$
	
	According to the Equation\eqref{re-local-obj}, we replace $\nabla F_i(\boldsymbol{w}_t)-\boldsymbol{\mu}_{(t,\sigma_i)}$ with $\nabla F_i(\boldsymbol{w}_t)-\frac1{\alpha\tau_i}\boldsymbol{g}_{(t,\sigma_i^\pi)}$. Then, using Assumption \ref{bound-error}, we have
	
	$$\begin{aligned}
		\left\|\nabla F(\boldsymbol{w}_{t})-\sum_{i=1}^{N}p_{i}\frac{1}{\alpha\tau_{i}}\boldsymbol{g}_{(t,\sigma_{i}^{\pi})}\right\|^{2}& \leq\sum_{i=1}^Np_i\left\|\nabla F_i(\boldsymbol{w}_t)-\boldsymbol{\mu}_{(t,\sigma_i)}\right\|^2  \\
		&\leq L ^2\sum_{i=1}^Np_i\left\|\boldsymbol{w}_t-\boldsymbol{w}_{(t,i)}^{\lambda}\right\|^2 \\
		&\leq L ^2\sum_{i=1}^Np_i\Delta_{t}^{2} \\
		&= L ^{2}\Delta_{t}^{2}
	\end{aligned}$$
	
	Put it back, we obtain
	\begin{equation}
		\label{inter-formula}
		F(\boldsymbol{w}_{t+1})\leq F(\boldsymbol{w}_t)-\frac{\alpha\eta\tau_{max}}2\|\nabla F(\boldsymbol{w}_t)\|^2+\frac{\alpha\eta L^{2} \tau_{max} \Delta_t^2}2
	\end{equation}
	
	After the transformation we get
	
	$$\|\nabla F(\boldsymbol{w}_t)\|^2\leq\frac{2\big(F(\boldsymbol{w}_t)-F(\boldsymbol{w}_{t+1})\big)}{\alpha\eta\tau_{max}}+ L ^{2}\Delta_t^2$$
	
	Finally, summing from $t = 1$ to $T$, we obtain
	
	$$\frac1T\sum_{t=1}^T\|\nabla F(\boldsymbol{w}_t)\|^2\leq\frac{2\left(F(\boldsymbol{w}_1)-F(\boldsymbol{w}^*)\right)}{\alpha\eta\tau_{max} T}+\frac{ L ^2}T\sum_{t=1}^T\Delta_t^2$$
	
	That completes the proof.
\end{proof}

\begin{lemma}
	Under Assumptions \ref{L-smooth}, \ref{model diff} and \ref{bound-error}, if the learning rate $\eta$ fulfills $\eta\leq\frac{\sqrt{3}\beta}{6 L\tau_{max}}$ with $t\in[1,T-1]$, then the following inequalities hold:
	$$\Delta_{t+1}^{2}\leq\frac12{\beta^{2}}\eta^{2}\tau_{\max}^{2}\varsigma^{2}+\frac12{\beta^{2}}\eta^{2}\tau_{\max}^{2}\|\nabla F(\boldsymbol{w}_{t+1})\|^{2}$$
    and,
	$$\Delta_1^2\leq\frac6{\beta^2}\eta^2\tau_{\mathrm{max}}^2$$
    and finally,
	$$\sum_{t=1}^T\Delta_t^2\leq\frac{12}{\beta^{2}}(T-1)\eta^{2}\tau_{max}^{2}\left(\varsigma^{2}+\sum_{t=1}^{T-1}\parallel\nabla F(\boldsymbol{w}_{t})\parallel^{2}\right)+\frac{6}{\beta^{2}}\eta^{2}\tau_{max}^{2}\varsigma^{2}$$
	
\end{lemma}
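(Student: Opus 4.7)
My plan is to establish the three claims sequentially: first the pointwise bound on $\Delta_{t+1}^2$, then its specialization to the initial round for $\Delta_1^2$, and finally the cumulative sum. Assumption~\ref{model diff} is the entry point --- rearranged as $\Delta_{t+1}^2 \leq \beta^{-2}\|\boldsymbol{w}_{t+1}-\boldsymbol{w}_{t+2}\|^2$ --- because together with the global update rule \eqref{global-aggregation} it reduces the task to controlling the squared norm of the aggregated local update $\sum_{i} p_i\boldsymbol{g}_{(t+1,\sigma_i^\pi)}$.

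\textbf{Main decomposition and closing the recursion.} I would substitute $\tfrac{1}{\alpha\tau_i}\boldsymbol{g}_{(t+1,\sigma_i^\pi)}$ by $\boldsymbol{\mu}_{(t+1,\sigma_i)}$, exactly as in the proof of Lemma~\ref{converge-bound} (justified by the herding objective \eqref{local-obj}), and then expand each local stochastic gradient as
\begin{equation*}
\nabla F_i(\boldsymbol{w}_{(t+1,i)}^\lambda;\boldsymbol{x}_{\sigma_i}) = \bigl[\nabla F_i(\boldsymbol{w}_{(t+1,i)}^\lambda;\boldsymbol{x}_{\sigma_i})-\nabla F_i(\boldsymbol{w}_{t+1};\boldsymbol{x}_{\sigma_i})\bigr] + \bigl[\nabla F_i(\boldsymbol{w}_{t+1};\boldsymbol{x}_{\sigma_i})-\nabla F_i(\boldsymbol{w}_{t+1})\bigr] + \nabla F_i(\boldsymbol{w}_{t+1}).
\end{equation*}
Applying Young's and Jensen's inequalities, the first bracket is bounded by $L\Delta_{t+1}$ via Assumption~\ref{L-smooth}, the $\lambda$-average of the second bracket is controlled by $\varsigma^2$ via Assumption~\ref{bound-error}, and the $p_i$-weighted aggregate of the third term is $\nabla F(\boldsymbol{w}_{t+1})$ by Definition~\ref{def}. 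The critical point here is that the third term must be kept grouped across clients rather than being expanded via $\|\sum_i p_i\nabla F_i\|^2\leq\sum_i p_i\|\nabla F_i\|^2$, so that $\|\nabla F(\boldsymbol{w}_{t+1})\|^2$ appears in the final bound instead of the strictly larger quantity $\sum_i p_i\|\nabla F_i(\boldsymbol{w}_{t+1})\|^2$. This decomposition produces a self-referential inequality of the form
\[
\Delta_{t+1}^2 \;\leq\; C_1\,\tfrac{\eta^2\tau_{max}^2}{\beta^2}\bigl(\varsigma^2+\|\nabla F(\boldsymbol{w}_{t+1})\|^2\bigr) + C_2\,\tfrac{\eta^2\tau_{max}^2 L^2}{\beta^2}\Delta_{t+1}^2,
\]
which is closed by the hypothesis $\eta\leq\tfrac{\sqrt{3}\beta}{6L\tau_{max}}$: this condition is sized precisely so that $C_2\eta^2\tau_{max}^2 L^2/\beta^2\leq\tfrac12$, which lets me absorb the trailing $\Delta_{t+1}^2$ into the left-hand side and recover the first claim.

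\textbf{Initial round, summation, and main obstacle.} For $\Delta_1^2$ I would run the same argument at $t=0$ starting from $\boldsymbol{w}_{(1,i)}^1=\boldsymbol{w}_1$; after the same absorption step, the aggregate-gradient contribution reduces to a constant multiple of $\eta^2\tau_{max}^2/\beta^2$, yielding the displayed $\Delta_1^2$ bound. The summation estimate then follows immediately from the telescoping identity $\sum_{t=1}^T\Delta_t^2 = \Delta_1^2+\sum_{t=1}^{T-1}\Delta_{t+1}^2$, by substituting the first two bounds and relabeling $t+1\mapsto t$ in the gradient sum. The principal obstacle throughout is the self-reference forced by Lipschitz smoothness --- the same $\Delta_{t+1}^2$ inevitably appears on both sides of the pointwise bound, and the recursion can only be closed because the learning-rate condition is tuned to make the self-coefficient strictly less than one. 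The remaining work is careful numerical bookkeeping through the successive Young/Jensen applications, which is precisely what produces the specific constants $\tfrac{6}{\beta^2}$ and $\tfrac{12}{\beta^2}$ quoted in the statement.
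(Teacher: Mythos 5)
Your argument is sound and reaches the stated bounds, but it is a genuinely different route from the paper's. The paper also starts from Proposition \ref{proposition} and Assumption \ref{model diff}, but it uses a \emph{four}-term decomposition: besides your terms (the iterate-drift term controlled by $L\Delta$, the stochastic-error term controlled by $\varsigma^2$, and the aggregated true gradient), it adds and subtracts $\eta\sum_i p_i\tau_i\nabla F(\boldsymbol{w}_{t-1})$, so that the gradient appearing in the bound is the one at the \emph{previous} round and an extra $\Delta_{t-1}^2$ term is generated. After Cauchy--Schwarz with factor $4$ this yields a genuine cross-round recursion $\Delta_{t+1}^2\leq\tfrac12\Delta_t^2+\tfrac{6}{\beta^2}\eta^2\tau_{max}^2(\varsigma^2+\|\nabla F(\boldsymbol{w}_{t+1})\|^2)$, which the paper then unrolls as a geometric series to arrive at the coefficient $\tfrac{12}{\beta^2}$. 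Your three-term decomposition anchored at $\boldsymbol{w}_{t+1}$ avoids the $\nabla F(\boldsymbol{w}_{t-1})$ detour entirely: the only self-reference is the single $\Delta_{t+1}^2$ on the right, absorbed by the step-size condition, and a factor-$3$ Cauchy--Schwarz gives roughly $\Delta_{t+1}^2\leq\tfrac{4}{\beta^2}\eta^2\tau_{max}^2(\varsigma^2+\|\nabla F(\boldsymbol{w}_{t+1})\|^2)$ --- strictly tighter than the paper's $\tfrac{12}{\beta^2}$ (note the displayed coefficient $\tfrac12\beta^2$ in the lemma statement is evidently a typo for $\tfrac{12}{\beta^2}$, as the paper's own proof derives the latter), so your pointwise and summed bounds imply the stated ones. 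What the paper's longer route buys is only the exact constants quoted in the lemma; what yours buys is the elimination of the geometric unrolling. Both proofs share the same soft spot of replacing $\tfrac{1}{\alpha\tau_i}\boldsymbol{g}_{(t,\sigma_i^\pi)}$ by $\boldsymbol{\mu}_{(t,\sigma_i)}$ by appeal to the herding objective, so that is not a gap relative to the paper.

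One detail you should tighten: because your decomposition keeps the gradient at the \emph{current} round, your $t=0$ instance gives $\Delta_1^2\lesssim\tfrac{4}{\beta^2}\eta^2\tau_{max}^2(\varsigma^2+\|\nabla F(\boldsymbol{w}_1)\|^2)$, which retains a $\|\nabla F(\boldsymbol{w}_1)\|^2$ term absent from the stated $\Delta_1^2$ bound (the paper discards it only because its bound involves $\nabla F(\boldsymbol{w}_0)$, which it implicitly takes to vanish). This is harmless for the final summation --- the leftover $\|\nabla F(\boldsymbol{w}_1)\|^2$ is dominated by the $(T-1)\sum_{t=1}^{T-1}\|\nabla F(\boldsymbol{w}_t)\|^2$ term --- but you should either say so explicitly or route the $\Delta_1^2$ estimate through the previous-round gradient as the paper does.
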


\begin{proof}
    Without the loss of generality, for all the all the $t\in[2,T]$,
	
    $$\begin{aligned}
        \boldsymbol{w}_{t+1}& =\sum_{i=1}^{N}p_{i}\boldsymbol{w}_{(t,i)}^{\tau_{i}+1}  \\
        &=\sum_{i=1}^{N}\left.p_{i}\left(\boldsymbol{w}_{(t,i)}^{\lambda=1}-\eta\sum_{\lambda=1}^{\tau_{i}}\nabla F_{i}(\boldsymbol{w}_{(t,i)}^{\lambda};\boldsymbol{x}_{\sigma_{i}})\right)\right.  \\
        &=\sum_{i=1}^{N}p_{i}\boldsymbol{w}_{(t,i)}^{\lambda=1}-\sum_{i=1}^{N}p_{i}\eta\sum_{\lambda=1}^{\tau_{i}}\nabla F_{i}(\boldsymbol{w}_{(t,i)}^{\lambda};\boldsymbol{x}_{\sigma_{i}}) \\
        &=\sum_{i=1}^Np_i\boldsymbol{w}_t-\eta\sum_{i=1}^N\sum_{\lambda=1}^{\boldsymbol{\tau}_i}p_i\nabla F_i(\boldsymbol{w}_{(t,i)}^\lambda;\boldsymbol{x}_{\sigma_i}) \\
        &=\boldsymbol{w}_{t}-\eta\sum_{i=1}^{N}p_{i}\tau_{i}\sum_{\lambda=1}^{\tau_{i}}\frac1{\tau_{i}}\nabla F_{i}(\boldsymbol{w}_{t},\boldsymbol{x}_{\sigma_{i}})-\eta\sum_{i=1}^{N}p_{i}\tau_{i}\sum_{\lambda=1}^{\tau_{i}}\frac1{\tau_{i}}\Big(\nabla F_{i}(\boldsymbol{w}_{(t,i)}^{\lambda};\boldsymbol{x}_{\sigma_{i}})-\nabla F_{i}(\boldsymbol{w}_{t};\boldsymbol{x}_{\sigma_{i}})\Big)
    \end{aligned}$$

    Now add and subtract

    $$\eta\sum_{i=1}^{N}p_{i}\tau_{i}\sum_{\lambda=1}^{\tau_{i}}\frac{1}{\tau_{i}}\nabla F_{i}(\boldsymbol{w}_{t})=\eta\sum_{i=1}^{N}p_{i}\tau_{i}\nabla F_{i}(\boldsymbol{w}_{t})$$
    which gives	
    $$\begin{aligned}
        \boldsymbol{w}_{t+1}& =\boldsymbol{w}_{t}-\left(\eta\sum_{i=1}^{N}p_{i}\tau_{i}\sum_{\lambda=1}^{\pi_{i}}\frac{1}{\tau_{i}}\nabla F_{i}(\boldsymbol{w}_{t};\boldsymbol{x}_{\sigma_{i}})-\eta\sum_{i=1}^{N}p_{i}\tau_{i}\sum_{\lambda=1}^{\pi_{i}}\frac{1}{\tau_{i}}\nabla F_{i}(\boldsymbol{w}_{t})\right)-\eta\sum_{i=1}^{N}p_{i}\tau_{i}\nabla F_{i}(\boldsymbol{w}_{t})  \\
        &-\eta\sum_{i=1}^{N}p_{i}\tau_{i}\sum_{\lambda=1}^{\tau_{i}}\frac{1}{\tau_{i}}\big(\nabla F_{i}\big(\boldsymbol{w}_{(t,i)}^{\lambda};x_{\sigma_{i}}\big)-\nabla F_{i}\big(\boldsymbol{w}_{t};x_{\sigma_{i}}\big)\big)\big| \\
        &=\boldsymbol{w}_{t}-\eta\sum_{i=1}^{N}p_{i}\tau_{i}\sum_{\lambda=1}^{\boldsymbol{\tau}_{i}}\frac{1}{\tau_{i}}\Big(\nabla F_{i}\big(\boldsymbol{w}_{t};\boldsymbol{x}_{\sigma_{i}}\big)-\nabla F_{i}(\boldsymbol{w}_{t})\Big)-\eta\sum_{i=1}^{N}p_{i}\boldsymbol{\tau}_{i}\nabla F_{i}(\boldsymbol{w}_{t}) \\
        &-\eta\sum_{i=1}^Np_i\tau_i\sum_{\lambda=1}^{\tau_i}\frac1{\tau_i}\bigl(\nabla F_i(\boldsymbol{w}_{(t,i)}^\lambda;x_{\sigma_i})-\nabla F_i(\boldsymbol{w}_t;x_{\sigma_i})\bigr)
    \end{aligned}$$

    We further add and subtract
	
    $$\eta\sum_{i=1}^{N}p_{i}\tau_{i}\nabla F(\boldsymbol{w}_{t-1})$$
    to arrive at
	
    $$\begin{aligned}
        \boldsymbol{w}_{t+1}& =\boldsymbol{w}_t-\eta\sum_{i=1}^Np_i\tau_i\sum_{\lambda=1}^{\boldsymbol{\tau}_i}\frac1{\tau_i}\Big(\nabla F_i\big(\boldsymbol{w}_t;\boldsymbol{x}_{\sigma_i}\big)-\nabla F_i(\boldsymbol{w}_t)\Big)-\eta\sum_{i=1}^Np_i\tau_i\nabla F_i(\boldsymbol{w}_t)  \\
        &+\eta\sum_{i=1}^{N}p_{i}\tau_{i}\nabla F(\boldsymbol{w}_{t-1})-\eta\sum_{i=1}^{N}p_{i}\tau_{i}\nabla F(\boldsymbol{w}_{t-1}) -\eta\sum_{i=1}^{N}p_{i}\tau_{i}\sum_{\lambda=1}^{\tau_{i}}\frac1{\tau_{i}}\Big(\nabla F_{i}\big(\boldsymbol{w}_{(t,i)}^{\lambda};x_{\sigma_{i}}\big)-\nabla F_{i}\big(\boldsymbol{w}_{t};x_{\sigma_{i}}\big)\Big) \\
        &=\boldsymbol{w}_{t}-\eta\sum_{i=1}^{N}p_{i}\tau_{i}\sum_{\lambda=1}^{\tau_{i}}\frac1{\tau_{i}}\Big(\nabla F_{i}(\boldsymbol{w}_{t};\boldsymbol{x}_{\sigma_{i}})-\nabla F_{i}(\boldsymbol{w}_{t})\Big)-\eta\sum_{i=1}^{N}p_{i}\tau_{i}\big(\nabla F_{i}(\boldsymbol{w}_{t})-\nabla F(\boldsymbol{w}_{t-1})\big) \\
        &-\eta\sum_{i=1}^Np_i\tau_i\nabla F(\boldsymbol{w}_{t-1})-\eta\sum_{i=1}^Np_i\tau_i\sum_{\lambda=1}^{\tau_i}\frac1{\tau_i}\Big(\nabla F_i\big(\boldsymbol{w}_{(t,i)}^\lambda;\boldsymbol{x}_{\sigma_i}\big)-\nabla F_i\big(\boldsymbol{w}_t;\boldsymbol{x}_{\sigma_i}\big)\Big)
    \end{aligned}$$

    We can now re-arrange, take norms on both sides and apply the Cauchy-Schwarz inequality
	
    $$\begin{aligned}
        \|\boldsymbol{w}_{t+1}-\boldsymbol{w}_{t}\|^{2}& \leq4\left\|\eta\sum_{i=1}^Np_i\tau_i\sum_{\lambda=1}^{\tau_i}\frac{1}{\tau_i}\big(\nabla F_i\big(\boldsymbol{w}_t;x_{\sigma_i}\big)-\nabla F_i(\boldsymbol{w}_t)\big)\right\|^2  \\
        &+4\left\|\eta\sum_{i=1}^{N}p_{i}\tau_{i}\big(\nabla F_{i}(\boldsymbol{w}_{t})-\nabla F(\boldsymbol{w}_{t-1})\big)\right\|^{2}+4\left\|\eta\sum_{i=1}^{N}p_{i}\tau_{i}\nabla F(\boldsymbol{w}_{t-1})\right\|^{2} \\
        &+4\left\|\eta\sum_{i=1}^{N}p_{i}\tau_{i}\sum_{\lambda=1}^{\tau_{i}}\frac{1}{\tau_{i}}\Big(\nabla F_{i}\big(\boldsymbol{w}_{(t,i)}^{\lambda};x_{\sigma_{i}}\big)-\nabla F_{i}\big(\boldsymbol{w}_{t};x_{\sigma_{i}}\big)\Big)\right\|^{2}
    \end{aligned}$$

    There are four different paradigm terms on the right hand side. For the first term, we apply the Assumption \ref{bound-error} and the Jensen inequality	

    $$\begin{aligned}
        &\left\|\eta\sum_{i=1}^Np_i\tau_i\sum_{\lambda=1}^{\tau_i}\frac1{\tau_i}\left(\nabla F_i(\boldsymbol{w}_t;x_{\sigma_i})-\nabla F_i(\boldsymbol{w}_t)\right)\right\|^2 \\
        &\leq\eta^2\sum_{i=1}^Np_i\left\|\tau_i\sum_{\lambda=1}^{\tau_i}\frac1{\tau_i}\big(\nabla F_i\big(\boldsymbol{w}_t;x_{\sigma_i}\big)-\nabla F_i(\boldsymbol{w}_t)\big)\right\|^2 \\
        &\leq\eta^2\sum_{i=1}^Np_i\tau_i^2\sum_{\lambda=1}^{\tau_i}\frac1{\tau_i}\left\|\left(\nabla F_i(\boldsymbol{w}_t;x_{\sigma_i})-\nabla F_i(\boldsymbol{w}_t)\right)\right\|^2 \\
        &=\eta^2\sum_{i=1}^Np_i\tau_i^2\frac1{\tau_i}\sum_{\lambda=1}^{\tau_i}\left\|\left(\nabla F_i(\boldsymbol{w}_t;x_{\sigma_i})-\nabla F_i(\boldsymbol{w}_t)\right)\right\|^2 \\
        &\leq\eta^2\tau_{max}^2\sum_{i=1}^Np_i\frac1{\tau_i}\sum_{\lambda=1}^{\tau_i}\left\|\left(\nabla F_i(\boldsymbol{w}_t;x_{\sigma_i})-\nabla F_i(\boldsymbol{w}_t)\right)\right\|^2 \\
        &\leq\eta^2\tau_{max}^2\sum_{i=1}^Np_i\varsigma^2 \\
        &=\eta^2\tau_{max}^2\varsigma^2
    \end{aligned}$$

    For the second term, we apply the Assumption \ref{L-smooth}, \ref{model diff}, Proposition \ref{proposition} and the Jensen inequality
	
    $$\begin{aligned}
        &\left\|\eta\sum_{i=1}^Np_i\tau_i(\nabla F_i(\boldsymbol{w}_t)-\nabla F(\boldsymbol{w}_{t-1}))\right\|^2 \\
        &\leq\eta^2\sum_{i=1}^Np_i\tau_i^2\|\nabla F_i(\boldsymbol{w}_t)-\nabla F(\boldsymbol{w}_{t-1})\|^2 \\
        &\leq\eta^{2}\sum_{i=1}^{N}p_{i}\tau_{i}^{2}L^{2}\|\boldsymbol{w}_{t-1}-\boldsymbol{w}_{t}\|^{2} \\
        &=\eta^2\sum_{i=1}^Np_i\tau_i^2L^2\left\|\boldsymbol{w}_{t-1}-\sum_{i=1}^Np_i\boldsymbol{w}_{(t-1,i)}^{\tau_i+1}\right\|^2 \\
        &=\eta^{2}L^{2}\sum_{i=1}^{N}p_{i}\tau_{i}^{2}\left\|\sum_{i=1}^{N}p_{i}(\boldsymbol{w}_{t-1}-\boldsymbol{w}_{(t-1,i)}^{\tau_{i}+1})\right\|^{2} \\
        &\leq\eta^2L^2\tau_{max}^2\sum_{i=1}^Np_i\Delta_{t-1}^2 \\
        &=\eta^{2}L^{2}\tau_{max}^{2}\Delta_{t-1}^{2}
    \end{aligned}$$

    For the third term, we apply the Jensen inequality

    $$\begin{aligned}
        &\left.\left\|\eta\sum_{i=1}^Np_i\right.\tau_i\nabla F(\boldsymbol{w}_{t-1})\right\|^2 \\
        &\leq\eta^2\sum_{i=1}^Np_i\tau_i^2\|\nabla F(\boldsymbol{w}_{t-1})\|^2 \\
        &\leq\eta^{2}\tau_{max}^{2}\sum_{i=1}^{N}p_{i}\|\nabla F(\boldsymbol{w}_{t-1})\|^{2} \\
        &=\eta^{2}\tau_{max}^{2}\|\nabla F(\boldsymbol{w}_{t-1})\|^{2}
    \end{aligned}$$

        For the last term, we apply the Assumption \ref{L-smooth}, \ref{model diff} and the Jensen inequality
	
    $$\begin{aligned}
        &\left\|\eta\sum_{i=1}^Np_i\tau_i\sum_{\lambda=1}^{\tau_i}\frac1{\tau_i}\Big(\nabla F_i(\boldsymbol{w}_{(t,i)}^\lambda;\boldsymbol{x}_{\sigma_i})-\nabla F_i(\boldsymbol{w}_t;\boldsymbol{x}_{\sigma_i})\Big)\right\|^2 \\
        &\leq\eta^2\sum_{i=1}^Np_i\tau_i^2\sum_{\lambda=1}^{\tau_i}\frac1{\tau_i}\|\nabla F_i(\boldsymbol{w}_{(t,i)}^\lambda;x_{\sigma_i})-\nabla F_i(\boldsymbol{w}_t;x_{\sigma_i})\|^2 \\
        &\leq\eta^{2}\sum_{i=1}^{N}p_{i}\tau_{i}^{2}\sum_{\lambda=1}^{\tau_{i}}\frac{1}{\tau_{i}}L^{2}\|\boldsymbol{w}_{(t,i)}^{\lambda}-\boldsymbol{w}_{t}\|^{2} \\
        &=\eta^{2}L^{2}\sum_{i=1}^{N}p_{i}{\tau_{i}}^{2}\Delta_{t}^{2} \\
        &\leq\eta^{2}L^{2}\tau_{max}^{2}\Delta_{t}^{2}
    \end{aligned}$$

        Aggregate these terms and apply Assumption \ref{model diff}

    $$\begin{aligned}
        \beta^2\Delta_t^2 &\leq\|\boldsymbol{w}_{t+1}-\boldsymbol{w}_{t}\|^{2} \\
        &\leq4\eta^{2}\tau_{max}^{2}\varsigma^{2}+4\eta^{2}L^{2}\tau_{max}^{2}\Delta_{t-1}^{2}+4\eta^{2}L^{2}\tau_{max}^{2}\Delta_{t}^{2}+4\eta^{2}\tau_{max}^{2}\|\nabla F(\boldsymbol{w}_{t-1})\|^{2}
    \end{aligned}$$

    When $t=t+1$, $t\in[1,T-1]$
    $$(\beta^{2}-4\eta^{2}L^{2}\tau_{max}^{2})\Delta_{t+1}^{2}\leq4\eta^{2}\tau_{max}^{2}\varsigma^{2}+4\eta^{2}L^{2}\tau_{max}^{2}\Delta_{t}^{2}+4\eta^{2}\tau_{max}^{2}\|\nabla F(\boldsymbol{w}_{t})\|^{2}$$
		
    If $\eta\leq\frac{\sqrt{3}\beta}{6 L\tau_{max}}$, so we have $\eta^{2}L^{2}\tau_{max}^{2}\leq\frac{\beta^{2}}{12}$ and $\beta^{2}-4\eta^{2}L^{2}\tau_{max}^{2}\geq\frac{2\beta^{2}}{3}$, also this condition is consistent with $\alpha\eta L \tau_{max}\leq1$ in Lemma \ref{converge-bound}, then
	
	$$\begin{aligned}
        \frac{2\beta^{2}}{3}\Delta_{t+1}^{2}& \leq\frac{\beta^{2}}{3}\Delta_{t}^{2}+4\eta^{2}\tau_{max}^{2}\varsigma^{2}+4\eta^{2}\tau_{max}^{2}\|\nabla F(\boldsymbol{w}_{t+1})\|^{2}  \\
        \Delta_{t+1}^{2}& \leq\frac{1}{2}\Delta_{t}^{2}+\frac{3}{2\beta^{2}}\left(4\eta^{2}\tau_{max}^{2}\varsigma^{2}+4\eta^{2}\tau_{max}^{2}\|\nabla F(\boldsymbol{w}_{t+1})\|^{2}\right)  \\
        &=\left(\frac{1}{2}\right)^{t}\Delta_{1}^{2}+\sum_{j=1}^{t}\left.\left(\frac{1}{2}\right)^{j-1}\left(\frac{6}{\beta^{2}}\eta^{2}\tau_{max}^{2}\varsigma^{2}+\frac{6}{2\beta^{2}}\eta^{2}\tau_{max}^{2}\|\nabla F(\boldsymbol{w}_{t+1})\|^{2}\right)\right. 
	\end{aligned}$$
	
	The right-hand side of the inequality increases with $t$. When $t\to\infty $, then
	
	$$\Delta_{t+1}^{2}\leq\frac{12}{\beta^{2}}\eta^{2}\tau_{max}^{2}\varsigma^{2}+\frac{12}{\beta^{2}}\eta^{2}\tau_{max}^{2}\|\nabla F(\boldsymbol{w}_{t+1})\|^{2} $$
	
	When $t=0$, we can apply the similar trick and get
	
	$$\begin{aligned}
		\Delta_{1}^{2}&\leq\frac{1}{2}\Delta_{0}^{2}+\frac{3}{2\beta^{2}}\left(4\eta^{2}\tau_{max}^{2}\varsigma^{2}+4\eta^{2}\tau_{max}^{2}\|\nabla F(\boldsymbol{w}_{0})\|^{2}\right) \\
        &=\frac{6}{\beta^{2}}\eta^{2}\tau_{max}^{2}\varsigma^{2} 
	\end{aligned}$$
	
	Summing from $t = 1$ to $T$, we will get
	
	$$\begin{aligned}
		\sum_{t=1}^{T}\Delta_{t}^{2}& =\left(\sum_{t=1}^{T-1}\Delta_{t+1}^{2}\right)+\Delta_{1}^{2}  \\
        &\leq\sum_{t=1}^{T-1}\left.(\frac{12}{\beta^{2}}\eta^{2}\tau_{max}^{2}\varsigma^{2}+\frac{12}{\beta^{2}}\eta^{2}\tau_{max}^{2}(T-1)\|\nabla F(\boldsymbol{w}_{t+1})\|^{2}\right)+\frac{6}{\beta^{2}}\eta^{2}\tau_{max}^{2}\varsigma^{2} \\
        &=\frac{12}{\beta^{2}}(T-1)\eta^{2}\tau_{max}^{2}\left(\varsigma^{2}+\sum_{t=1}^{T-1}\parallel\nabla F(\boldsymbol{w}_{t})\parallel^{2}\right)+\frac{6}{\beta^{2}}\eta^{2}\tau_{max}^{2}\varsigma^{2}
	\end{aligned}$$
	
	That completes the proof.
	
\end{proof}

\begin{theorem}
	In the BHerd strategy, under Assumptions \ref{L-smooth}, \ref{model diff} and \ref{bound-error}, if the learning rate $\eta$ fulfills $\eta\leq\frac{\sqrt{3}\beta}{6 L\tau_{max}}$ and the total global round $T$ fulfills $ T\leq\frac{\beta^{2}}{24\eta^{2}L^{2}\tau_{max}^{2}} $, then it converges at the rate
	
	$$\frac{1}{T}\sum_{t=1}^{T}\parallel\nabla F(\boldsymbol{w}_{t})\parallel^{2}\leq\frac{4(F(\boldsymbol{w}_{1})-F(\boldsymbol{w}^{*}))}{\alpha\eta\tau_{max}T}+\frac{\varsigma^{2}}{T}+\frac{\varsigma^{2}}{2T^{2}}$$
	
\end{theorem}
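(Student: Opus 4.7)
The plan is to obtain Theorem \ref{theo-bound} by simply chaining Lemma \ref{converge-bound} and Lemma \ref{delta-bound} and then absorbing a gradient-norm tail-sum using the hypothesized bound on $T$. Lemma \ref{converge-bound} already gives a bound on $\frac{1}{T}\sum_t \|\nabla F(\boldsymbol{w}_t)\|^2$ consisting of a fixed optimality-gap term plus $\frac{L^2}{T}\sum_t \Delta_t^2$. The first step is therefore to substitute the explicit bound from Lemma \ref{delta-bound} into that second term; this substitution introduces both a $\varsigma^2$ noise contribution and a problematic tail-sum $\sum_{t=1}^{T-1}\|\nabla F(\boldsymbol{w}_t)\|^2$, which must be pushed back to the left-hand side.

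To do the absorption cleanly, I would introduce the shorthand $A := \tfrac{12 L^{2}(T-1)\eta^{2}\tau_{max}^{2}}{\beta^{2}}$ coming from Lemma \ref{delta-bound}. Under the stepsize hypothesis $\eta \leq \tfrac{\sqrt{3}\beta}{6L\tau_{max}}$ and the round-count hypothesis $T \leq \tfrac{\beta^{2}}{24\eta^{2}L^{2}\tau_{max}^{2}}$, a short computation gives $A \leq \tfrac{T-1}{2T} < \tfrac{1}{2}$. Bounding $\sum_{t=1}^{T-1}\|\nabla F(\boldsymbol{w}_t)\|^2 \leq \sum_{t=1}^{T}\|\nabla F(\boldsymbol{w}_t)\|^2$, I move the $A$-term across the inequality to obtain $(1-A)\sum_{t=1}^{T}\|\nabla F(\boldsymbol{w}_t)\|^2$ on the left. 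Since $1-A \geq \tfrac{1}{2}$, dividing through scales every constant on the right by at most a factor of $2$, which is exactly the origin of the numerator $4(F(\boldsymbol{w}_1)-F(\boldsymbol{w}^*))$ in the final bound.

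It remains to simplify the two leftover $\varsigma^2$ contributions and divide by $T$. Using the same key inequality $\tfrac{24 L^{2}\eta^{2}\tau_{max}^{2}}{\beta^{2}} \leq \tfrac{1}{T}$, the residual $\tfrac{2A\varsigma^{2}}{T}$ collapses to at most $\tfrac{\varsigma^{2}}{T}$, while the additive term $\tfrac{12 L^{2}\eta^{2}\tau_{max}^{2}\varsigma^{2}}{\beta^{2} T}$ collapses to at most $\tfrac{\varsigma^{2}}{2T^{2}}$, matching the last two summands of Equation \eqref{eq-bound} exactly. There is nothing conceptually deep here beyond Lemmas \ref{converge-bound} and \ref{delta-bound}; the main obstacle is purely bookkeeping, namely tracking constants so that the absorption factor $1-A$ is kept safely above $1/2$ and so that the two leftover $\varsigma^{2}$ coefficients land precisely on $1/T$ and $1/(2T^{2})$ rather than on some larger multiple. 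Once this bookkeeping is verified, the theorem follows immediately.
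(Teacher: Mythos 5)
Your proposal is correct and follows essentially the same route as the paper's proof: substitute Lemma \ref{delta-bound} into Lemma \ref{converge-bound}, absorb the gradient tail-sum using the hypothesis on $T$ so the absorption factor stays at least $1/2$ (which produces the factor $4$ in the numerator), and then use the same inequality $\tfrac{24\eta^{2}L^{2}\tau_{max}^{2}}{\beta^{2}}\leq\tfrac{1}{T}$ to reduce the two residual $\varsigma^{2}$ terms to $\tfrac{\varsigma^{2}}{T}$ and $\tfrac{\varsigma^{2}}{2T^{2}}$. The only cosmetic difference is that the paper carries out the bookkeeping in terms of $T-1$ and relabels at the end, whereas you work with $T$ directly; your constant-tracking is consistent with the stated bound.
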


\begin{proof}
	Substituting the conclusion of Lemma \ref{delta-bound} into Lemma \ref{converge-bound}, we get
	
	$$
	\begin{aligned}
		&\frac{1}{T-1}\sum_{t=1}^{T-1}\|\nabla F(\boldsymbol{w}_t)\|^2 \\
		&\leq\frac{1}{T-1}\sum_{t=1}^{T}\|\nabla F(\boldsymbol{w}_t)\|^2 \\
		&\leq\frac{2(F(\boldsymbol{w}_1)-F(\boldsymbol{w}^*))}{a\eta\boldsymbol{v}_{max}(T-1)}+\frac{L^2}{(T-1)}\sum_{t=1}^T\Delta_t^2\\
        &\leq\frac{2(F(\boldsymbol{w}_1)-F(\boldsymbol{w}^*))}{a\eta\boldsymbol{\tau}_{max}(T-1)}+\frac{L^2}{(T-1)}\bigg(\frac{12}{\beta^2}(T-1)\eta^2\tau_{max}^2\bigg(\varsigma^2+\sum_{t=1}^{T-1}\parallel\nabla F(\boldsymbol{w}_t)\parallel^2\bigg)+\frac{6}{\beta^2}\eta^2\tau_{max}^2\varsigma^2\bigg)
	\end{aligned} 
	$$
	
	Re-arrange, we get
	
	$$\frac{1-\frac{12}{\beta^2}(T-1)\eta^2L^2\tau_{max}^2}{T-1}\sum_{t=1}^{T-1}\parallel\nabla F(\boldsymbol{w}_t)\parallel^2\leq\frac{2\left(F(\boldsymbol{w}_1)-F(\boldsymbol{w}^*)\right)}{\alpha\eta\tau_{max}(T-1)}+\frac{12}{\beta^2}\eta^2L^2\tau_{max}^2\varsigma^2+\frac{6\eta^2L^2\tau_{max}^2\varsigma^2}{\beta^2(T-1)}$$
	
	If $(T-1)\leq\frac{\beta^{2}}{24\eta^{2}L^{2}\tau_{max}^{2}}$, we have $\frac{12}{\beta^2}(T-1)\eta^2L^2\tau_{max}^2\leq\frac12$, thus
	
	$$
	\begin{aligned}
		\frac{1}{T-1}\sum_{t=1}^{T-1}\parallel\nabla F(\boldsymbol{w}_{t})\parallel^{2}& \leq\frac{4(F(\boldsymbol{w}_{1})-F(\boldsymbol{w}^{*}))}{\alpha\eta\tau_{max}(T-1)}+\frac{24}{\beta^{2}}\eta^{2}L^{2}\tau_{max}^{2}\varsigma^{2}+\frac{12\eta^{2}L^{2}\tau_{max}^{2}\varsigma^{2}}{\beta^{2}(T-1)}  \\
        &\leq\frac{4(F(\boldsymbol{w}_{1})-F(\boldsymbol{w}^{*}))}{\alpha\eta\tau_{max}(T-1)}+\frac{\varsigma^{2}}{(T-1)}+\frac{\varsigma^{2}}{2(T-1)^{2}}
	\end{aligned}
	$$
	
	Replace $T-1$ with $T$, we get $t\in[2,T]$, $ T\leq\frac{\beta^{2}}{24\eta^{2}L^{2}\tau_{max}^{2}} $ and
	
	$$\frac{1}{T}\sum_{t=1}^{T}\parallel\nabla F(\boldsymbol{w}_t)\parallel^2 \leq\frac{4\big(F(\boldsymbol{w}_1)-F(\boldsymbol{w}^*)\big)}{\alpha\eta\tau_{max} T}+\frac{\varsigma^2}{T}+\frac{\varsigma^2}{8 T^{2}} $$
	
	That completes the proof.
	
\end{proof}

\begin{theorem}
	Under Assumption \ref{L-smooth}, \ref{model diff}, \ref{bound-error} and \ref{PL} (PL condition), if $\eta\leq\frac{\sqrt{3}\beta}{6 L\tau_{max}}$ and $ T\leq\frac{\beta^{2}}{24\alpha\eta^{3}L^{3}\tau_{max}^{3}} $ hold, then the BHerd strategy converges at the upper bound
		$$F(\boldsymbol{w}_T)-F(\boldsymbol{w}^*)=\mathcal{O}(\frac{\varsigma^{2}}{T\eta L\tau_{max}\gamma})+\mathcal{O}(\frac{F}{T\eta L\tau_{max}})+\mathcal{O}(\frac{\alpha\varsigma^{2}}{TL}), $$
		where $\mathcal{O}$ swallows all other constants and $F:=F(\boldsymbol{w}_1)-F(\boldsymbol{w}^*)$.
\end{theorem}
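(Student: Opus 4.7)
The plan is to combine the one-step descent inequality already derived during the proof of Lemma \ref{converge-bound} with the PL condition to obtain a scalar recurrence on the suboptimality gap $\delta_t := F(\boldsymbol{w}_t)-F(\boldsymbol{w}^*)$, close that recurrence against Lemma \ref{delta-bound} using $L$-smoothness, and then unroll it under the hypotheses on $\eta$ and $T$.

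First, I would revisit inequality \eqref{inter-formula}, namely $F(\boldsymbol{w}_{t+1})\leq F(\boldsymbol{w}_t)-\tfrac{\alpha\eta\tau_{max}}{2}\|\nabla F(\boldsymbol{w}_t)\|^2+\tfrac{\alpha\eta L^2\tau_{max}}{2}\Delta_t^2$, which was established en route to Lemma \ref{converge-bound} without any use of PL. Subtracting $F(\boldsymbol{w}^*)$ from both sides and invoking Assumption \ref{PL} in the form $\|\nabla F(\boldsymbol{w}_t)\|^2\geq 2\gamma\delta_t$ yields the contractive recurrence $\delta_{t+1}\leq(1-\alpha\eta\tau_{max}\gamma)\delta_t+\tfrac{\alpha\eta L^2\tau_{max}}{2}\Delta_t^2$. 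To close this in a single scalar variable, I would insert Lemma \ref{delta-bound} in the form $\Delta_t^2\leq \tfrac{12\eta^2\tau_{max}^2}{\beta^2}(\varsigma^2+\|\nabla F(\boldsymbol{w}_t)\|^2)$ and absorb the gradient norm with the standard $L$-smoothness bound $\|\nabla F(\boldsymbol{w}_t)\|^2\leq 2L\delta_t$ (valid because $\boldsymbol{w}^*$ minimizes the $L$-smooth $F$). The result is a pure scalar recurrence $\delta_{t+1}\leq A\delta_t+R\varsigma^2$ with $A:=1-\alpha\eta\tau_{max}\gamma+\tfrac{12\alpha\eta^3 L^3\tau_{max}^3}{\beta^2}$ and $R:=\tfrac{6\alpha\eta^3 L^2\tau_{max}^3}{\beta^2}$.

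Unrolling gives $\delta_T\leq A^{T-1}\delta_1+R\varsigma^2\sum_{s=0}^{T-2}A^s$. The hypothesis $\eta\leq\sqrt{3}\beta/(6L\tau_{max})$ forces $12\eta^2L^2\tau_{max}^2/\beta^2\leq 1$, so the perturbation inside $A$ is at most $\alpha\eta L\tau_{max}\leq 1$; the sharper hypothesis $T\leq\beta^2/(24\alpha\eta^3L^3\tau_{max}^3)$ then makes the total amplification $T\cdot 12\alpha\eta^3L^3\tau_{max}^3/\beta^2$ at most $1/2$, so that $A^{T-1}$ and $\sum_s A^s$ can be replaced, up to constants, by their counterparts with contraction factor $1-\alpha\eta\tau_{max}\gamma$. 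Applying the elementary estimates $(1-x)^{T-1}\leq 1/((T-1)x)$ and $\sum_s(1-x)^s\leq 1/x$, and using the standing constraint $\alpha\eta L\tau_{max}\leq 1$ from Lemma \ref{converge-bound} to trade $1/(\alpha\eta\tau_{max})$ for $L$ where convenient, then produces the three advertised terms: $\mathcal{O}(F/(T\eta L\tau_{max}))$ from $A^{T-1}\delta_1$, $\mathcal{O}(\varsigma^2/(T\eta L\tau_{max}\gamma))$ from the variance geometric sum, and $\mathcal{O}(\alpha\varsigma^2/(TL))$ from the residual tail.

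The main obstacle will be the unrolling step. Because in typical settings $\gamma\leq L$, the perturbation $12\alpha\eta^3L^3\tau_{max}^3/\beta^2$ may dominate the contractive term $\alpha\eta\tau_{max}\gamma$, so $A$ can exceed one and the clean ``geometric decay plus noise'' template does not apply out of the box. The reinforced cap $T\leq\beta^2/(24\alpha\eta^3L^3\tau_{max}^3)$, which is strictly stronger than the cap in Theorem \ref{theo-bound} by a factor of $\alpha\eta L\tau_{max}$, is exactly what is required to keep the cumulative amplification bounded by a constant. The delicate bookkeeping lies in trading the resulting exponential factor for an $\mathcal{O}(1/T)$ factor and in reconciling the three different denominator patterns $\alpha\eta\tau_{max}\gamma$, $\eta L\tau_{max}\gamma$, and $L$ so that the final bound matches the stated big-$\mathcal{O}$ decomposition.
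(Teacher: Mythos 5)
Your proposal starts the same way the paper does---inequality \eqref{inter-formula} plus the PL condition gives $\delta_{t+1}\leq\rho\,\delta_t+\tfrac{\alpha\eta L^2\tau_{max}}{2}\Delta_t^2$ with $\rho=1-\alpha\eta\tau_{max}\gamma$---but from there you take a genuinely different route. The paper keeps $\Delta_t^2$ inside the unrolled weighted sum $\sum_t\rho^{T-1-t}\Delta_t^2$, substitutes the bound $\Delta_{t+1}^2\lesssim\tfrac{12}{\beta^2}\eta^2\tau_{max}^2(\varsigma^2+\|\nabla F(\boldsymbol{w}_{t+1})\|^2)$ from Lemma \ref{delta-bound}, and then controls $\sum_t\|\nabla F(\boldsymbol{w}_t)\|^2$ by importing the conclusion of Theorem \ref{theo-bound}; the initial-condition term $\rho^T F$ is simply discarded by writing ``$T\to\infty$'', which is dubious given that the theorem simultaneously imposes an upper cap on $T$. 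You instead close the recursion into a single scalar variable via the smoothness inequality $\|\nabla F(\boldsymbol{w}_t)\|^2\leq 2L\delta_t$, obtaining $\delta_{t+1}\leq A\delta_t+R\varsigma^2$, and control the amplification $T\cdot 12\alpha\eta^3L^3\tau_{max}^3/\beta^2\leq 1/2$ with the stated cap on $T$. This is self-contained (no appeal to Theorem \ref{theo-bound}), and it treats the initial condition honestly rather than waving it away; your variance term $R\varsigma^2/(\alpha\eta\tau_{max}\gamma)$ reduces, after applying the $T$-cap, to exactly the paper's $\tfrac{\varsigma^2}{4T\alpha\eta L\tau_{max}\gamma}$.

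The one place your route does not land on the stated bound is the initial-condition term. From $A^{T-1}\delta_1$ and $(1-x)^{T-1}\leq 1/((T-1)x)$ you get $\mathcal{O}\bigl(F/(T\alpha\eta\tau_{max}\gamma)\bigr)$, whereas the theorem claims $\mathcal{O}\bigl(F/(T\eta L\tau_{max})\bigr)$; since $\gamma\leq L$ and $\alpha\leq 1$, your term is larger by the factor $L/(\alpha\gamma)\geq 1$, and the trade you invoke ($\alpha\eta L\tau_{max}\leq 1$, i.e.\ $1/(\alpha\eta\tau_{max})\geq L$) goes in the wrong direction for an upper bound. So as written you prove a bound of the correct $1/T$ order but with a worse dependence on $\gamma$ in the $F$-term. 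In the paper that term instead arises from the $\tfrac{4F}{\alpha\eta\tau_{max}}$ piece of Theorem \ref{theo-bound}'s gradient-sum bound, multiplied through by $\tfrac{\alpha\eta L^2\tau_{max}}{2}\cdot\tfrac{12\eta^2\tau_{max}^2}{\beta^2\gamma}$ and the $T$-cap; if you want to match the stated decomposition exactly you would need to graft that step onto your argument (or accept the weaker $\gamma$-dependence, which is arguably the more honest outcome given the paper's own non-rigorous handling of $\rho^TF$).
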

	
\begin{proof}
		Since the this theorem is a special case of Theorem \ref{theo-bound}, we can just borrow the Equation \eqref{inter-formula} there and get for all the $t=1,...,T-1$
		
		$$\begin{aligned}
		    F(\boldsymbol{w}_{t+1})& \leq F(\boldsymbol{w}_{t})+\frac{\alpha\eta L^{2}\tau_{max}\Delta_{t}^{2}}{2}-\frac{\alpha\eta\tau_{max}}{2}\parallel\nabla F(\boldsymbol{w}_{t})\parallel^{2}  \\
            &\leq F(\boldsymbol{w}_{t})+\frac{\eta L^{2}\tau_{max}\Delta_{t}^{2}}{2}-\alpha\eta\tau_{max}\gamma(F(\boldsymbol{w}_{t})-F(\boldsymbol{w}^{*})) 
		\end{aligned}$$
		Defined $\rho=1-\alpha\eta\tau_{max}\gamma$ and adding $F(\boldsymbol{w}_t)-F(\boldsymbol{w}^*)$ to both sides gives
		$$\begin{aligned}
			F(\boldsymbol{w}_t)-F(\boldsymbol{w}^*)+F(\boldsymbol{w}_{t+1})& \leq F(\boldsymbol{w}_t)+\frac{\alpha\eta L^{2} \tau_{max}\Delta_t^2}{2}+(1-\alpha\eta\tau_{max}\gamma)\big(F(\boldsymbol{w}_t)-F(\boldsymbol{w}^*)\big)  \\
			&=F(\boldsymbol{w}_t)+\frac{\alpha\eta L^{2} \tau_{max}\Delta_t^2}2+\rho\big(F(\boldsymbol{w}_t)-F(\boldsymbol{w}^*)\big)\\
			F(\boldsymbol{w}_{t+1})-F(\boldsymbol{w}^*)&\overset{\text{Rearrange }}{\leq}\rho\big(F(\boldsymbol{w}_t)-F(\boldsymbol{w}^*)\big)+\frac{\alpha\eta L^{2} \tau_{max}\Delta_t^2}2
		\end{aligned}$$
		
		Recursively apply it from $t = 1$ to $T-1$, we obtain
		$$\begin{aligned}
			F(\boldsymbol{w}_T)-F(\boldsymbol{w}^*) &\leq\rho^T\big(F(\boldsymbol{w}_1)-F(\boldsymbol{w}^*)\big)+\frac{\alpha\eta L^{2} \tau_{max}}2\sum_{t=1}^{T-1}\rho^{T-1-t}\Delta_t^2 \\
			&=\rho^TF+\frac{\alpha\eta L^{2} \tau_{max}}2\sum_{t=1}^{T-1}\rho^{T-1-t}\Delta_t^2 \\
			&\overset{T\to\infty}{=}\frac{\alpha\eta L^{2} \tau_{max}}2\sum_{t=1}^{T-1}\rho^{T-1-t}\Delta_t^2 
		\end{aligned}$$
		
		In Lemma \ref{delta-bound}, we have $\Delta_{t+1}^2\leq\frac{12}{\beta^{2}}\eta^{2}\tau_{max}^{2}\varsigma^{2}+\frac{12}{\beta^{2}}\eta^{2}\tau_{max}^{2}\|\nabla F(\boldsymbol{w}_{t+1})\|^{2}$ and $\Delta_1^2\leq\frac{6}{\beta^{2}}\eta^{2}\tau_{max}^{2}\varsigma^{2} $, thus
		
		$$\begin{aligned}
            \sum_{t=1}^{T-1}\rho^{T-1-t}\Delta_{t}^{2}& \leq\rho^{T-2}\Delta_{1}^{2}+\sum_{t=1}^{T-2}\rho^{T-2-t}\Delta_{t+1}^{2}  \\
            &\leq\frac{6}{\beta^{2}}\rho^{T-2}\eta^{2}\tau_{max}^{2}\varsigma^{2}+\sum_{t=1}^{T-2}\left.\rho^{T-2-t}\left(\frac{12}{\beta^{2}}\eta^{2}\tau_{max}^{2}\varsigma^{2}+\frac{12}{\beta^{2}}\eta^{2}\tau_{max}^{2}\|\nabla F(\boldsymbol{w}_{t+1})\|^{2}\right)\right.  \\
            &=\frac{6}{\beta^{2}}\rho^{T-2}\eta^{2}\tau_{max}^{2}\varsigma^{2}+\frac{12\eta^{2}\tau_{max}^{2}\varsigma^{2}(1-\rho^{T-2})}{\beta^{2}(1-\rho)}+\frac{12}{\beta^{2}}\eta^{2}\tau_{max}^{2}\sum_{t=1}^{T-2}\rho^{T-2-t}\parallel\nabla F(\boldsymbol{w}_{t})\parallel^{2} \\
            &\leq\frac{6}{\beta^{2}}\rho^{T-2}\eta^{2}\tau_{max}^{2}\varsigma^{2}+\frac{12\eta^{2}\tau_{max}^{2}\zeta^{2}(1-\rho^{T-2})}{\beta^{2}(1-\rho)}+\frac{12}{\beta^{2}}\eta^{2}\tau_{max}^{2}\sum_{t=1}^{T-1}\parallel\nabla F(\boldsymbol{w}_{t})\parallel^{2} \\
            &\leq\frac{6}{\beta^{2}}\rho^{T-2}\eta^{2}\tau_{max}^{2}\varsigma^{2}+\frac{12\eta^{2}\tau_{max}^{2}\zeta^{2}(1-\rho^{T-2})}{\beta^{2}(1-\rho)}+\frac{12}{\beta^{2}}\eta^{2}\tau_{max}^{2}\left(\frac{4F}{\alpha\eta\tau_{max}}+\varsigma^{2}+\frac{\varsigma^{2}}{2(T-1)}\right) \\
            &\overset{T\to\infty}{=}\frac{12\eta^{2}\tau_{max}^{2}\zeta^{2}}{\beta^{2}\alpha\eta\tau_{max}\gamma}+\frac{12}{\beta^{2}}\eta^{2}\tau_{max}^{2}\left(\frac{4F}{\alpha\eta\tau_{max}}+\varsigma^{2}\right) \\
            &=\frac{12\eta\tau_{max}\varsigma^{2}}{\alpha\beta^{2}\gamma}+\frac{48\eta\tau_{max}F}{\alpha\beta^{2}}+\frac{12\eta^{2}\tau_{max}^{2}\varsigma^{2}}{\beta^{2}}
        \end{aligned}$$
		
		Put it back to the previous inequality yields
		
		$$\begin{aligned}
            &F(\boldsymbol{w}_{T})-F(\boldsymbol{w}^{*})\leq\frac{\alpha\eta L^{2}\tau_{max}}{2}\sum_{t=1}^{T-1}\rho^{T-1-t}\Delta_{t}^{2} \\
            &\leq\frac{\alpha\eta L^{2}\tau_{max}}{2}\left(\frac{12\eta\tau_{max}\varsigma^{2}}{\alpha\beta^{2}\gamma}+\frac{48\eta\tau_{max}F}{\alpha\beta^{2}}+\frac{12\eta^{2}\tau_{max}^{2}\varsigma^{2}}{\beta^{2}}\right) \\
            &=\frac{\alpha\eta^{2}L^{2}\tau_{max}^{2}}{2}\left(\frac{12\varsigma^{2}}{\alpha\beta^{2}\gamma}+\frac{48F}{\alpha\beta^{2}}+\frac{12\eta\tau_{max}\varsigma^{2}}{\beta^{2}}\right) \\
            &\leq\frac{\beta^{2}}{48T\eta L\tau_{max}}\left(\frac{12\varsigma^{2}}{\alpha\beta^{2}\gamma}+\frac{48F}{\alpha\beta^{2}}+\frac{12\eta\tau_{max}\varsigma^{2}}{\beta^{2}}\right) \\
            &=\frac{\varsigma^{2}}{4T\alpha\eta L\tau_{max}\gamma}+\frac{F}{T\alpha \eta L\tau_{max}}+\frac{\varsigma^{2}}{4TL}\\
            &=\mathcal{O}(\frac{\varsigma^{2}}{T\eta L\tau_{max}\gamma})+\mathcal{O}(\frac{F}{T\eta L\tau_{max}})+\mathcal{O}(\frac{\alpha\varsigma^{2}}{TL})
        \end{aligned}$$
		
		The third term is due to the fact that $ T\leq\frac{\beta^{2}}{24\alpha\eta^{3}L^{3}\tau_{max}^{3}}$ yields $\frac{\alpha\eta^{2}L^{2}\tau_{max}^{2}}{2}\leq\frac{\beta^{2}}{48T\eta L\tau_{max}}$. It's worth noting that condition $\alpha\eta L \tau_{max}\leq1$ follows from condition $\eta\leq\frac{\sqrt{3}\beta}{6 L\tau_{max}}$ in Lemma \ref{delta-bound}, so that when $ T\leq\frac1{24\alpha\eta^3 L^3\tau_{max}^3}$, it also satisfies $ T\leq\frac{\beta^{2}}{24\eta^{2}L^{2}\tau_{max}^{2}} $ in Theorem \ref{theo-bound}.
		
		That completes the proof.
\end{proof}

\section{Pseudo Code of BHerd-FedAvg and Baseline Algorithms}
\label{alg-result}
Following the same parameters $T, B, E, N$ and $\boldsymbol{w}_{1}$, we give  the pseudo-codes for the FedAvg (Algorithm \ref{fedavg-alg}), BHerd-FedAvg (Algorithm \ref{fed-herding}) and GraB-FedAvg (Algorithm \ref{fedgrab-alg}) algorithms here.
\begin{figure}[!t]
	\begin{algorithm}[H]
		\caption{General Framework of FedAvg}
		\label{fedavg-alg}
		\textbf{Input}: Total global round $T$, local iterations $\tau_i$, initialized weight $\boldsymbol{w}_{1}$, learning rate $\eta$.
		
		\begin{algorithmic}[1] %[1] enables line numbers
			\For{$t=1,...,T$}
			\For{$i=1,...,N$}
			\For{$\lambda = 1,...,\tau_i$}
			\State Compute the local gradient: $ \nabla F_i(\boldsymbol{w}_{(t,i)}^\lambda;\boldsymbol{x}_{\sigma_i}) $ using Equation \eqref{local-update}.
			\EndFor
			\State Sum all local gradients: $\sum_{\lambda=1}^{\tau_i}\nabla F_i(\boldsymbol{w}_{(t,i)}^\lambda;\boldsymbol{x}_{\sigma_i})$.
			\EndFor
			\State Update the global model $\boldsymbol{w}_{t+1}$ using Equation \eqref{fedavg}.
			\EndFor
			\State \textbf{return} $\boldsymbol{w}_{T+1}$
		\end{algorithmic}
	\end{algorithm}
\end{figure}

\begin{figure}[!t]
	\begin{algorithm}[H]
		\caption{Herding with Greedy Ordering}
		\label{herding-order}
		\textbf{Input}: A group of gradients $\{\boldsymbol{z}_{\lambda}\}_{\lambda=1}^{\tau}$, the proportion $\alpha$.
		
		\begin{algorithmic}[1] %[1] enables line numbers
			\State Center all the vectors: $\boldsymbol{z}_{\lambda}\leftarrow \boldsymbol{z}_{\lambda}-\frac{1}{\tau}\sum_{\mu=1}^{\tau}\boldsymbol{z}_{\mu}, \forall \lambda\in[\tau]$.
			
			\State Initialize an arbitrary $ \sigma_i^{\pi} $, running partial sum: $\boldsymbol{s}\leftarrow\boldsymbol{0}$, candidate set $\sigma_i \leftarrow \left\lbrace 1,...,\tau\right\rbrace $.
			\For{$\lambda=1,...,\alpha\tau$}
			\State Iterate through $\sigma_i$ and select $\boldsymbol{z}_{\mu}$ from $\sigma_i$ that minimizes $\|\boldsymbol{s}+\boldsymbol{z}_{\mu}\|$.
			\State Remove $\mu$ from $\sigma_i$, update partial sum and order: $\boldsymbol{s}\leftarrow\boldsymbol{s}+\boldsymbol{z}_{\mu}$;  $ \sigma_i^{\pi} \left( \lambda \right) \leftarrow \mu$.
			\EndFor 
			\State \textbf{return} $ \sigma_i^{\pi} $
		\end{algorithmic}
	\end{algorithm}
\end{figure}

\begin{figure}[!t]
	\begin{algorithm}[H]
		\caption{General Framework of BHerd-FedAvg}
		\label{fed-herding}
		\textbf{Input}: Total global round $T$, local iterations $\tau_i$, initialized order $ \sigma_i^{\pi} $ at client $i$, initialized weight $\boldsymbol{w}_{1}$, learning rate $\eta$.
		
		\begin{algorithmic}[1] %[1] enables line numbers
			\For{$t=1,...,T$}
			\For{$i=1,...,N$}
			\For{$\lambda = 1,...,\tau_i$}
			\State Compute the local gradient: $ \nabla F_i(\boldsymbol{w}_{(t,i)}^\lambda;\boldsymbol{x}_{\sigma_i}) $ using Equation \eqref{local-update}.
			\State Store the gradient: $\boldsymbol{z}_{\lambda}\leftarrow \nabla F_i(\boldsymbol{w}_{(t,i)}^\lambda;\boldsymbol{x}_{\sigma_i}) $.
			\EndFor
			\State Generate new order: $ \sigma_i^{\pi} \leftarrow Herding\left( \{\boldsymbol{z}_{\lambda}\}_{\lambda=1}^{\tau}\right) $.
			\State Sum gradients: $\boldsymbol{g}_{(t,\sigma_i^\pi)}$ using Equation \eqref{sum-herd}.
			\EndFor
			\State Update the global model $\boldsymbol{w}_{t+1} $ using Equation \eqref{global-aggregation}.
			\EndFor
			\State \textbf{return} $\boldsymbol{w}_{T+1}$
		\end{algorithmic}
	\end{algorithm}
\end{figure}

\begin{figure}[!ht]
	\begin{algorithm}[H]
		\caption{General Framework of Grab-FedAvg}
		\label{fedgrab-alg}
		\textbf{Input}: Total global round $T$, local iterations $\tau_i$, initialized weight $\boldsymbol{w}_{1}$, learning rate $\eta$.
		
		\begin{algorithmic}[1] %[1] enables line numbers
			\For{$t=1,...,T$}
			\For{$i=1,...,N$}
			\State Initialized the average of local gradient $\boldsymbol{\mu_i} \gets \boldsymbol{0}$, the summation of centered gradient $\boldsymbol{s_i}\gets \boldsymbol{0}$ and the summation of selected gradient $\boldsymbol{g_i}\gets \boldsymbol{0}$.
			\For{$\lambda = 1,...,\tau_i$}
			\State Compute gradient: $ \nabla F_i(\boldsymbol{w}_{(t,i)}^\lambda;\boldsymbol{x}_{\sigma_i}) $ using Equation \eqref{local-update}.
			\State Update: $\boldsymbol{\mu_i} \gets \boldsymbol{\mu_i} + \nabla F_i(\boldsymbol{w}_{(t,i)}^\lambda;\boldsymbol{x}_{\sigma_i})/\tau_i$.
			\State Centralize gradient:  $\boldsymbol{z}_{\lambda}\leftarrow \nabla F_i(\boldsymbol{w}_{(t,i)}^\lambda;\boldsymbol{x}_{\sigma_i}) - \boldsymbol{\mu_i}$
			\If{$\|\boldsymbol{s_i}+\boldsymbol{z}_{\lambda}\|<\|\boldsymbol{s_i}-\boldsymbol{z}_{\lambda}\| $}
			\State $\boldsymbol{s_i}\gets\boldsymbol{s_i}+\boldsymbol{z}_{\lambda}$
			\State $\boldsymbol{g_i}\gets\boldsymbol{g_i}+\nabla F_i(\boldsymbol{w}_{(t,i)}^\lambda;\boldsymbol{x}_{\sigma_i})$
			\Else{}
			\State $\boldsymbol{s_i}\gets\boldsymbol{s_i}-\boldsymbol{z}_{\lambda}$
			\EndIf
			\EndFor
			\State Compute the percentage of gradients selected: $\alpha_i$
			\EndFor
			\State Get the summation of selected gradient $\boldsymbol{g_i}$ at all client.
			
			\State Weighted average: $\alpha_t \gets \sum_{i=1}^Np_i\alpha_i$
			\State $\boldsymbol{w}_{t+1} \leftarrow \boldsymbol{w}_{t}- \frac{\eta}{\alpha_t}\sum_{i=1}^Np_i\boldsymbol{g_i}$ .
			\EndFor
			\State \textbf{return} $\boldsymbol{w}_{T+1}$
		\end{algorithmic}
	\end{algorithm}
\end{figure}

\end{appendices}

\end{document}